\documentclass{article} % For LaTeX2e
\usepackage{iclr2023_conference,times}

% Optional math commands from https://github.com/goodfeli/dlbook_notation.
%%%%% NEW MATH DEFINITIONS %%%%%

\usepackage{amsmath,amsfonts,bm}

% Mark sections of captions for referring to divisions of figures

% Highlight a newly defined term

% Figure reference, lower-case.

% Figure reference, capital. For start of sentence

% Section reference, lower-case.

% Section reference, capital.

% Reference to two sections.

% Reference to three sections.

% Reference to an equation, lower-case.
\def\eqref#1{equation~\ref{#1}}
% Reference to an equation, upper case

% A raw reference to an equation---avoid using if possible

% Reference to a chapter, lower-case.

% Reference to an equation, upper case.

% Reference to a range of chapters

% Reference to an algorithm, lower-case.

% Reference to an algorithm, upper case.

% Reference to a part, lower case

% Reference to a part, upper case

\def\1{\bm{1}}

% Random variables

% rm is already a command, just don't name any random variables m

% Random vectors

% Elements of random vectors

% Random matrices

% Elements of random matrices

% Vectors

% Elements of vectors

% Matrix

% Tensor
\DeclareMathAlphabet{\mathsfit}{\encodingdefault}{\sfdefault}{m}{sl}
\SetMathAlphabet{\mathsfit}{bold}{\encodingdefault}{\sfdefault}{bx}{n}

% Graph

% Sets

% Don't use a set called E, because this would be the same as our symbol
% for expectation.

% Entries of a matrix

% entries of a tensor
% Same font as tensor, without \bm wrapper

% The true underlying data generating distribution

% The empirical distribution defined by the training set

% The model distribution

% Stochastic autoencoder distributions

 % Laplace distribution

\newcommand{\E}{\mathbb{E}}

% Wolfram Mathworld says $L^2$ is for function spaces and $\ell^2$ is for vectors
% But then they seem to use $L^2$ for vectors throughout the site, and so does
% wikipedia.

 % See usage in notation.tex. Chosen to match Daphne's book.

\usepackage{hyperref}
\usepackage{url}
\usepackage{amsfonts}       % blackboard math symbols
\usepackage{nicefrac}       % compact symbols for 1/2, etc.
\usepackage{microtype}      % microtypography
\usepackage{todonotes}
\usepackage{tabularx}
\usepackage{multirow}
\usepackage{array}
\usepackage{comment}
\usepackage{booktabs}
\usepackage{wrapfig}
\usepackage{amsmath, calc}
\usepackage{amssymb}
\usepackage{amsmath}
\usepackage{mathtools}
\usepackage{amsfonts}
\usepackage{nicefrac}       % compact symbols for 1/2, etc.
\usepackage{microtype}      % microtypography
\usepackage{color}
\usepackage{subfigure}
\usepackage{graphicx}
\usepackage{xcolor}
\usepackage{enumitem}
\usepackage{bbm}
\usepackage{caption}
\usepackage{dsfont}
\usepackage{minitoc}
\usepackage{xcolor,colortbl}

\usepackage{adjustbox}
\usepackage{microtype} % NOTE fixed the inline eq out of margin bug
\usepackage[bottom]{footmisc}
\usepackage{caption}
\usepackage{helvet}  %Required
\usepackage{courier}  %Required
\usepackage{url}  %Required
\usepackage{multirow}
\usepackage{amsthm}
\usepackage{color}
\usepackage{MnSymbol}
\usepackage{makecell}
\usepackage{arydshln}
\usepackage{amsmath}
\usepackage{xcolor}
\usepackage{natbib}
\usepackage{textcomp}
\usepackage{wrapfig}
\usepackage{algorithm}
\usepackage{algorithmic}

\usepackage{csquotes}
\usepackage{xspace}
\newtheorem{theo}{Theorem}
\newtheorem{prop}{Proposition}

\newtheorem{definition}{Definition}
\newcommand{\abbr}[0]{DANN+ELS\xspace}
\newcommand{\ls}[0]{ELS\xspace}
\usepackage{url}

\setlength{\fboxsep}{0pt}
\setlength{\fboxrule}{.1pt}
\newcommand{\Gray}[0]{\rowcolor{gray!20}}
\newcommand{\myref}[1]{Equ. (\ref{#1})}

\newcommand{\ie}[0]{\textit{i.e., }}
\newcommand{\eg}[0]{\textit{e.g., }}

 %\xspace
\newcommand{\x}{\mathbf{x}}
\newcommand{\z}{\mathbf{z}}
\newcommand{\D}{\mathcal{D}}
\newcommand*{\circled}[1]{\lower.7ex\hbox{\tikz\draw (0pt, 0pt)%
    circle (.5em) node {\makebox[1em][c]{\small #1}};}}
\title{Free Lunch for Domain Adversarial Training: Environment Label Smoothing}

% Authors must not appear in the submitted version. They should be hidden
% as long as the \iclrfinalcopy macro remains commented out below.
% Non-anonymous submissions will be rejected without review.

% \author{Yi-Fan Zhang$^1$, Xue Wang$^2$, Jian Liang$^1$,
% \\ \textbf{Zhang Zhang$^1$, Liang Wang$^1$, Rong Jin$^3$, Tieniu Tan$^{1,4}$} \\
% \And
% Ji Q. Ren \& Yevgeny LeNet \\
% Department of Computational Neuroscience \\
% University of the Witwatersrand \\
% Joburg, South Africa \\
% \texttt{\{robot,net\}@wits.ac.za} \\
% \AND
% Coauthor \\
% Affiliation \\
% Address \\
% \texttt{email}
% }

\author{Yi-Fan Zhang$^{1,2}$\thanks{Work done during an internship at Alibaba Group.}, Xue Wang$^3$, Jian Liang$^{1,2}$,
 \\ \textbf{Zhang Zhang$^{1,2}$, Liang Wang$^{1,2}$, Rong Jin$^{3}$\thanks{Work done at Alibaba Group, and now affiliated with Twitter.}, Tieniu Tan$^{1,2}$} \\
$^{1}$National Laboratory of Pattern Recognition (NLPR), Institute of Automation\\
$^{2}$School of Artificial Intelligence, University of Chinese Academy of Sciences (UCAS) \\
$^{3}$ Machine Intelligence Technology, Alibaba Group. 
} %

% The \author macro works with any number of authors. There are two commands
% used to separate the names and addresses of multiple authors: \And and \AND.
%
% Using \And between authors leaves it to \LaTeX{} to determine where to break
% the lines. Using \AND forces a linebreak at that point. So, if \LaTeX{}
% puts 3 of 4 authors names on the first line, and the last on the second
% line, try using \AND instead of \And before the third author name.

\iclrfinalcopy % Uncomment for camera-ready version, but NOT for submission.
\begin{document}

\maketitle

\begin{abstract}
A fundamental challenge for machine learning models is how to generalize learned models for out-of-distribution (OOD) data. Among various approaches, exploiting invariant features by Domain Adversarial Training (DAT) received widespread attention. Despite its success, we observe training instability from DAT, mostly due to over-confident domain discriminator and environment label noise. To address this issue, we proposed \textbf{E}nvironment \textbf{L}abel \textbf{S}moothing (\ls), which  
encourages the discriminator to output soft probability, which thus reduces the confidence of the discriminator and alleviates the impact of noisy environment labels. We demonstrate, both experimentally and theoretically, that \ls can improve training stability, local convergence, and robustness to noisy environment labels. By incorporating \ls with DAT methods, we are able to yield the state-of-art results on a wide range of domain generalization/adaptation tasks, particularly when the environment labels are highly noisy. The code is avaliable at https://github.com/yfzhang114/Environment-Label-Smoothing.

%As a side note, our analysis also provide justification for several commonly used AT tricks, \eg one-sided label smoothing for GANs, and alternating gradient descent in DAT. 

%Besides, 
%the resulting \textbf{D}omain \textbf{A}dversarial \textbf{N}etwork with environment label \textbf{S}moothing (\abbr)
%or only partial samples' environment labels are known.

\end{abstract}

\vspace{-0.1cm}
\section{Introduction} 
\vspace{-0.1cm}
% Learning representations that can reflect intrinsic class semantics and also render invariance to cross-domain variation is of great significance to robustness and generalization in deep learning. 
Despite being empirically effective on visual recognition benchmarks~\citep{russakovsky2015imagenet}, modern neural networks are prone to learning shortcuts that stem from spurious correlations~\citep{Geirhos_2020}, resulting in poor generalization for out-of-distribution (OOD) data. A popular thread of methods, minimizing domain divergence by Domain Adversarial Training (DAT)~\citep{ganin2016domain}, has shown better domain transfer performance, suggesting that it is potential to be an effective candidate to extract domain-invariant features. Despite its power for domain adaptation and domain generalization, DAT is known to be difficult to train and converge \citep{roth2017stabilizing,jenni2019stabilizing,arjovsky2017towards,sonderby2016amortised}. 

\setlength{\columnsep}{8pt}
\begin{wrapfigure}{r}{0.3\textwidth}
  \begin{center}
  \advance\leftskip+1mm
  \renewcommand{\captionlabelfont}{\footnotesize}
    \vspace{-0.39in}  
    \includegraphics[width=0.25\textwidth]{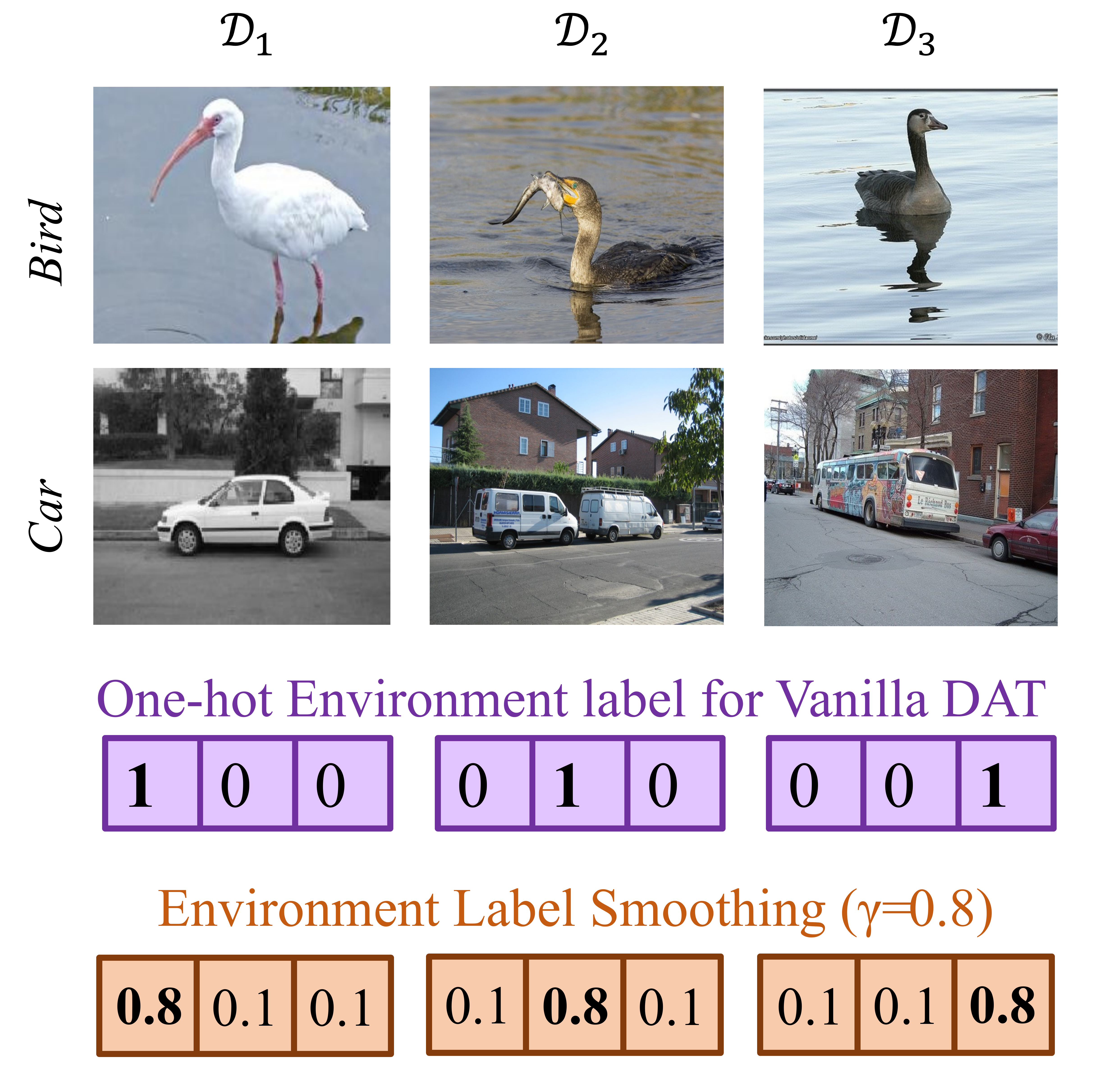}
    \vspace{-0.06in} 
    \caption{\textbf{A motivating example} of \ls with 3 domains on the VLCS dataset.}
        \label{fig:teaser}
  \end{center}
  \vspace{-0.16in} 
\end{wrapfigure}

The main difficulty for stable training is 
to maintain healthy competition between the encoder and the domain discriminator. Recent work seeks to attain this goal by designing novel optimization methods~\citep{acuna2022domain,rangwani2022closer}, however, most of them require additional optimization steps and slow the convergence. In this work, we aim to tackle the challenge from a totally different aspect from previous works, \ie the environment label design.

% The motivation of our method is from two aspects:

Two important observations that lead to the training instability of DAT motivate this work: 
(i) The environment label noise from environment partition~\citep{creager2021environment} and training~\citep{thanh2019improving}. As shown in~\figurename~\ref{fig:teaser}, different domains of the VLCS benchmark have no significant difference in image style and some images are indistinguishable for which domain they belong. Besides, when the encoder gets better, the generated features from different domains are more similar. However, regardless of their quality, features are still labeled differently. As shown in~\citep{thanh2019improving,brock2018large}, \textit{discriminators will overfit these mislabelled examples and then has poor generalization capability}. (ii) To our best knowledge, DAT methods all assign one-hot environment labels to each data sample for domain discrimination, where the output probabilities will be highly confident. For DAT, \textit{a very confident domain discriminator leads to highly oscillatory gradients}~\citep{arjovsky2017towards,mescheder2018training}, which is harmful to training stability. The first observation inspires us to force the training process to be robust with regard to environment-label noise, and the second observation encourages the discriminator to estimate soft probabilities rather than confident classification. To this end, we propose \textbf{E}nvironment \textbf{L}abel \textbf{S}moothing (\ls), which is a simple method to tackle the mentioned obstacles for DAT. Next, we summarize the main methodological, theoretical, and experimental contributions.

% \textcolor{brown}{\textbf{Intuition:}} 1. label noise during domain partitioning. Besides, during training, when the encoder gets better, the generated features from different domains are more similar. However, regardless of their quality, generated features are still labeled differently. As shown in~\citep{thanh2019improving}, discriminators will overfit to these mislabelled examples and then has poor generalization capability. 2. Over-confidence of domain discriminator (a fig). The confidence scores of deep neural networks are known to be poorly calibrated~\citep{guo2017calibration}. For adversarial training, a very confident domain discriminator leads to gradients vanishing~\citep{arjovsky2017towards}. 3. Some data samples from different domains is indistinguishable\footnote{https://zhuanlan.zhihu.com/p/73285958}.

% Label smoothing is shown beneficial to label noising~\citep{lukasik2020does,wei2021understanding}, 

% To encourage the discriminator to estimate soft probabilities rather than to extrapolate to extremely confident classification, 

% \textcolor{brown}{\textbf{Our Contributions.}} As our main contribution, we develop a simple method to tackle them mentioned obstacles for domain adversarial training. Next, we summarize the main methodological, theoretical, and experimental contributions in our proposed framework

\textbf{Methodology}: To our best knowledge, this is the first work to smooth environment labels for DAT. The proposed \ls yields three main advantages: (i) it does not require any extra parameters and optimization steps and yields faster convergence speed, better training stability, and more robustness to label noise theoretically and empirically; (ii) despite its efficiency, ELS is also easily to implement. People can easily incorporate ELS with any DAT methods in very few lines of code; (iii) ELS equipped DAT methods attain superior generalization performance compared to their native counterparts;
%\yf{the idea of \ls can also be incorporated to other application areas such as environment label inference, and semi-supervised domain generalization.}

\textbf{Theories}: The benefit of \ls is theoretically verified in the following aspects. (i) \textit{Training stability}. We first connect DAT to Jensen–Shannon/Kullback–Leibler divergence minimization, where \ls is shown able to extend the support of training distributions and relieve both the oscillatory gradients and gradient vanishing phenomenons, which results in stable and well-behaved training. 
(ii) \textit{Robustness to noisy labels}. We theoretically verify that the negative effect caused by noisy labels can be reduced or even eliminated by \ls with a proper smooth parameter.
(iii) \textit{Faster non-asymptotic convergence speed}. We analyze the non-asymptotic convergence properties of DANN. The results indicate that incorporating with \ls can further speed up the convergence process. 
 In addition, we also provide the empirical gap and analyze some commonly used DAT tricks.

\textbf{Experiments:} (i) Experiments are carried out on various benchmarks with different backbones, including \textit{image classification, image retrieval, neural language processing, genomics data, graph, and sequential data}. \ls brings consistent improvement when incorporated with different DAT methods and achieves competitive or SOTA performance on various benchmarks, \eg average accuracy on Rotating MNIST ($52.1\%\rightarrow 62.1\%$), worst group accuracy on CivilComments ($61.7\%\rightarrow 65.9\%$), test ID accuracy on RxRx1 ($22.9\%\rightarrow 26.7\%$), average accuracy on Spurious-Fourier dataset ($11.1\%\rightarrow 15.6\%$). (ii) Even if the environment labels are random or partially known, the performance of \ls+ DANN  will not degrade much and is superior to native DANN. (iii) Abundant analyzes on training dynamics are conducted to verify the benefit of \ls empirically. (iv) We conduct thorough ablations on hyper-parameter for \ls and some useful suggestions about choosing the best smooth parameter considering the dataset information are given.

% \begin{itemize}
%     \item We connect domain adversarial training to JS/KL divergence minimization on two domain and multi-domain settings respectively. Our results can not only support the benefit brought by \ls but also explain why GANs prefer one-sided label smoothing and always fails with native label smoothing. 
%     \item Environment label smoothing is theoretically show able to relieves the gradients vanishing phenomenon and extend the support of training distributions, which stabilize the training.
%     \item Above theoretical results build on infinite data and infinite-dimensional parameterized space. To tackle such problems, we further relax these assumptions and analyze the empirical gap in \textit{Vapnik-Chervonenkis}~\citep{vapnik1999overview} and \textit{Neural Net Distance}~\citep{arora2017generalization} frameworks respectively.
%     \item We compare \abbr with state-of-the-art DG/DA methods on various benchmarks, including image classification, image retrieval, neural language processing, genomics data, graph, and sequential data. \abbr shows consistent improvement compared to vallina DANN and achieves competitive or SOTA performance on various benchmarks, \eg average accuracy on Rotating MNIST ($52.1\%\rightarrow 57.4\%$), worst group accuracy on CivilComments ($61.73\%\rightarrow 65.9\%$), test ID accuracy on RxRx1 ($22.92\%\rightarrow 26.71\%$), average accuracy on Spurious-Fourier dataset ($11.09\%\rightarrow 15.64\%$).
% \end{itemize}

\vspace{-0.1cm}
\section{Methodology}
\vspace{-0.1cm}
% \subsection{Problem Setting}
% For domain adaptation tasks, given two domain distributions $\mathcal{D}_S,\mathcal{D}_T$ over $X$.  The cost used for the discriminator is:
% \begin{equation}
%     \max_{\hat{h}\in\hat{\mathcal{H}}} d_{s,t}(\hat{h},g)=\max_{\hat{h}\in\hat{\mathcal{H}}} \E_{\x_s\in \D_S}\log \hat{h}\circ g(\x_s)+\E_{\x_t\sim\D_t} \log\left(1-\hat{h}\circ g(\x_t)\right),
%     \label{equ:dann_ori_main}
% \end{equation}
% where the softmax function is ignored for simplicity, $\hat{h}\circ g(\x)$ is the prediction probability that $\x$ is belonged to $\D_S$ and $1-\hat{h}\circ g(\x)$ is the prediction probability that $\x$ is belonged to $\D_T$. 

For domain generalization tasks, there are $M$ source domains $\{\D_i\}_{i=1}^M$. Let the hypothesis $h$ be the composition of $h=\hat{h}\circ g$, where $g\in\mathcal{G}$ pushes forward the data samples to a representation space $\mathcal{Z}$ and $\hat{h}=(\hat{h}_1(\cdot),\dots,\hat{h}_M(\cdot))\in\hat{\mathcal{H}}:\mathcal{Z}\rightarrow[0,1]^M;\sum_{i=1}^M\hat{h}_i(\cdot)=1$ is the domain discriminator with softmax activation function. The classifier is defined as $\hat{h}'\in\hat{\mathcal{H}'}:\mathcal{Z}\rightarrow[0,1]^C;\sum_{i=1}^C\hat{h}'_i(\cdot)=1$, where $C$ is the number of classes. The cost used for the discriminator can be defined as:
\begin{equation}
    \max_{\hat{h}\in\hat{\mathcal{H}}} d_{\hat{h}, g}(\D_1,\dots,\D_M)=\max_{\hat{h}\in{\mathcal{H}}} \E_{\x\in \D_1}\log \hat{h}_1\circ g(\x)+\dots+\E_{\x\in\D_M} \log\hat{h}_M\circ g(\x),
    \label{equ:dann_multiclass_main}
\end{equation}
where $\hat{h}_i\circ g(\x)$ is the prediction probability that $\x$ is belonged to $\D_i$. Denote $y$ the class label, then the overall objective of DAT is
\begin{equation}
\min_{\hat{h}',g}\max_{\hat{h}}\frac{1}{M}\sum_{i=1}^M\mathbb{E}_{\x\in\D_i}[\ell(\hat{h}'\circ g(\x), y)]+\lambda d_{\hat{h},g}(\D_1,\dots,\D_M),
\end{equation}
where $\ell$ is the cross-entropy loss for classification tasks and MSE for regression tasks, and $\lambda$ is the tradeoff weight. We call the first term empirical risk minimization (ERM) part and the second term adversarial training (AT) part.
% Applying \ls, the target in \myref{equ:dann_ori_main} can be reformulated to 
% \begin{equation}
% \begin{aligned}
% \max_{\hat{h}\in\hat{\mathcal{H}}} d_{s,t}(\hat{h},g,\gamma)=\max_{\hat{h}\in\hat{\mathcal{H}}} \E_{\x_s\in \D_S}&\left[\gamma\log \hat{h}\circ g(\x_s)+(1-\gamma)\log\left(1-\hat{h}\circ g(\x_s)\right) \right] + \\
% &\E_{\x_t\in \D_T}\left[(1-\gamma)\log \hat{h}\circ g(\x_t)+\gamma\log\left(1-\hat{h}\circ g(\x_t)\right) \right]
% \end{aligned}\label{equ:dann_ls}
% \end{equation}
Applying \ls, the target in \myref{equ:dann_multiclass_main} can be reformulated as
\begin{equation}
\begin{aligned}
\max_{\hat{h}\in\hat{\mathcal{H}}} d_{\hat{h},g,\gamma}(\D_1,\dots,\D_M)=\max_{\hat{h}\in\hat{\mathcal{H}}} \E_{\x\in \D_1}&\left[\gamma\log \hat{h}_1\circ g(\x)+\frac{(1-\gamma)}{M-1}\sum_{j=1;j\neq1}^M\log\left(\hat{h}_j\circ g(\x)\right) \right] + \dots +\\
&\E_{\x\in \D_M}\left[\gamma\log \hat{h}_M\circ g(\x)+\frac{(1-\gamma)}{M-1}\sum_{j=1;j\neq M}^M\log\left(\hat{h}_j\circ g(\x)\right) \right].
\end{aligned}\label{equ:dann_ls_main}
\end{equation}

% In this paper, we use \textbf{D}omain \textbf{A}dversarial \textbf{N}etwork with environment label \textbf{S}moothing (\abbr) to denote DANN+\ls when there is no ambiguity.
\vspace{-0.1cm}
\section{Theoretical validation}
\vspace{-0.1cm}
In this section, we first assume the discriminator is optimized with no constraint, providing a theoretical interpretation of applying \ls. Then how \ls makes the training process more stable is discussed based on the interpretation and some analysis of the gradients. We next theoretically show that with \ls, the effect of label noise can be eliminated. Finally, to mitigate the impact of the \textbf{no constraint} assumption, the empirical gap, parameterization gap, and non-asymptotic convergence property are analyzed respectively. All omitted proofs can be found in the Appendix.
\subsection{Divergence Minimization Interpretation}\label{sec:js}

In this subsection, the connection between \ls/one-sided \ls and divergence minimization is studied. The advantages brought by \ls and why GANs prefer one-sided \ls are theoretically claimed. We begin with the two-domain setting, which is used in domain adaptation and generative adversarial networks. Then the result in the multi-domain setting is further developed.
\begin{prop}
Given two domain distributions $\mathcal{D}_S,\mathcal{D}_T$ over $X$, and a hypothesis class $\mathcal{H}$. We suppose $\hat{h}\in\hat{\mathcal{H}}$ the optimal discriminator with no constraint, denote the mixed distributions with hyper-parameter $\gamma\in[0.5,1]$ as $\left\{         %左括号
  \begin{array}{c} 
    \D_{S'}=\gamma\D_S+(1-\gamma)\D_T\\ 
    \D_{T'}=\gamma\D_T+(1-\gamma)\D_S\\ 
  \end{array}\right.$. Then minimizing domain divergence by adversarial training with \textbf{\ls} is equal to minimizing $2D_{JS}(\D_{S'}||\D_{T'})-2\log2$, where $D_{JS}$ is the Jensen-Shanon (JS) divergence.
  \label{prop1}
\end{prop}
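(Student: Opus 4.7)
The plan is to reduce the two-domain ELS objective to the classical minimax game between two mixed distributions and then invoke the standard Goodfellow-style argument that identifies its optimum with a Jensen--Shannon divergence.

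First, I would specialize \eqref{equ:dann_ls_main} to $M=2$. Writing $D(\mathbf{x}) := \hat{h}_1\circ g(\mathbf{x})$, the softmax constraint gives $\hat{h}_2\circ g(\mathbf{x}) = 1 - D(\mathbf{x})$, so the ELS discriminator objective becomes
\begin{equation*}
\gamma\,\mathbb{E}_{\mathcal{D}_S}[\log D(\mathbf{x})] + (1{-}\gamma)\,\mathbb{E}_{\mathcal{D}_S}[\log(1{-}D(\mathbf{x}))] + \gamma\,\mathbb{E}_{\mathcal{D}_T}[\log(1{-}D(\mathbf{x}))] + (1{-}\gamma)\,\mathbb{E}_{\mathcal{D}_T}[\log D(\mathbf{x})].
\end{equation*}
Grouping the coefficients of $\log D$ and of $\log(1-D)$, and using linearity of expectation, the two mixtures $p_{S'} = \gamma p_S + (1-\gamma) p_T$ and $p_{T'} = (1-\gamma) p_S + \gamma p_T$ appear naturally, yielding
\begin{equation*}
V(D, g) \;=\; \mathbb{E}_{\mathbf{x}\sim \mathcal{D}_{S'}}[\log D(\mathbf{x})] + \mathbb{E}_{\mathbf{x}\sim \mathcal{D}_{T'}}[\log(1-D(\mathbf{x}))].
\end{equation*}
This is exactly the canonical GAN-style objective, but with the original distributions replaced by the mixtures induced by the smoothing parameter $\gamma$.

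Next, under the ``no constraint'' assumption on the discriminator (the family $\hat{\mathcal{H}}$ is rich enough to realize any measurable function into $[0,1]$), I would optimize $V(D,g)$ pointwise in $D(\mathbf{x})$. For every fixed $\mathbf{x}$ the integrand $p_{S'}(\mathbf{x})\log D + p_{T'}(\mathbf{x})\log(1-D)$ is concave in $D$, and its maximum is attained at
\begin{equation*}
D^*(\mathbf{x}) \;=\; \frac{p_{S'}(\mathbf{x})}{p_{S'}(\mathbf{x}) + p_{T'}(\mathbf{x})}.
\end{equation*}

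Substituting $D^*$ back into $V$ and using the standard manipulation
\begin{equation*}
\log \frac{p_{S'}}{p_{S'} + p_{T'}} = \log \frac{p_{S'}}{(p_{S'} + p_{T'})/2} - \log 2,
\end{equation*}
the two expectations assemble into $D_{KL}\!\left(p_{S'}\,\|\,(p_{S'}+p_{T'})/2\right) + D_{KL}\!\left(p_{T'}\,\|\,(p_{S'}+p_{T'})/2\right) - 2\log 2$, which by definition equals $2D_{JS}(p_{S'}\,\|\,p_{T'}) - 2\log 2$. The outer minimization over $g$ therefore minimizes the claimed quantity. I do not foresee a serious obstacle in this route: the only subtle point is keeping the bookkeeping between the four cross-terms clean so that the $\gamma$ and $1{-}\gamma$ weights regroup into the mixtures $p_{S'}$ and $p_{T'}$, and making explicit that the ``no constraint'' hypothesis is precisely what licenses the pointwise optimization that yields $D^*$.
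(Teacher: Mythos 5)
Your proposal is correct, and it reaches the result by a genuinely different (and leaner) route than the paper. You regroup the four $\gamma$-weighted terms \emph{before} optimizing, observing by linearity of expectation that the ELS objective is literally the vanilla two-distribution adversarial objective evaluated on the mixtures $\D_{S'},\D_{T'}$; the claim then follows immediately from the standard Goodfellow-style pointwise optimization, i.e.\ as a corollary of the known fact that unsmoothed adversarial training equals $2D_{JS}(\cdot\|\cdot)-2\log 2$ (Proposition~2 of~\citep{acuna2021f}, applied to $\D_{S'},\D_{T'}$). The paper instead differentiates the smoothed objective directly, obtains the optimal discriminator $\hat{h}^*(\z)=\bigl(p_t^z+\gamma(p_s^z-p_t^z)\bigr)/\bigl(p_s^z+p_t^z\bigr)$ (which coincides with your $D^*$ since $p_{s'}+p_{t'}=p_s+p_t$), plugs it back, and only then re-expresses everything in terms of $p_{s'},p_{t'}$ by inverting the mixture map, $p_s=\frac{\gamma p_{s'}+(\gamma-1)p_{t'}}{2\gamma-1}$; this forces a longer bookkeeping of cancelling cross terms and tacitly divides by $2\gamma-1$, so the computation as written excludes $\gamma=\tfrac12$ (the endpoint the proposition's range includes, recovered only by continuity or by a separate trivial check), whereas your regrouping is valid for all $\gamma$, including $\gamma=\tfrac12$ where $\D_{S'}=\D_{T'}$ and the divergence term degenerates as the paper's discussion claims. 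What the paper's route buys is the explicit closed form of the smoothed optimal discriminator in terms of the original densities $p_s^z,p_t^z$, which is the quantity it reuses in its interpretation and stability discussion; what yours buys is brevity, no restriction on $\gamma$, and the conceptually sharper reading that \ls is exactly vanilla adversarial training on noise-mixed distributions.
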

Compared to Proposition 2 in~\citep{acuna2021f} that adversarial training in DANN is equal to minimize $2D_{JS}(\D_S||\D_T)-2\log2$. The only difference here is the mixed distributions $\D_{S'},\D_{T'}$, which allows more flexible control on divergence minimization. For example, when $\gamma=1$, $\D_{S'}=\D_S,\D_{T'}=\D_T$ which is the same as the original adversarial training; when $\gamma=0.5$, $\D_{S'}=\D_{T'}=0.5(\D_S+\D_T)$ and $D_{JS}(\D_{S'}||\D_{T'})=0$, which means that this term will not supply gradients during training and the training process will convergence like ERM. In other words, $\gamma$ controls the tradeoff between algorithm convergence and adversarial divergence minimization. One main argue that adjusting the tradeoff weight $\lambda$ can also balance AT and ERM, however, $\lambda$ can only adjust the gradient contribution of AT part, \ie $2\lambda \nabla D_{JS}(\D_S,\D_T)$ and cannot affect the training dynamic of $D_{JS}(\D_S,\D_T)$. For example, when $\D_S,\D_T$ have disjoint support, $\nabla D_{JS}(\D_S,\D_T)$ is always zero no matter what $\lambda$ is given. On the contrary, the proposed technique smooths the optimization distribution $\D_S,\D_T$ of AT, making the whole training process more stable, but controlling $\lambda$ cannot do. In the experimental section, we show that in some benchmarks, the model cannot converge even if the tradeoff weight is small enough, however, when \ls is applied, \abbr attains superior results and without the need for small tradeoff weights or small learning rate.

As shown in~\citep{goodfellow2016nips}, GANs always use a technique called \textit{one-sided label smoothing}, which is a simple modification of the label smoothing technique and only replaces the target for real examples with a value slightly less than one, such as 0.9. Here we connect one-sided label smoothing to JS divergence and seek the difference between native and one-sided label smoothing techniques. See Appendix~\ref{sec:theo_gan_js} for proof and analysis. We further extend the above theoretical analysis to multi-domain settings, \eg domain generalization, and multi-source GANs~\citep{trung2019learning} (See Proposition~\ref{prop3} in Appendix~\ref{sec:ms_ls} for detailed proof and analysis.). We find that with \ls, a flexible control on algorithm convergence and divergence minimization tradeoffs can be attained.

\vspace{-0.1cm}
\subsection{Training Stability}\label{sec:stable}
\vspace{-0.1cm}

\textbf{Noise injection for extending distribution supports.} The main source of training instability of GANs is the real and the generated distributions have disjoint supports or lie on low dimensional manifolds~\citep{arjovsky2017towards,roth2017stabilizing}. Adding noise from an arbitrary distribution to the data is shown to be able to extend the support of both distributions~\citep{jenni2019stabilizing,arjovsky2017towards,sonderby2016amortised} and will protect the discriminator against measure 0 adversarial examples~\citep{jenni2019stabilizing}, which result in stable and well-behaved training. Environment label smoothing can be viewed as a kind of noise injection, \eg in Proposition~\ref{prop1}, $\D_{S'}=\D_T+\gamma(\D_S-\D_T)$ where the noise is $\gamma(\D_S-\D_T)$ and the two distributions will be more likely to have joint supports. 

\textbf{\ls relieves the gradient vanishing phenomenon.} As shown in Section~\ref{sec:js}, the adversarial target is approximating KL or JS divergence, and when the discriminator is not optimal, a such approximation is inaccurate. We show that in vanilla DANN, as the discriminator gets better, the gradient passed from discriminator to the encoder vanishes (Proposition~\ref{prop:stable} and Proposition~\ref{prop:stable_md}). Namely, \textit{either the approximation is inaccurate, or the gradient vanishes}, which will make adversarial training extremely hard~\citep{arjovsky2017towards}. Incorporating \ls is shown able to relieve the 
\begin{wrapfigure}{r}{0.3\textwidth}
  \begin{center}
  \advance\leftskip+1mm
  \renewcommand{\captionlabelfont}{\footnotesize}
    \vspace{-0.15in}  
    \includegraphics[width=0.3\textwidth]{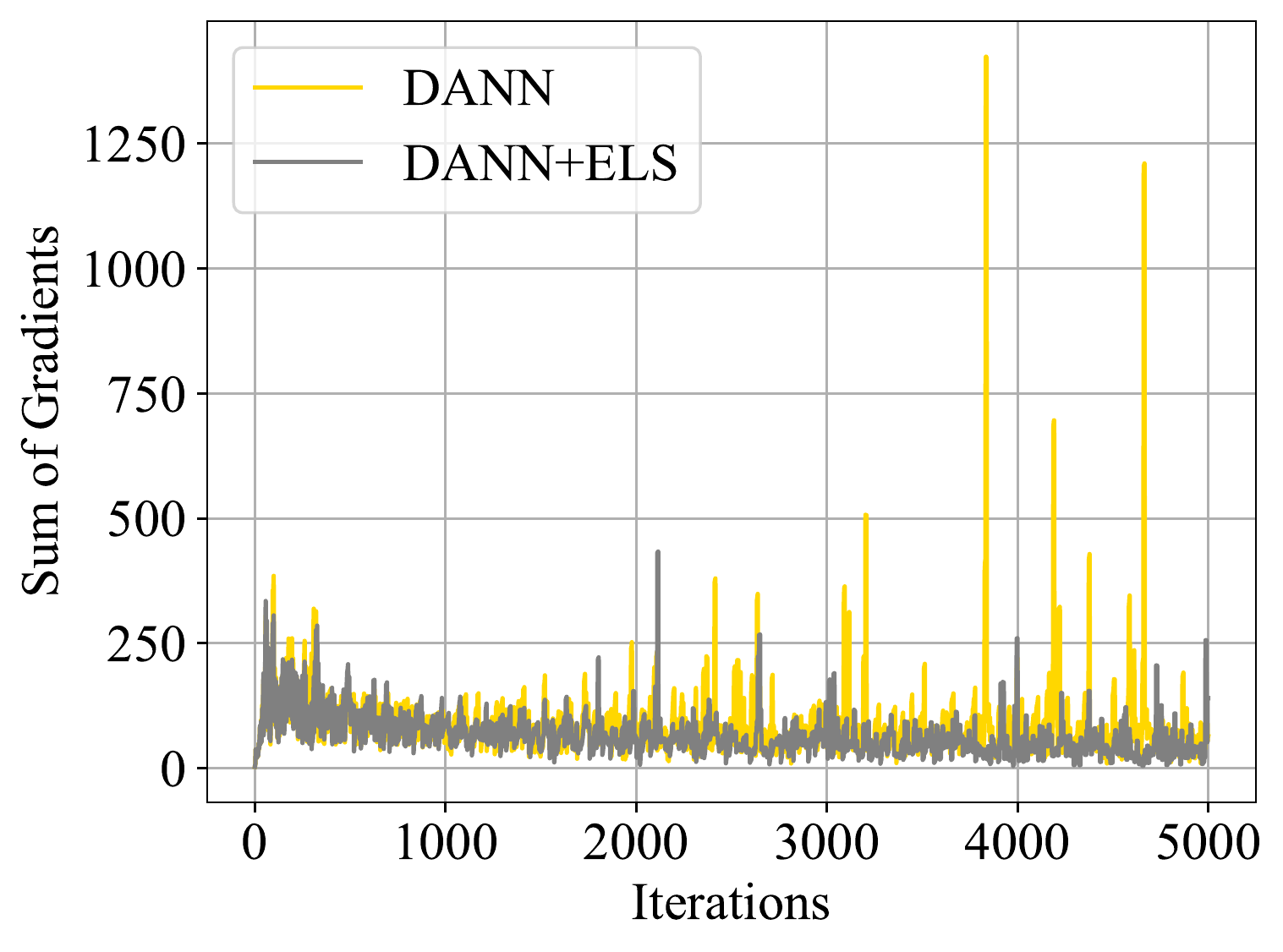}
    \vspace{-0.2in} 
    \caption{The sum of gradients provided to the encoder by the adversarial loss.}
    \label{fig:gradient_norm}
    \vspace{-0.4in} 
  \end{center}
\end{wrapfigure}
gradient vanishing phenomenon when the discriminator is close to the optimal one and stabilizes the training process.

\textbf{\ls serves as a data-driven regularization and stabilizes the  oscillatory gradients.} Gradients of the encoder with respect to adversarial loss remain highly oscillatory in native DANN, which is an important reason for the instability of adversarial training~\citep{mescheder2018training}. \figurename~\ref{fig:gradient_norm} shows the gradient dynamics throughout the training process, where the PACS dataset is used as an example. With \ls, the gradient brought by the adversarial loss is smoother and more stable. The benefit is theoretically supported in Section~\ref{app_sec:stable}, where applying \ls is shown similar to adding a regularization term on discriminator parameters, which stabilizes the supplied gradients compared to the vanilla adversarial loss. 

\vspace{-0.15cm}
\subsection{\ls meets noisy labels}
\vspace{-0.15cm}
To analyze the benefits of \ls when noisy labels exist, we  adopt the symmetric noise model~\citep{kim2019nlnl}. Specifically, given two environments with a high-dimensional feature $x$ and environment label $y\in\{0,1\}$, assume that noisy labels $\Tilde{y}$ are generated by random noise transition with noise rate $e=P(\Tilde{y}=1|y=0)=P(\Tilde{y}=0|y=1)$. Denote $f:=\hat{h}\circ g$, $\ell$ the cross-entropy loss and $\Tilde{y}^\gamma$ the smoothed noisy label, then minimizing the smoothed loss with noisy labels can be converted to
\begin{equation}
\begin{aligned}
&\min_f \mathbb{E}_{(x,\Tilde{y})\sim\Tilde{\mathcal{D}}}[\ell(f(x),\Tilde{y}^\gamma)]=\min_{f}\mathbb{E}_{(x,\Tilde{y})\sim\Tilde{\mathcal{D}}}\left[\gamma\ell(f(x), \Tilde{y})+(1-\gamma)\ell(f(x), 1-\Tilde{y})\right]\\
&=\min_{f}\mathbb{E}_{(x,y)\sim\mathcal{D}}[\ell(f(x),y^{\gamma^*})]+(\gamma^*-\gamma-e+2\gamma e)\mathbb{E}_{(x,y)\sim\mathcal{D}}[\ell(f(x),1-y)-\ell(f(x),y)]
\end{aligned}
\label{equ:main_smooth}
\end{equation}
where $\gamma^*$ is the optimal smooth parameter that makes the classifier return the best performance on unseen clean data~\citep{wei2021smooth}. The first term in \myref{equ:main_smooth} is the risk under the clean label. The influence of both noisy labels and \ls are reflected in the last term of the \myref{equ:main_smooth}. $\mathbb{E}_{(x,y)\sim\mathcal{D}}[\ell(f(x),1-y)-\ell(f(x),y)]$ is the opposite of the optimization process as we expect. Without label smoothing, the weight will be $\gamma^*-1+e$ and a high noisy rate $e$ will let this harmful term contributes more to our optimization. On the contrary, by choosing a smooth parameter $\gamma=\frac{\gamma^*-e}{1-2e}$, the second term will be removed. For example, if $e=0$, the best smooth parameter is just $\gamma^*$.

\vspace{-0.15cm}
\subsection{Empirical Gap and Parameterization Gap}\label{sec:empirical:gap}
\vspace{-0.15cm}
Propositions in Section~\ref{sec:js} and Section~\ref{sec:stable} are based on two  unrealistic assumptions. (i) Infinite data samples, and (ii) the discriminator is optimized without a constraint, namely, the discriminator is optimized over infinite-dimensional space. In practice, only empirical distributions with finite samples are observed and the discriminator is always constrained to smaller classes such as neural networks~\citep{goodfellow2014generative} or reproducing kernel Hilbert spaces (RKHS)~\citep{li2017mmd}. Besides, as shown in~\citep{arora2017generalization,schfer2019implicit}, JS divergence has a large empirical gap, \eg \textit{let $\D_\mu,\D_\nu$ be uniform Gaussian distributions $\mathcal{N}(0,\frac{1}{d}I)$, and $\hat{\D}_\mu,\hat{\D}_\nu$ be empirical versions of $\D_\mu,\D_\nu$ with $n$ examples. Then we have $|d_{JS}(\D_\mu,\D_\nu)-d_{JS}(\hat{\D}_\mu,\hat{\D}_\nu)|=\log2$ with high probability}. Namely, the empirical divergence cannot reflect the true distribution divergence.

A natural question arises: ``Given finite samples to multi-domain AT over finite-dimensional parameterized space, whether the expectation over the empirical distribution converges to the expectation over the true distribution?''. In this subsection, we seek to answer this question by analyzing the \textit{empirical gap and parameterization gap}, which is $|d_{\hat{h}, g}(\D_1,\dots,\D_M)-d_{\hat{h}, g}(\hat{\D}_1,\dots,\hat{\D}_M)|$, where $\hat{D}_i$ is the empirical distribution of $\D_i$ and $\hat{h}$ is constrained. We first show that, let $\mathcal{H}$ be a hypothesis class of VC dimension $d$, then for any $\delta\in(0,1)$, with probability at least $1-\delta$, the gap is less than $ 4\sqrt{({d\log(2n^*)+\log{2}/{\delta}})/{n^*}}$, where $n^*=\min(n_1,\dots,n_M)$ and $n_i$ is the number of samples in $\D_i$ (Appendix~\ref{sec:empirical_vc}). The above analysis is based on $\mathcal{H}$ divergence and the VC dimension; we further analyze the gap when the discriminator is constrained to the Lipschitz continuous and build a connection between the gap and the model parameters. Specifically, suppose that each $\hat{h}_i$ is $L$-Lipschitz with respect to the parameters and use $p$ to denote the number of parameters of $\hat{h}_i$. Then given a universal constant $c$ such that when $n^*\geq {cpM \log(Lp/\epsilon)}/{\epsilon }$, we have with probability at least $1-\exp(-p)$, the gap is less than $\epsilon$ (Appendix~\ref{sec:empirical_nn}). Although the analysis cannot support the benefits of \ls, as far as we know, it is the first attempt to study the empirical and parameterization gap of multi-domain AT.

\vspace{-0.15cm}
\subsection{Non-Asymptotic Convergence}\label{sec:convergence}
\vspace{-0.15cm}
As mentioned in Section~\ref{sec:empirical:gap}, the analysis in Section~\ref{sec:js} and Section~\ref{sec:stable} assumes that the optimal discriminator can be obtained, which implies that both the hypothesis set has infinite modeling capacity and the training process can converge to the optimal result. If the objective of AT is convex-concave, then many works can support the global convergence behaviors~\citep{nowozin2016f,yadav2017stabilizing}. However, the convex-concave assumption is too unrealistic to hold true~\citep{nie2020towards,nagarajan2017gradient}, namely, the updates of DAT are no longer guaranteed to converge. In this section, we focus on the local convergence behaviors of DAT of points near the equilibrium. Specifically, we focus on the non-asymptotic convergence, which is shown able to more precisely reveal the convergence of the dynamic system than the asymptotic analysis~\citep{nie2020towards}.

We build a toy example to help us understand the convergence of DAT. Denote $\eta$ the learning rate, $\gamma$ the parameter for \ls, and $c$ a constant. We conclude our theoretical results here (which are detailed in Appendix~\ref{sec:app_convergence}): (1) Simultaneous Gradient Descent (GD) DANN, which trains the discriminator and encoder simultaneously, has no guarantee of the non-asymptotic convergence. (2) If we train the discriminator $n_d$ times once we train the encoder $n_e$ times, the resulting alternating Gradient Descent (GD) DANN could converge with a sublinear convergence rate only when the $\eta\leq\frac{4}{\sqrt{n_dn_e}c}$. Such results support the importance of alternating GD training, which is commonly used during DANN implementation~\citep{gulrajani2021in}. (3) Incorporate \ls into alternating GD DANN speed up the convergence rate by a factor $\frac{1}{2\gamma-1}$, that is, when $\eta\leq\frac{4}{\sqrt{n_dn_e}c}\frac{1}{2\gamma-1}$, the model could converge.

\textbf{Remark.} In the above analysis, we made some assumptions \eg in Section~\ref{sec:convergence}, we assume the algorithms are initialized in a neighborhood of a unique equilibrium point, and in Section~\ref{sec:empirical:gap} we assume that the NN is L-Lipschitz. These assumptions may not hold in practice, and they are computationally hard to verify. To this end, we empirically support our theoretical results, namely, verifying the benefits to convergence, training stability, and generalization results in the next section.

\vspace{-0.15cm}
\section{Experiments}\label{sec:exps}
\vspace{-0.15cm}

To demonstrate the effectiveness of our \ls, in this section, we select a broad range of tasks (in Table~\ref{tab:benchmarks}), which are \textit{image classification}, \textit{image retrieval}, \textit{neural language processing}, \textit{genomics}, \textit{graph}, and \textit{sequential prediction tasks}. Our target is to include benchmarks with (i) various numbers of domains (from $3$ to $120,084$); (ii) various numbers of classes (from $2$ to $18,530$); (iii) various dataset sizes (from $3,200$ to $448,000$); (iv) various dimensionalities and backbones (Transformer, ResNet, MobileNet, GIN, RNN). See Appendix~\ref{sec:exp_detail} for full details of all experimental settings, including dataset details, hyper-parameters, implementation details, and model structures. We conduct all the experiments on a machine with i7-8700K, 32G RAM, and four GTX2080ti. All experiments are repeated $3$ times with different seeds and the full experimental results can be found in the appendix.

% Please add the following required packages to your document preamble:
% \usepackage{multirow}
% \usepackage[table,xcdraw]{xcolor}
% If you use beamer only pass "xcolor=table" option, i.e. \documentclass[xcolor=table]{beamer}
\begin{table}[]
\caption{\textbf{A summary on evaluation benchmarks.} Wg. acc. denotes worst group accuracy, 10 \%/ acc. denotes 10th percentile accuracy. GIN~\citep{xu2018powerful} denotes Graph Isomorphism Networks, and CRNN~\citep{gagnon2022woods} denotes convolutional recurrent neural networks.}\label{tab:benchmarks}
\adjustbox{max width=\linewidth}{%
\begin{tabular}{@{}ccccccc@{}}
\toprule
\rowcolor[HTML]{D7F2BC} 
{\color[HTML]{333333} \textbf{Task}}                             & {\color[HTML]{333333} \textbf{Dataset}}                                & {\color[HTML]{333333} \textbf{Domains}}                                 & {\color[HTML]{333333} \textbf{Classes}}             & {\color[HTML]{333333} \textbf{Metric}}                             & {\color[HTML]{333333} \textbf{Backbone}}                     & {\color[HTML]{333333} \textbf{\# Data Examples}}       \\ \midrule
{\color[HTML]{333333} }                                             & \cellcolor[HTML]{F3F3F3}{\color[HTML]{333333} Rotated MNIST}    & \cellcolor[HTML]{F3F3F3}{\color[HTML]{333333} 6 rotated angles}         & \cellcolor[HTML]{F3F3F3}{\color[HTML]{333333} 10}   & \cellcolor[HTML]{F3F3F3}{\color[HTML]{333333} Avg. acc.}           & \cellcolor[HTML]{F3F3F3}{\color[HTML]{333333} MNIST ConvNet} & \cellcolor[HTML]{F3F3F3}{\color[HTML]{333333} 70,000}   \\
{\color[HTML]{333333} }                                             & {\color[HTML]{333333} PACS}                                     & {\color[HTML]{333333} 4 image styles}                                   & {\color[HTML]{333333} 7}                            & {\color[HTML]{333333} Avg. acc.}                                   & ResNet50                                                     & 9,991                                                  \\
{\color[HTML]{333333} }                                             & \cellcolor[HTML]{F3F3F3}{\color[HTML]{333333} VLCS}             & \cellcolor[HTML]{F3F3F3}{\color[HTML]{333333} 4 image styles}           & \cellcolor[HTML]{F3F3F3}{\color[HTML]{333333} 5}    & \cellcolor[HTML]{F3F3F3}{\color[HTML]{333333} Avg. acc.}           & \cellcolor[HTML]{F3F3F3}{\color[HTML]{333333} ResNet50}      & \cellcolor[HTML]{F3F3F3}{\color[HTML]{333333} 10,729}   \\
{\color[HTML]{333333} }                                             & {\color[HTML]{333333} Office-31}                                & {\color[HTML]{333333} 3 image styles}                                   & {\color[HTML]{333333} 31}                           & {\color[HTML]{333333} Avg. acc.}                                   & ResNet50/ResNet18                                            & 4,110                                                  \\
{\color[HTML]{333333} }                                             & {\color[HTML]{333333} Office-Home}                                & {\color[HTML]{333333} 4 image styles}                                   & {\color[HTML]{333333} 65}                           & {\color[HTML]{333333} Avg. acc.}                                   & ResNet50/ViT                                            & 15,500                                                  \\
\multirow{-6}{*}{{\color[HTML]{333333} Images Classification}}      & \cellcolor[HTML]{F3F3F3}{\color[HTML]{333333} Rotating MNIST}   & \cellcolor[HTML]{F3F3F3}{\color[HTML]{333333} 8 rotated angles}         & \cellcolor[HTML]{F3F3F3}{\color[HTML]{333333} 10}   & \cellcolor[HTML]{F3F3F3}{\color[HTML]{333333} Avg. acc.}           & \cellcolor[HTML]{F3F3F3}{\color[HTML]{333333} EncoderSTN}    & \cellcolor[HTML]{F3F3F3}{\color[HTML]{333333} 60,000}   \\\hline
{\color[HTML]{333333} Image Retrieval}                              & {\color[HTML]{333333} MS}                                       & {\color[HTML]{333333} 5 locations}                                      & {\color[HTML]{333333} 18,530}                        & {\color[HTML]{333333} mAP, Rank $m$}                               & MobileNet$\times1.4$                                                     & 121,738                                                \\\hline
{\color[HTML]{333333} }                                             & \cellcolor[HTML]{F3F3F3}{\color[HTML]{333333} CivilComments}    & \cellcolor[HTML]{F3F3F3}{\color[HTML]{333333} 8 demographic groups}     & \cellcolor[HTML]{F3F3F3}{\color[HTML]{333333} 2}    & \cellcolor[HTML]{F3F3F3}{\color[HTML]{333333} Avg/Wg acc.}         & \cellcolor[HTML]{F3F3F3}{\color[HTML]{333333} DistillBERT}   & \cellcolor[HTML]{F3F3F3}{\color[HTML]{333333} 448,000} \\
\multirow{-2}{*}{{\color[HTML]{333333} Neural Language Processing}} & Amazon                                                          & 7676 reviewers                                                          & 5                                                   & 10 \%/Avg/Wg acc.                                                  & DistillBERT                                                  & 100,124                                                \\\hline
                                                                    & \cellcolor[HTML]{F3F3F3}{\color[HTML]{333333} RxRx1}            & \cellcolor[HTML]{F3F3F3}{\color[HTML]{333333} 51  experimental batch}   & \cellcolor[HTML]{F3F3F3}{\color[HTML]{333333} 1139} & \cellcolor[HTML]{F3F3F3}{\color[HTML]{333333} Wg/Avg/Test ID acc.} & \cellcolor[HTML]{F3F3F3}{\color[HTML]{333333} ResNet-50}     & \cellcolor[HTML]{F3F3F3}{\color[HTML]{333333} 125,510} \\
\multirow{-2}{*}{Genomics and Graph}                                & OGB-MolPCBA                                                     & 120,084 molecular scaffold                                              & 128                                                 & Avg. acc.                                                          & GIN                                                          & 437,929                                                \\\hline
                                                                   & \cellcolor[HTML]{F3F3F3}{\color[HTML]{333333} Spurious-Fourier} & \cellcolor[HTML]{F3F3F3}{\color[HTML]{333333} 3 spurious correlations}  & \cellcolor[HTML]{F3F3F3}{\color[HTML]{333333} 2}    & \cellcolor[HTML]{F3F3F3}{\color[HTML]{333333} Avg. acc.}           & \cellcolor[HTML]{F3F3F3}{\color[HTML]{333333} LSTM}          & \cellcolor[HTML]{F3F3F3}{\color[HTML]{333333} 12,000}   \\
 \multirow{-2}{*}{Sequential Prediction}                  & HHAR                                                            & 5 smart devices                                                         & 6                                                   & Avg. acc.                                                          & Deep ConvNets                                                & 13,674                                                 \\\bottomrule
% \multirow{-3}{*}{Sequential Prediction}                             & \cellcolor[HTML]{F3F3F3}{\color[HTML]{333333} LSA64}            & \cellcolor[HTML]{F3F3F3}{\color[HTML]{333333} five sets of two signers} & \cellcolor[HTML]{F3F3F3}{\color[HTML]{333333} 64}   & \cellcolor[HTML]{F3F3F3}{\color[HTML]{333333} Avg. acc.}           & \cellcolor[HTML]{F3F3F3}{\color[HTML]{333333} CRNN}          & \cellcolor[HTML]{F3F3F3}{\color[HTML]{333333} 3,200}   \\ \bottomrule
\end{tabular}}
\end{table}
\vspace{-0.1cm}
\subsection{Numerical Results on Different Settings and Benchmarks}
\vspace{-0.1cm}
\textbf{Domain Generalization and Domain Adaptation on Image Classification Tasks.} We first incorporate \ls into SDAT, which is a variant of the DAT method and achieves the state-of-the-art performance on the Office-Home dataset. Table~\ref{tab:offce31} 
 and Table~\ref{tab:sotada} show that with the simple smoothing trick, the performance of SDAT is consistently improved, and on many of the domain pairs, the improvement is greater than $1\%$. Besides, the \ls can also bring consistent improvement both with ResNet-18, ResNet-50, and ViT backbones. The average domain generalization results on other benchmarks are shown in Table~\ref{tab:dg_pacs_vlcs}.  We observe consistent improvements achieved by \abbr compared to DANN and the average accuracy on VLCS achieved by \abbr ($81.5\%$) clearly outperforms all other methods.  See Appendix~\ref{sec:exp_num} for \textit{Multi-Source Domain Generalization} performance, \textit{DG performance on Rotated MNIST}  and on \textit{Image Retrieval} benchmarks.

\textbf{Domain Generalization with Partial Environment labels.} One of the main advantages brought by \ls is the robustness to environment label noise. As shown in \figurename~\ref{fig:dann_random}, when all environment labels are known (GT), \abbr is slightly better than DANN. When partial environment labels are known, for example, $30\%$ means the environment labels of $30\%$ training data are known and others are annotated differently than the ground truth annotations, \abbr outperform DANN by a large margin (more than $5\%$ accuracy when only $20\%$ correct environment labels are given). Besides, we further assume the total number of environments is also unknown and the environment number is generated randomly. M=2 in \figurename~\ref{fig:dann_random} means we partition all the training data randomly into two domains, which are used for training then. With random environment partitions, \abbr consistently beats DANN by a large margin, which verifies that the smoothness of the discrimination loss brings significant robustness to environment label noise for DAT. 

\begin{table}[]
\centering
\scriptsize
\caption{\textbf{The domain adaptation accuracies (\%) on Office-31}. $\uparrow$ denotes improvement of a method with \ls compared to that wo/ \ls.}\label{tab:offce31}
\begin{tabular}{@{}c|ccccccc@{}}
\toprule
            & \textbf{A - W} & \textbf{D - W} & \textbf{W - D} & \textbf{A - D} & \textbf{D - A} & \textbf{W - A} & \textbf{Avg} \\ \midrule
            & \multicolumn{7}{c}{{\color{brown}\textbf{ResNet18}}}                                                                                       \\
ERM~\citep{vapnik1998statistical}         & 72.2           & 97.7           & 100.0          & 72.3           & 61.0           & 59.9           & 77.2         \\
DANN~\citep{ganin2016domain}        & 84.1           & 98.1           & 99.8           & 81.3           & 60.8           & 63.5           & 81.3         \\\Gray
DANN+ELS    & 85.5           & 99.1           & 100.0          & 82.7           & 62.1           & 64.5           & 82.4         \\\Gray
$\uparrow$ & 1.4            & 1.0            & 0.2            & 1.4            & 1.3            & 1.1            & 1.1          \\
SDAT~\citep{rangwani2022closer}        & 87.8           & 98.7           & 100.0          & 82.5           & 73.0           & 72.7           & 85.8         \\\Gray
SDAT+ELS    & \textbf{88.9}           & \textbf{99.3}           & \textbf{100.0 }         & \textbf{83.9}           & \textbf{74.1}           & \textbf{73.9}           & \textbf{86.7}         \\\Gray
$\uparrow$ & 1.1            & 0.5            & 0.0            & 1.4            & 1.1            & 1.2            & 0.9          \\\hline\hline
            & \multicolumn{7}{c}{{{\color{brown}\textbf{ResNet50}}}}                                                                                       \\
ERM~\citep{vapnik1998statistical}         & 75.8           & 95.5           & 99.0           & 79.3           & 63.6           & 63.8           & 79.5         \\
% DANN~\citep{ganin2016domain}        & 91.4           & 97.9           & 100.0          & 83.6           & 73.3           & 70.4           & 86.1         \\
ADDA~\citep{tzeng2017adversarial}        & 94.6           & 97.5           & 99.7           & 90.0           & 69.6           & 72.5           & 87.3         \\
CDAN~\citep{long2018conditional}        & 93.8           & 98.5           & 100.0          & 89.9           & 73.4           & 70.4           & 87.7         \\
MCC~\citep{MCC}         & 94.1           & 98.4           & 99.8           & \textbf{95.6}           & 75.5           & 74.2           & 89.6         \\
DANN~\citep{ganin2016domain}  & 91.3 & 97.2 & 100.0 & 84.1 & 72.9 & 73.6 & 86.5 \\\Gray
DANN$+$ELS & 92.2 & 98.5 & 100.0 & 85.9 & 74.3 & 75.3 & 87.7 \\\Gray
$\uparrow$ & 0.9 & 1.3 & 0.0 & 1.8 & 1.4 & 1.7 & 1.2 \\
SDAT~\citep{rangwani2022closer}        & 92.7           & 98.9           & 100.0          & 93.0           & 78.5           & 75.7           & 89.8         \\\Gray
SDAT$+$ELS  &\textbf{93.6}        & \textbf{99.0} & \textbf{100.0} & {93.4} & \textbf{78.7} & \textbf{77.5} & \textbf{90.4}               \\\Gray
$\uparrow$ & 0.9          & 0.1         & 0.0 & 0.4 & 0.2 & 1.8 & 0.6              \\
%             & \multicolumn{7}{c}{{{\color{brown}\textbf{ViT}}}}                                                                                            \\
% CDAN~\citep{long2018conditional}         &93.3 & 99.2 & 100.0 & 95.2 & 82.4 & 80.9 & 91.8               \\
% SDAT~\citep{rangwani2022closer}        & 97.1         & 99.2        & 100.0 & 98.8 & 83.7 & 84.1 & 93.8            \\\Gray
% SDAT$+$ELS  &                &                &                &                &                &                &              \\\Gray
% $\uparrow$ &                &                &                &                &                &                &              \\ 
\bottomrule
\end{tabular}%
\end{table}

\begin{table}[t]
\caption{The domain generalization accuracies (\%) on VLCS, and PACS. $\uparrow$ denotes improvement of \abbr compared to DANN.}
\centering
\resizebox{\textwidth}{!}{
\begin{tabular}{ccccccccccc}\cmidrule[\heavyrulewidth]{1-11}
\multirow{2}{*}{\textbf{Algorithm}}& \multicolumn{5}{c}{{\color{brown}\textbf{PACS}}}& \multicolumn{5}{c}{{\color{brown}\textbf{VLCS}}}\\
\cmidrule{2-11}
 & A & C & P & S & Avg & C & L & S & V & Avg\\
\cmidrule{1-11}
\textsc{ERM~\citep{vapnik1998statistical}} & 87.8 ± 0.4 & 82.8 ± 0.5 & 97.6 ± 0.4 & 80.4 ± 0.6 & 87.2 & 97.7 ± 0.3 & 65.2 ± 0.4 & 73.2 ± 0.7 & 75.2 ± 0.4 & 77.8 \\
\textsc{IRM~\citep{arjovsky2020invariant}} & 85.7 ± 1.0 & 79.3 ± 1.1 & 97.6 ± 0.4 & 75.9 ± 1.0 & 84.6 & 97.6 ± 0.5 & 64.7 ± 1.1 & 69.7 ± 0.5 & 76.6 ± 0.7 & 77.2 \\
% \textsc{GDRO~\citep{sagawa2020distributionally}} &  88.2 ± 0.7 & 82.4 ± 0.8 & 97.7 ± 0.2 & 80.6 ± 0.9 & 87.2 & 97.8 ± 0.0 & 66.4 ± 0.5 & 68.7 ± 1.2 & 76.8 ± 1.0 & 77.4 \\
% \textsc{Mixup~\citep{yan2020improve}} & 87.4 ± 1.0 & 80.7 ± 1.0 & \textbf{97.9 ± 0.2} & 79.7 ± 1.0 & 86.4 & 98.3 ± 0.3 & 66.7 ± 0.5 & 73.3 ± 1.1 & 76.3 ± 0.8 & 78.7 \\
\textsc{DANN~\citep{ganin2016domain}}  & 85.4 $\pm$ 1.2 & 83.1 $\pm$ 0.8 & 96.3 $\pm$ 0.4 & 79.6 $\pm$ 0.8 & 86.1  & 98.6 $\pm$ 0.8 & 73.2 $\pm$ 1.1 & 72.8 $\pm$ 0.8 & 78.8 $\pm$ 1.2 & 80.8 \\
% \textsc{CDANN~\citep{li2018deep}} & 87.0 ± 1.2 & 80.8 ± 0.9 & 97.4 ± 0.5 & 77.6 ± 0.1 & 85.7 & 98.9 ± 0.3 & 68.8 ± 0.6 & 73.7 ± 0.6 & 79.3 ± 0.6 & {80.2} \\
ARM~\citep{zhang2021adaptive}  & 85.0 $\pm$ 1.2       & 81.4 $\pm$ 0.2       & 95.9 $\pm$ 0.3       & 80.9 $\pm$ 0.5       & 85.8                & 97.6 $\pm$ 0.6       & 66.5 $\pm$ 0.3       & 72.7 $\pm$ 0.6       & 74.4 $\pm$ 0.7       & 77.8                  \\
Fisher~\citep{rame2021fishr}  & \textbf{——}       & \textbf{——}       & \textbf{——}       &\textbf{—— }     & 86.9                & \textbf{——}       & \textbf{——}       & \textbf{—— }      &\textbf{——}       & 76.2                  \\
\textsc{DDG}~\citep{zhang2021towards} & \textbf{88.9 ± 0.6} & \textbf{85.0 ± 1.9}  & 97.2 ± 1.2  &  \textbf{84.3 ± 0.7} & \textbf{88.9} & \textbf{99.1 ± 0.6}  & 66.5 ± 0.3  &  73.3 ± 0.6 & \textbf{80.9 ± 0.6} & 80.0 \\\Gray
\abbr & 87.8 $\pm$ 0.8 & 83.8 $\pm$ 1.6 & 97.1 $\pm$ 0.4 & 81.4 $\pm$ 1.3 & 87.5 & \textbf{99.1 $\pm$ 0.3} & \textbf{73.2 $\pm$ 1.1} & \textbf{73.8 $\pm$ 0.9} & 79.9 $\pm$ 0.9 & \textbf{81.5}  \\\Gray
$\uparrow$ & 2.4 & 0.7 & 0.8 & 1.8 & 1.4 & 0.5 & 0 & 1   & 1.1 & 0.7 \\
% $1-\gamma$         &   &      & 0.1 & 0 & 0.3 & 0.2 &      \\
\cmidrule[\heavyrulewidth]{1-11}
\end{tabular}}

% \setlength{\tabcolsep}{7.25pt}
% \resizebox{\textwidth}{!}{
% \begin{tabular}{@{}cccccccc|ccccccc@{}}
% \toprule
% \multicolumn{15}{c}{{\color{brown}\textbf{Office-31}}}\\
%     & \multicolumn{7}{c}{{\color{brown}\textbf{ResNet18}}}                           & \multicolumn{7}{c}{{\color{brown}\textbf{ResNet50}}}                       \\
%   \textbf{Algorithm}  & \textbf{D - A} & \textbf{D - W} & \textbf{A - W} & \textbf{W - A }& \textbf{A - D} & \textbf{W - D }& \textbf{Avg}    & \textbf{D - A} & \textbf{D - W} & \textbf{A - W} & \textbf{W - A} & \textbf{A - D} & \textbf{W - D} & \textbf{Avg} \\ \midrule

% ERM~\citep{vapnik1998statistical} & 61.022 & 97.735 & 72.21 & 59.850 & 72.282 & 100     & 77.183 & 65.814 & 97.861 & 79.874 & 64.51 & 80.722 & 99.792 & 81.429 \\
% DANN~\citep{ganin2016domain}        & 60.844 & 98.113 & 76.10 & 59.460 & 73.293 & 99.799 & 77.933 & 66.773 & 98.616 & 85.786 & 70.039  & 80.923 & 100     & 83.689 \\\Gray
% \abbr       & 61.128 & 98.490 & 77.483 & 59.638 & 74.698 & 100     & 78.567 & 68.051 & 98.632 & 85.786 & 70.216 & 82.128 & 100     & 84.139 \\\Gray
% $\uparrow$ & 0.284   & 0.377  & 1.383  & 0.177   & 1.45  & 0.20  & 0.638  & 1.277  & 0.016  & 0       & 0.177  & 1.28  & 0       & 0.446  \\
% % $1-\gamma$ & 0.3   & 0.5   & 0.2   & 0.4   & 0.1   &    0.1   & & 0.2   & 0.1   & 0     & 0.1   & 0.2   & 0.1     &     \\ 
% \bottomrule
% \end{tabular}}
\label{tab:dg_pacs_vlcs}
\vspace{-0.2cm}
\end{table}

% Please add the following required packages to your document preamble:
% \usepackage{booktabs}
% \usepackage{graphicx}

% Please add the following required packages to your document preamble:
% \usepackage{multirow}
% \usepackage[table,xcdraw]{xcolor}
% If you use beamer only pass "xcolor=table" option, i.e. \documentclass[xcolor=table]{beamer}
\begin{table}[t]
\centering  
\caption{\textbf{Accuracy ($\%$) on Office-Home for unsupervised DA} (with ResNet-50 and ViT backbone). SDAT$+$ELS outperforms other SOTA DA techniques and improves SDAT consistently.}\label{tab:sotada}
\resizebox{\columnwidth}{!}{%
\begin{tabular}{c|c|cccccccccccc|c}
\toprule
\textbf{Method} & \textbf{Backbone}           & \textbf{A-C}                                                  & \textbf{A-P}                                                  & \textbf{A-R}                                                  & \textbf{C-A}                                         & \textbf{C-P}                                                  & \textbf{C-R}                                         & \textbf{P-A}                                         & \textbf{P-C}                                                  & \textbf{P-R}                                                  & \textbf{R-A}                                         & \textbf{R-C}                                                  & \textbf{R-P}                                                  & \textbf{Avg}                                                  \\\hline\hline
ResNet-50~\citep{he2016deep}       &                             & 34.9                                                         & 50.0                                                         & 58.0                                                         & 37.4                                                & 41.9                                                         & 46.2                                                & 38.5                                                & 31.2                                                         & 60.4                                                         & 53.9                                                & 41.2                                                         & 59.9                                                         & 46.1                                                         \\
DANN~\citep{ganin2016domain}            &                             & 45.6                                                         & 59.3                                                         & 70.1                                                         & 47.0                                                & 58.5                                                         & 60.9                                                & 46.1                                                & 43.7                                                         & 68.5                                                         & 63.2                                                & 51.8                                                         & 76.8                                                         & 57.6                                                         \\
CDAN~\citep{long2018conditional}           &                             & 49.0                                                         & 69.3                                                         & 74.5                                                         & 54.4                                                & 66.0                                                         & 68.4                                                & 55.6                                                & 48.3                                                         & 75.9                                                         & 68.4                                                & 55.4                                                         & 80.5                                                         & 63.8                                                         \\
MMD~\citep{zhang2019bridging}             &                             & 54.9                                                         & 73.7                                                         & 77.8                                                         & 60.0                                                & 71.4                                                         & 71.8                                                & 61.2                                                & 53.6                                                         & 78.1                                                         & 72.5                                                & 60.2                                                         & 82.3                                                         & 68.1                                                         \\
f-DAL~\citep{acuna2021f}           &                             & 56.7                                                         & 77.0                                                         & 81.1                                                         & 63.1                                                & 72.2                                                         & 75.9                                                & 64.5                                                & 54.4                                                         & 81.0                                                         & 72.3                                                & 58.4                                                         & 83.7                                                         & 70.0                                                         \\
SRDC~\citep{tang2020unsupervised}            &                             & 52.3                                                         & 76.3                                                         & 81.0                                                         & \textbf{69.5}                                       & 76.2                                                         & \textbf{78.0}                                       & \textbf{68.7}                                       & 53.8                                                         & 81.7                                                         & \textbf{76.3}                                       & 57.1                                                         & 85.0                                                         & 71.3                                                         \\
SDAT~\citep{rangwani2022closer}            &                             & 57.8                                                         & 77.4                                                         & 82.2                                                         & 66.5                                                & 76.6                                                         & 76.2                                                & 63.3                                                & 57.0                                                         & \textbf{82.2}                                                & 75.3                                                & 62.6                                                         & 85.2                                                         & 71.8                                                         \\
SDAT$+$ELS      & & \cellcolor[HTML]{F3F3F3}{\color[HTML]{333333} \textbf{58.2}} & \cellcolor[HTML]{F3F3F3}{\color[HTML]{333333} \textbf{79.7}} & \cellcolor[HTML]{F3F3F3}{\color[HTML]{333333} \textbf{82.5}} & \cellcolor[HTML]{F3F3F3}{\color[HTML]{333333} 67.5} & \cellcolor[HTML]{F3F3F3}{\color[HTML]{333333} \textbf{77.2}} & \cellcolor[HTML]{F3F3F3}{\color[HTML]{333333} 77.2} & \cellcolor[HTML]{F3F3F3}{\color[HTML]{333333} 64.6} & \cellcolor[HTML]{F3F3F3}{\color[HTML]{333333} \textbf{57.9}} & \cellcolor[HTML]{F3F3F3}{\color[HTML]{333333} \textbf{82.2}} & \cellcolor[HTML]{F3F3F3}{\color[HTML]{333333} 75.4} & \cellcolor[HTML]{F3F3F3}{\color[HTML]{333333} \textbf{63.1}} & \cellcolor[HTML]{F3F3F3}{\color[HTML]{333333} \textbf{85.5}} & \cellcolor[HTML]{F3F3F3}{\color[HTML]{333333} \textbf{72.6}} \\
$\uparrow$       &   \multirow{-8}{*}{{\color{brown} ResNet-50}}    & 0.4  & 2.3  & 0.3  & 1.0  & 0.6  & 1.0  & 1.3  & 0.9  & 0.0  & 0.1  & 0.5  & 0.3  & 0.8                                                                                                            \\\midrule
TVT~\citep{yang2021tvt}             &                             & \textbf{74.9}                                                         & 86.6                                                         & 89.5                                                         & 82.8                                                & 87.9                                                         & 88.3                                                & 79.8                                                & 71.9                                                         & 90.1                                                         & 85.5                                                & 74.6                                                         & 90.6                                                         & 83.6                                                         \\
CDAN~\citep{long2018conditional}            &                             & 62.6                                                         & 82.9                                                         & 87.2                                                         & 79.2                                                & 84.9                                                         & 87.1                                                & 77.9                                                & 63.3                                                         & 88.7                                                         & 83.1                                                & 63.5                                                         & 90.8                                                         & 79.3                                                         \\
SDAT~\citep{rangwani2022closer}            &                             & 70.8                                                         & 87.0                                                         & 90.5                                                         & 85.2                                                & 87.3                                                         & 89.7                                                & 84.1                                                & 70.7                                                         & 90.6                                                         & 88.3                                                & 75.5                                                         & 92.1                                                         & 84.3                                                         \\
SDAT$+$ELS   &  \multirow{-4}{*}{{\color{brown} ViT}}  &\multicolumn{1}{c}{\cellcolor[HTML]{F3F3F3}{\color[HTML]{333333} 72.1}} & \multicolumn{1}{c}{\cellcolor[HTML]{F3F3F3}{\color[HTML]{333333} \textbf{87.3}}} & \multicolumn{1}{c}{\cellcolor[HTML]{F3F3F3}{\color[HTML]{333333} \textbf{90.6}}} & \multicolumn{1}{c}{\cellcolor[HTML]{F3F3F3}{\color[HTML]{333333} \textbf{85.2}}} & \multicolumn{1}{c}{\cellcolor[HTML]{F3F3F3}{\color[HTML]{333333} \textbf{88.1}}} & \multicolumn{1}{c}{\cellcolor[HTML]{F3F3F3}{\color[HTML]{333333} \textbf{89.7}}} & \multicolumn{1}{c}{\cellcolor[HTML]{F3F3F3}{\color[HTML]{333333} \textbf{84.1}}} & \multicolumn{1}{c}{\cellcolor[HTML]{F3F3F3}{\color[HTML]{333333} \textbf{70.7}}} & \multicolumn{1}{c}{\cellcolor[HTML]{F3F3F3}{\color[HTML]{333333} \textbf{90.8}}} & \multicolumn{1}{c}{\cellcolor[HTML]{F3F3F3}{\color[HTML]{333333} \textbf{88.4}}} & \multicolumn{1}{c}{\cellcolor[HTML]{F3F3F3}{\color[HTML]{333333} \textbf{76.5}}} & \multicolumn{1}{c}{\cellcolor[HTML]{F3F3F3}{\color[HTML]{333333} \textbf{92.1}}} & \multicolumn{1}{c}{\textbf{84.6}} \\
$\uparrow$      & & 1.3 & 0.3 & 0.1 & 0.0 & 0.8 & 0.0 & 0.0 & 0.0 & 0.2 & 0.1 & 1.0 & 0.0 & 0.3 \\
\bottomrule                             
\end{tabular}}

\end{table}

\textbf{Continuously Indexed Domain Adaptation.}
We compare \abbr with state-of-the-art continuously indexed domain adaptation methods. Table~\ref{tab:cida} compares the accuracy of various methods. DANN shows an inferior performance to CIDA. However, with \ls, \abbr boosts the generalization performance by a large margin and beats the SOTA method CIDA~\citep{wang2020continuously}. We also visualize the classification results on Circle Dataset (See Appendix~\ref{sec:data_detail_cls} for dataset details).~\figurename~\ref{fig:cida} shows that the representative DA method (ADDA) performs poorly when asked to align domains with continuous indices. However, the proposed \abbr can get a near-optimal decision boundary.
%, including Adversarial Discriminative Domain Adaptation (ADDA~\citep{tzeng2017adversarial}), and CIDA~\citep{wang2020continuously}. ADDA merges data with different domain indices into one source and one target domain. DANN divides the continuous domain spectrum into several separate domains and performs adaptation between multiple source and target domains. For Rotating MNIST, the seven target domains contain images rotating by $d\in$\{$[0^\circ,45^\circ),[45^\circ,90^\circ),[90^\circ,135^\circ),\dots,[315^\circ,360^\circ)$\} degrees, respectively.

% Please add the following required packages to your document preamble:
% \usepackage{booktabs}
\begin{table}[]

\caption{\textbf{Rotating MNIST accuracy (\%) at the source domain and each target domain.} $X^\circ$ denotes  the domain whose images are Rotating by $[X^\circ,X^\circ+45^\circ]$. %and $\gamma$ is set to $0.95$.
}\label{tab:cida}
\centering
\footnotesize
\begin{tabular}{@{}cccccccccc@{}}
\toprule
\multicolumn{10}{c}{{\color{brown}\textbf{Rotating MNIST}}}\\
\textbf{Algorithm}  & \textbf{$0^\circ$(Source) }& \textbf{45$^\circ$}   & \textbf{90$^\circ$ }  & \textbf{135$^\circ$ } & \textbf{180$^\circ$}  & \textbf{225$^\circ$}  & \textbf{270$^\circ$}  & \textbf{315$^\circ$}  & \textbf{Average} \\ \midrule
ERM~\citep{vapnik1998statistical}       & 99.2      & 79.7 & 26.8 & 31.6 & 35.1 & 37.0   & 28.6 & 76.2 & 45.0      \\
ADDA~\citep{tzeng2017adversarial}& 97.6      & 70.7 & 22.2 & 32.6 & 38.2 & 31.5 & 20.9 & 65.8 & 40.3    \\
DANN~\citep{ganin2016domain}       & 98.4      & \textbf{81.4} & 38.9 & 35.4 & 40.0   & 43.4 & 48.8 & 77.3 & 52.1    \\
CIDA~\citep{wang2020continuously}       & \textbf{99.5}      & 80.0   & 33.2 & \textbf{49.3} & \textbf{50.2} & \textbf{51.7} & \textbf{54.6} & \textbf{81.0}   & 57.1    \\\Gray
\abbr      & 98.4      & \textbf{81.4}  & \textbf{55.0}   & 39.9 & 43.7 & 45.9 & 53.7 & 78.7 & \textbf{62.1}    \\\Gray
$\uparrow$ & 0.0         & 0.0  & 16.1 & 4.5  & 3.7  & 2.5  & 4.9  & 1.4  & 10.0     \\ \bottomrule
\end{tabular}

\end{table}
% \begin{wrapfigure}{r}{0.4\textwidth}
%   \begin{center}
%   \renewcommand{\captionlabelfont}{\footnotesize}
%     \includegraphics[width=\textwidth]{}
%     \caption{Training statistics.}
%         \label{fig:dro_training}
%   \end{center}
% \end{wrapfigure}
\begin{figure}[t]
\centering
\subfigure[Domains.]{
\begin{minipage}[t]{0.24\linewidth}
\centering
\includegraphics[width=\linewidth]{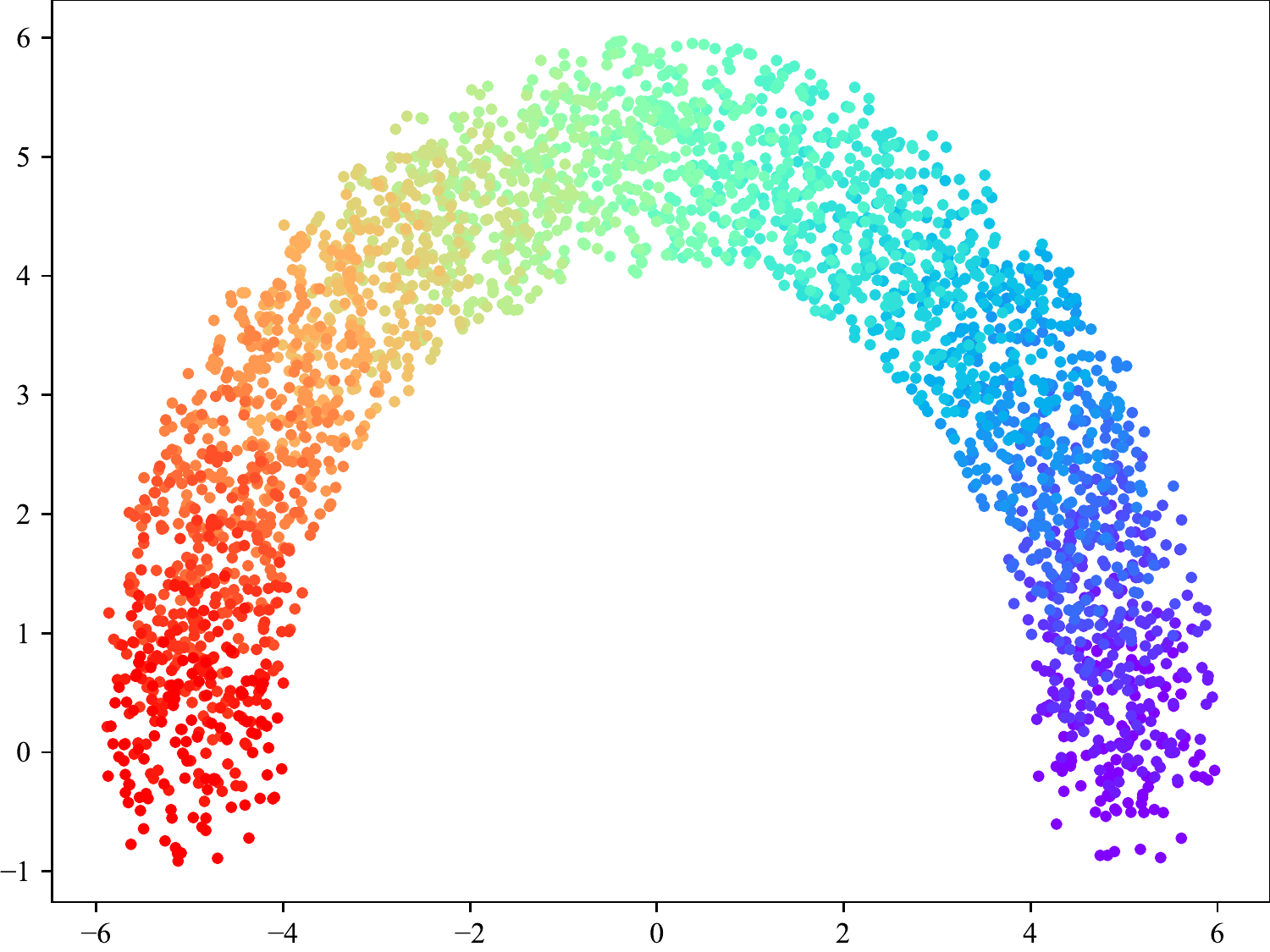}\label{fig:data_circle}
%\caption{fig1}
\end{minipage}%
}%
\subfigure[Ground Truth.]{
\begin{minipage}[t]{0.24\linewidth}
\centering
\includegraphics[width=\linewidth]{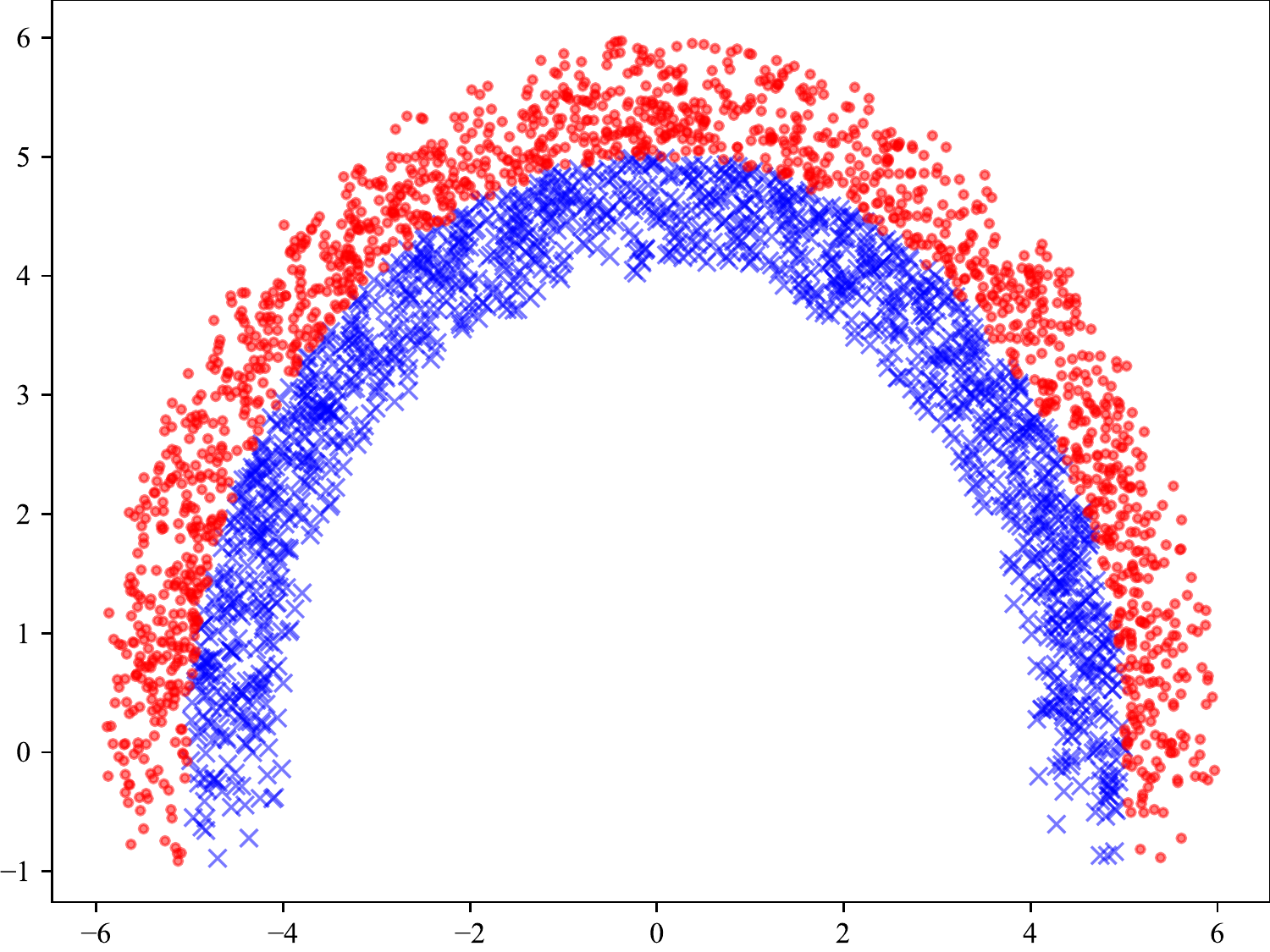}\label{fig:data_circle_b}
%\caption{fig2}
\end{minipage}%
}%
\subfigure[ADDA.]{
\begin{minipage}[t]{0.24\linewidth}
\centering
\includegraphics[width=\linewidth]{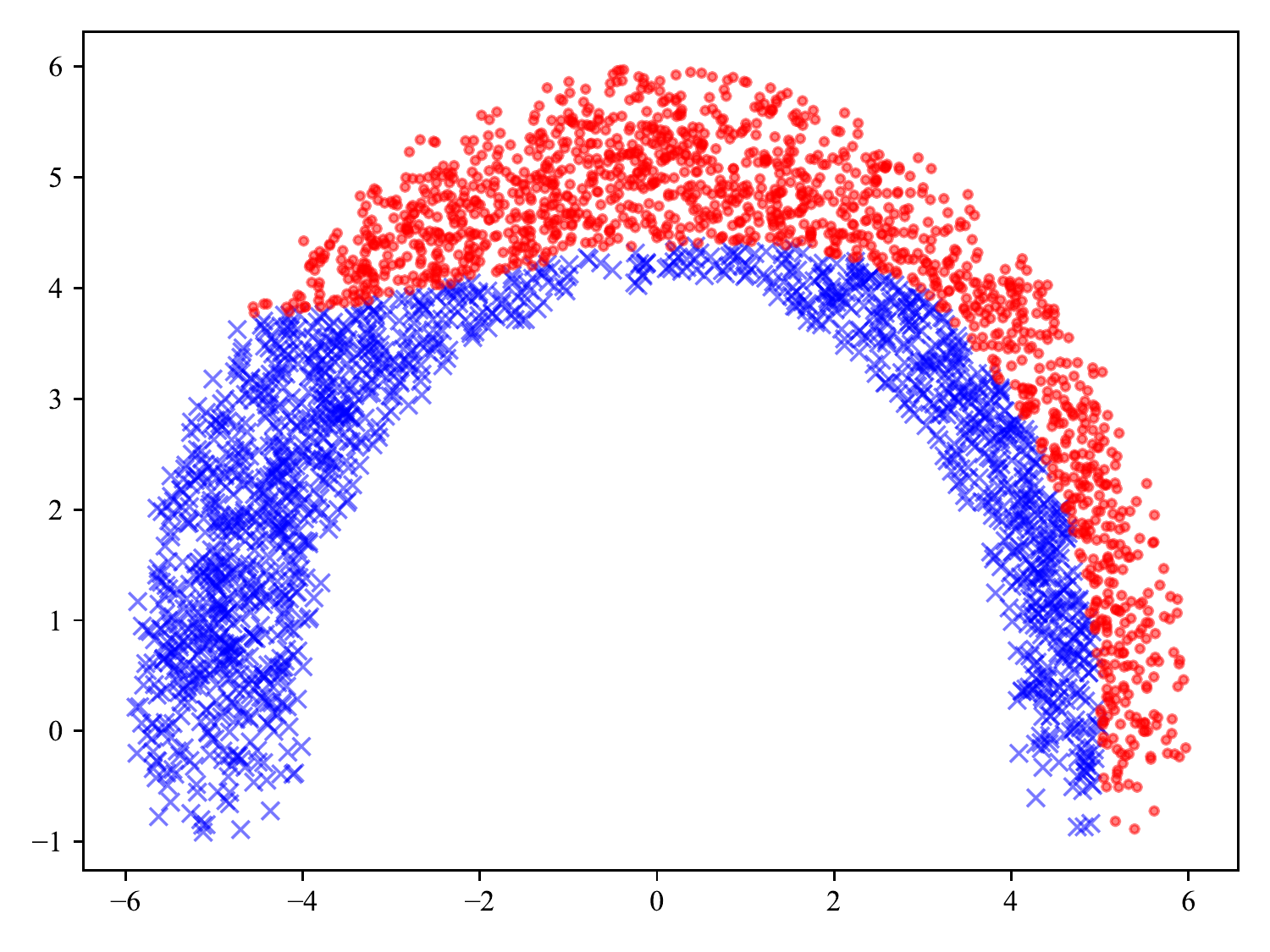}
%\caption{fig2}
\end{minipage}
}%
\subfigure[\abbr]{
\begin{minipage}[t]{0.24\linewidth}
\centering
\includegraphics[width=\linewidth]{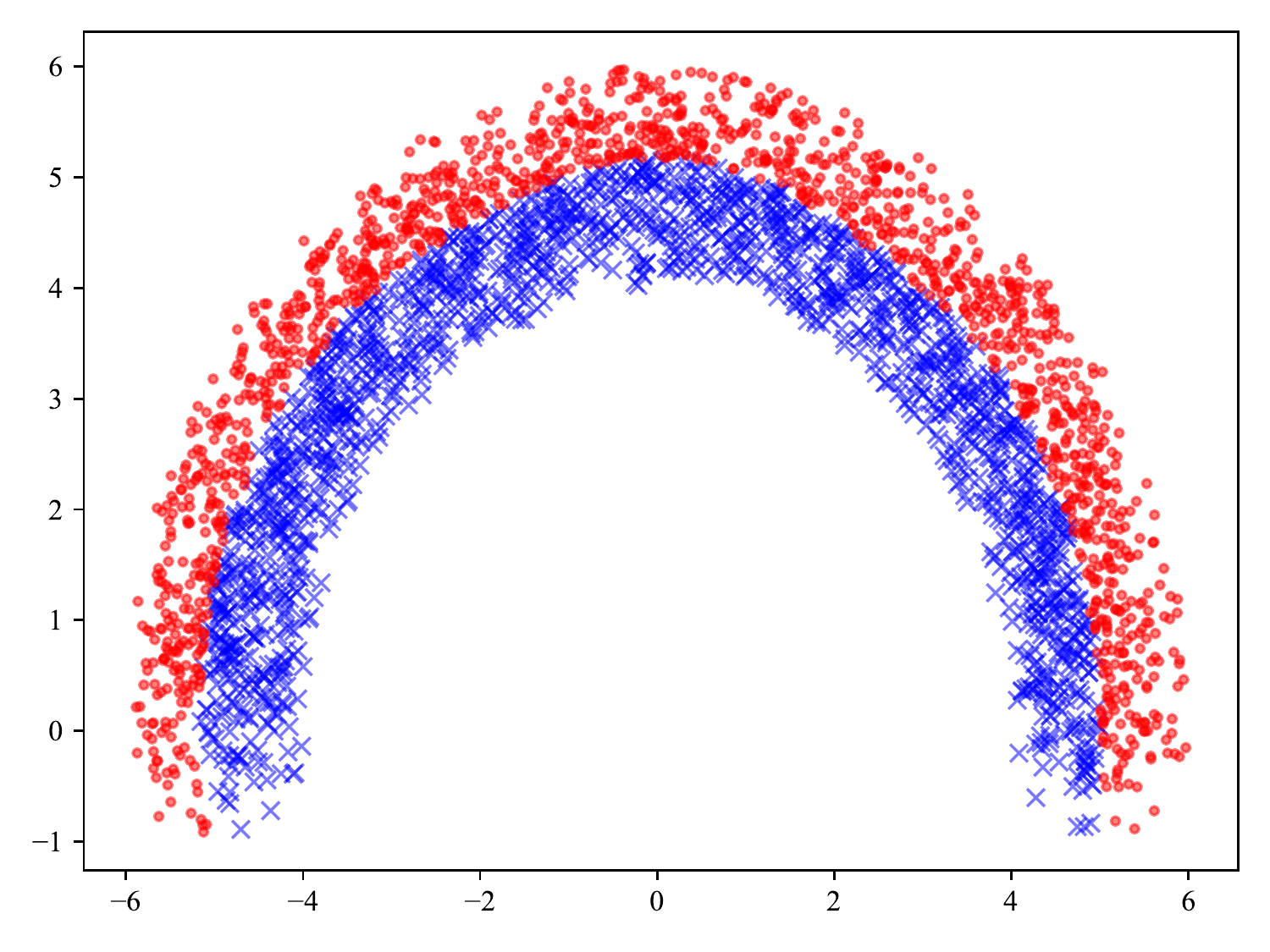}
%\caption{fig2}
\end{minipage}
}%
\centering
\caption{\textbf{Results on the \textit{Circle} dataset with 30 domains.} (a) shows domain index by color, (b) shows label index by color, where red dots and blue crosses are positive and negative data sample. Source domains contain the first 6 domains and others are target domains.} \label{fig:cida}
\vspace{-0.2cm}
\end{figure}

\textbf{Generalization results on other structural datasets and Sequential Datasets.} Table~\ref{tab:main_nlp} shows the generalization results on NLP datasets, and Table~\ref{tab:main_graph},~\ref{tab:main_graph2} show the results on genomics datasets. \abbr bring huge performance improvement on most of the evaluation metrics, \eg $4.17\%$ test worst-group accuracy on CivilComments, $3.79\%$ test ID accuracy on RxRx1, and $3.13\%$ test accuracy on OGB-MolPCBA. Generalization results on sequential prediction tasks are shown in Table~\ref{tab:sp_fourier} and Table~\ref{tab:seq2}, where DANN works poorly but \abbr brings consistent improvement and beats all baselines on the Spurious-Fourier dataset.

% Please add the following required packages to your document preamble:
% \usepackage{booktabs}
% \usepackage[table,xcdraw]{xcolor}
% If you use beamer only pass "xcolor=table" option, i.e. \documentclass[xcolor=table]{beamer}
% Please add the following required packages to your document preamble:
% \usepackage{booktabs}
\begin{table}[t]
\caption{\textbf{Domain generalization performance on neural language datasets.} The backbone is \textit{DistillBERT-base-uncased} and all results are reported over 3 random seed runs. %$\gamma$ for Amazon-Wilds is set to $0.3$ and for CivilComments-Wilds is $0.1$.
}
\label{tab:main_nlp}
\adjustbox{max width=\textwidth}{%
\begin{tabular}{@{}ccccccc@{}}
\toprule
\multicolumn{7}{c}{{\color{brown}\textbf{Amazon-Wilds}}} \\
\textbf{Algorithm} &
  \textbf{Val Avg Acc} &
  \textbf{Test Avg Acc} &
  \textbf{Val 10\% Acc} &
  \textbf{Test 10\% Acc} &
  \textbf{Val Worst-group acc} &
  \textbf{Test Worst-group acc} \\ \midrule
ERM~\citep{vapnik1998statistical}      & \textbf{72.7 $\pm$ 0.1} & \textbf{71.9 $\pm$ 0.1} & \textbf{55.2 $\pm$ 0.7} & 53.8 $\pm$ 0.8 & 20.3 $\pm$ 0.1& 4.2 $\pm$ 0.2 \\
% Fish~\citep{shi2021gradient}      & 72.5 $\pm$ 0.0 & 71.7 $\pm$ 0.1 & 54.2 $\pm$ 0.8 & 53.3 $\pm$ 0.0 & -    & -    \\
Group DRO~\citep{sagawa2020distributionally} & 70.7 $\pm$ 0.6 & 70.0 $\pm$ 0.6 & 54.7 $\pm$ 0.0 & 53.3 $\pm$ 0.0 & \textbf{54.2 $\pm$ 0.3} & 6.3 $\pm$ 0.2  \\
CORAL~\citep{sun2016deep}     & 72.0 $\pm$ 0.3  & 71.1 $\pm$ 0.3 & 54.7 $\pm$ 0.0 & 52.9 $\pm$ 0.8 & 30.0 $\pm$ 0.2  & 6.1  $\pm$ 0.1\\
IRM~\citep{arjovsky2020invariant}       & 71.5 $\pm$ 0.3 & 70.5 $\pm$ 0.3 & 54.2 $\pm$ 0.8 & 52.4 $\pm$ 0.8 & 32.2 $\pm$ 0.8 & 5.3 $\pm$ 0.2  \\
Reweight & 69.1 $\pm$ 0.5  & 68.6 $\pm$ 0.6   & 52.1 $\pm$ 0.2   & 52.0 $\pm$ 0.0      & 34.9  $\pm$ 1.2            & 9.1 $\pm$ 0.4                 \\
DANN~\citep{ganin2016domain}      & 72.1 $\pm$ 0.2      & 71.3 $\pm$ 0.1      & 54.6  $\pm$ 0.0    & 52.9   $\pm$ 0.6      & 4.4 $\pm$ 1.3 & 8.0 $\pm$ 0.0 \\\Gray
\abbr      &  72.3 $\pm$ 0.1          &   71.5 $\pm$ 0.1         &       54.7  $\pm$ 0.0     &    \textbf{53.8 ± 0.0}        &  4.9 ± 0.6    &   \textbf{9.4 ± 0.0}   \\\Gray
$\uparrow$      &    0.2        &      0.2      &     0.1       &       0.9     &    0.5  &   1.4   \\
\end{tabular}}

\adjustbox{max width=\textwidth}{%
\begin{tabular}{@{}ccccc@{}}
\toprule
\multicolumn{5}{c}{{\color{brown}\textbf{CivilComments-Wilds}}} \\
\textbf{Algorithm} & \textbf{Val Avg Acc} & \textbf{Val Worst-Group Acc} & \textbf{Test Avg Acc} & \textbf{Test Worst-Group Acc} \\ \midrule
% Fish~\citep{shi2021gradient}                     & 88.9 $\pm$ 0.6 & 70.5 $\pm$ 1.1 & 89.3 $\pm$ 0.3 & \textbf{75.3 $\pm$ 0.6} \\
Group DRO~\citep{sagawa2020distributionally}         & 90.4 $\pm$ 0.4 & 65.0 $\pm$ 3.8 & 90.2 $\pm$ 0.3 & \textbf{69.1 $\pm$ 1.8} \\
Reweighted        & 90.0 $\pm$ 0.7 & 63.7 $\pm$ 2.7 & 89.8 $\pm$ 0.8 & 66.6 $\pm$ 1.6 \\
IRM~\citep{arjovsky2020invariant}     & 89.0 $\pm$ 0.7 & 65.9 $\pm$ 2.8 & 88.8 $\pm$ 0.7 & 66.3 $\pm$ 2.1 \\
% Reweighted (label×Black) & 89.5 $\pm$ 0.6 & 66.6 $\pm$ 1.5 & 89.2 $\pm$ 0.6 & 66.2 $\pm$ 1.2 \\
% CORAL~\citep{sun2016deep}      & 88.9 $\pm$ 0.6 & 64.7 $\pm$ 1.4 & 88.7 $\pm$ 0.5 & 65.6 $\pm$ 1.3 \\
ERM~\citep{vapnik1998statistical}                     & \textbf{92.3 $\pm$ 0.2} & 50.5 $\pm$ 1.9 & \textbf{92.2 $\pm$ 0.1} & 56.0 $\pm$ 3.6 \\
DANN~\citep{ganin2016domain}                     &      87.0 ± 0.3      &       64.0 ± 2.0     &       87.0 ± 0.3     & 61.7 ± 2.2           \\\Gray
\abbr                     &  88.5 ± 0.4          &   \textbf{65.9 ± 1.1}         &      88.4 ± 0.4      &    66.0 ± 2.2        \\\Gray
$\uparrow$       &   1.4           &       1.9     &      1.4      &       4.3                 \\\bottomrule
\end{tabular}
}

\end{table}

% Please add the following required packages to your document preamble:
% \usepackage{booktabs}
\begin{table}[t]
\caption{{Domain generalization performance on genomics dataset, RxRx1.}
%All results are reported over 3 random seed runs and the $\gamma$ for \abbr is set to $0.1$.
}
\label{tab:main_graph}
\adjustbox{max width=\textwidth}{%
\begin{tabular}{@{}ccccccc@{}}
\toprule
\multicolumn{7}{c}{{\color{brown}\textbf{RxRx1-Wilds}}}\\
\textbf{Algorithm} &
  \textbf{Val Acc} &
  \textbf{Test ID Acc} &
  \textbf{Test Acc} &
  \textbf{Val Worst-Group Acc} &
  \textbf{Test ID Worst-Group Acc} &
  \textbf{Test Worst-Group Acc} \\ \midrule
ERM~\citep{vapnik1998statistical}                  & \textbf{19.4 ± 0.2}   & \textbf{35.9 ± 0.4}   & \textbf{29.9 ± 0.4}   & \textbf{—}           &\textbf{ —}           & \textbf{—}           \\
Group DRO~\citep{sagawa2020distributionally}             & 15.2 ± 0.1   & 28.1 ± 0.3   & 23.0 ± 0.3   & \textbf{—}           & \textbf{—}           & \textbf{—}           \\
IRM~\citep{arjovsky2020invariant}                  & 5.6 ± 0.4    & 9.9 ± 1.4    & 8.2 ± 1.1    & 0.8 ± 0.2 & 1.9 ± 0.4 & 1.5 ± 0.2 \\
DANN~\citep{ganin2016domain}                  & 12.7 ± 0.2 & 22.9 ± 0.1 & 19.2 ± 0.1 & \textbf{1.0 ± 0.1} & 4.6 ± 0.4 & 3.6 ± 0.0 \\\Gray
\abbr & 14.1 ± 0.1 & 26.7 ± 0.1 & 21.2 ± 0.2 & \textbf{1.1 ± 0.1} & \textbf{7.2 ± 0.3} & \textbf{4.2 ± 0.1 }\\\Gray
$\uparrow$           & 1.4          & 3.8         & 2            & 0.1      & 2.6        & 0.6        \\ \bottomrule
\end{tabular}}
\vspace{-0.2cm}
\end{table}
\vspace{-0.2cm}

\begin{figure*}[htb]
\centering
\vspace{-0.2cm}
\subfigure[]{
\begin{minipage}[t]{0.33\linewidth}
\centering
\includegraphics[width=\linewidth]{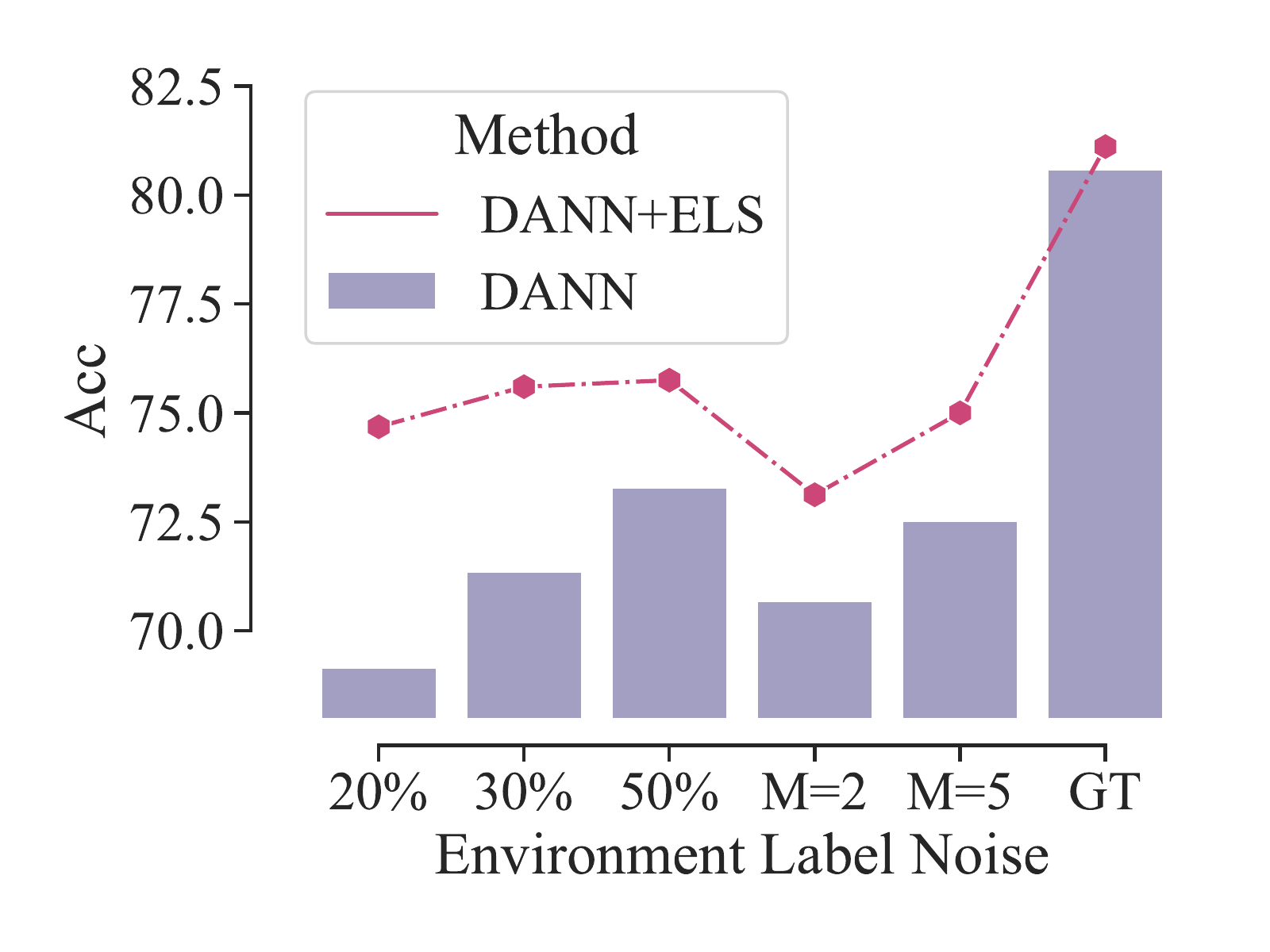}
\vspace{-0.2cm}
\label{fig:dann_random}
%\caption{fig2}
\end{minipage}%
}%
% \subfigure[]{
% \begin{minipage}[t]{0.23\linewidth}
% \centering
% \includegraphics[width=\linewidth]{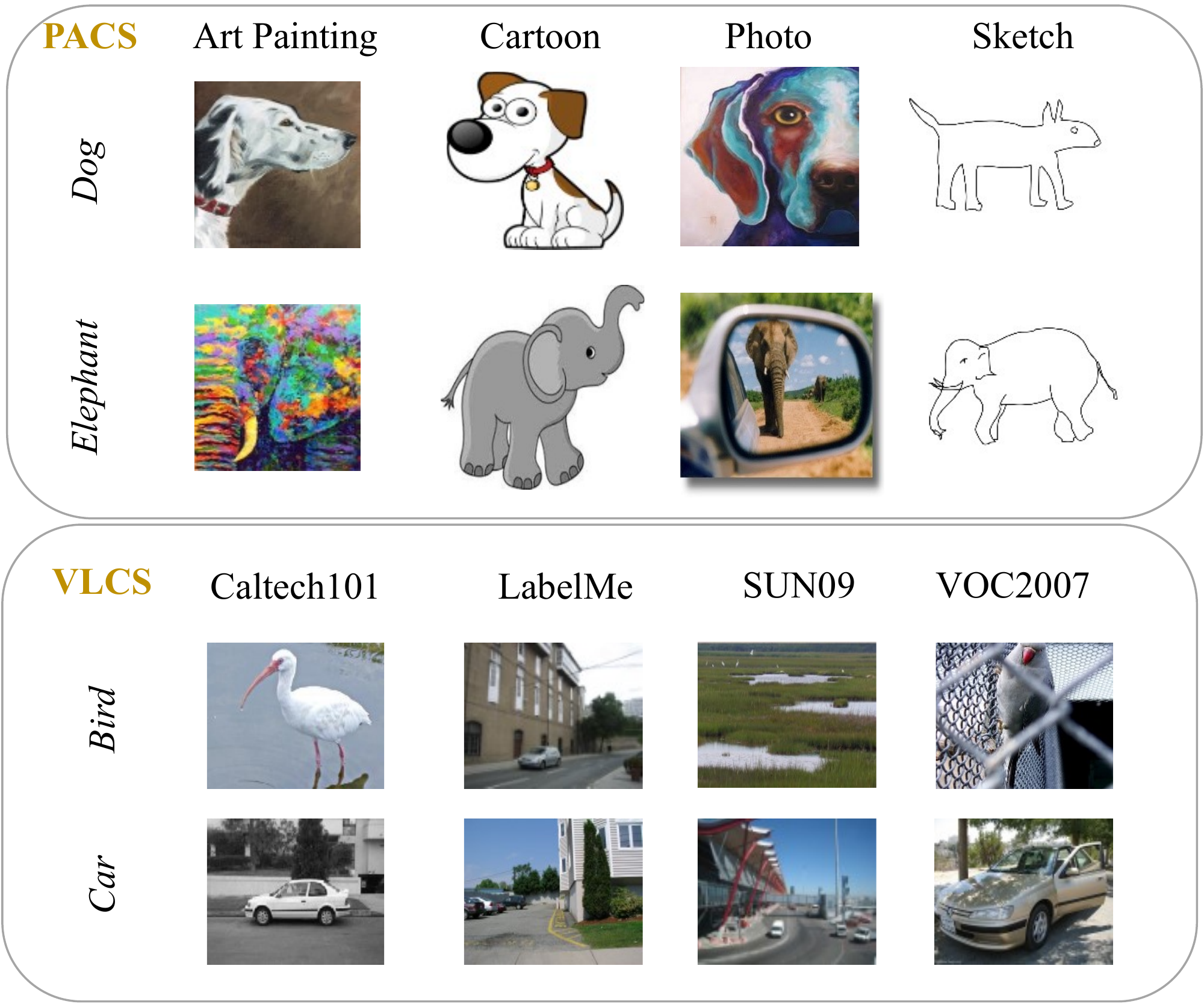}
% %\caption{fig1}
% \end{minipage}%
% \label{fig:data_sample}
% }%
\subfigure[]{
\begin{minipage}[t]{0.33\linewidth}
\centering
\includegraphics[width=\linewidth]{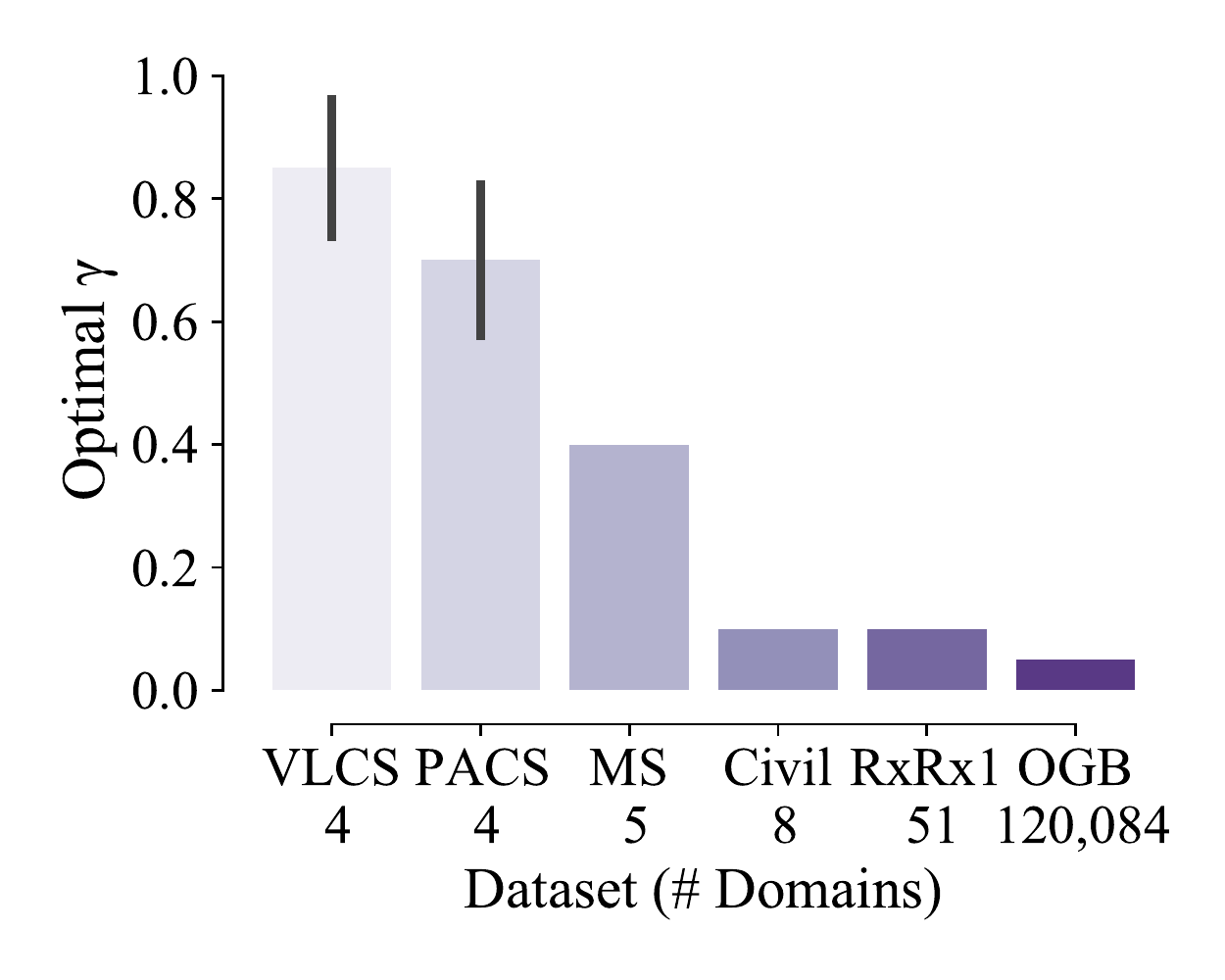}
\label{fig:gamma}
\vspace{-0.2cm}
%\caption{fig2}
\end{minipage}%
}%
\subfigure[]{
\begin{minipage}[t]{0.33\linewidth}
\centering
\includegraphics[width=\linewidth]{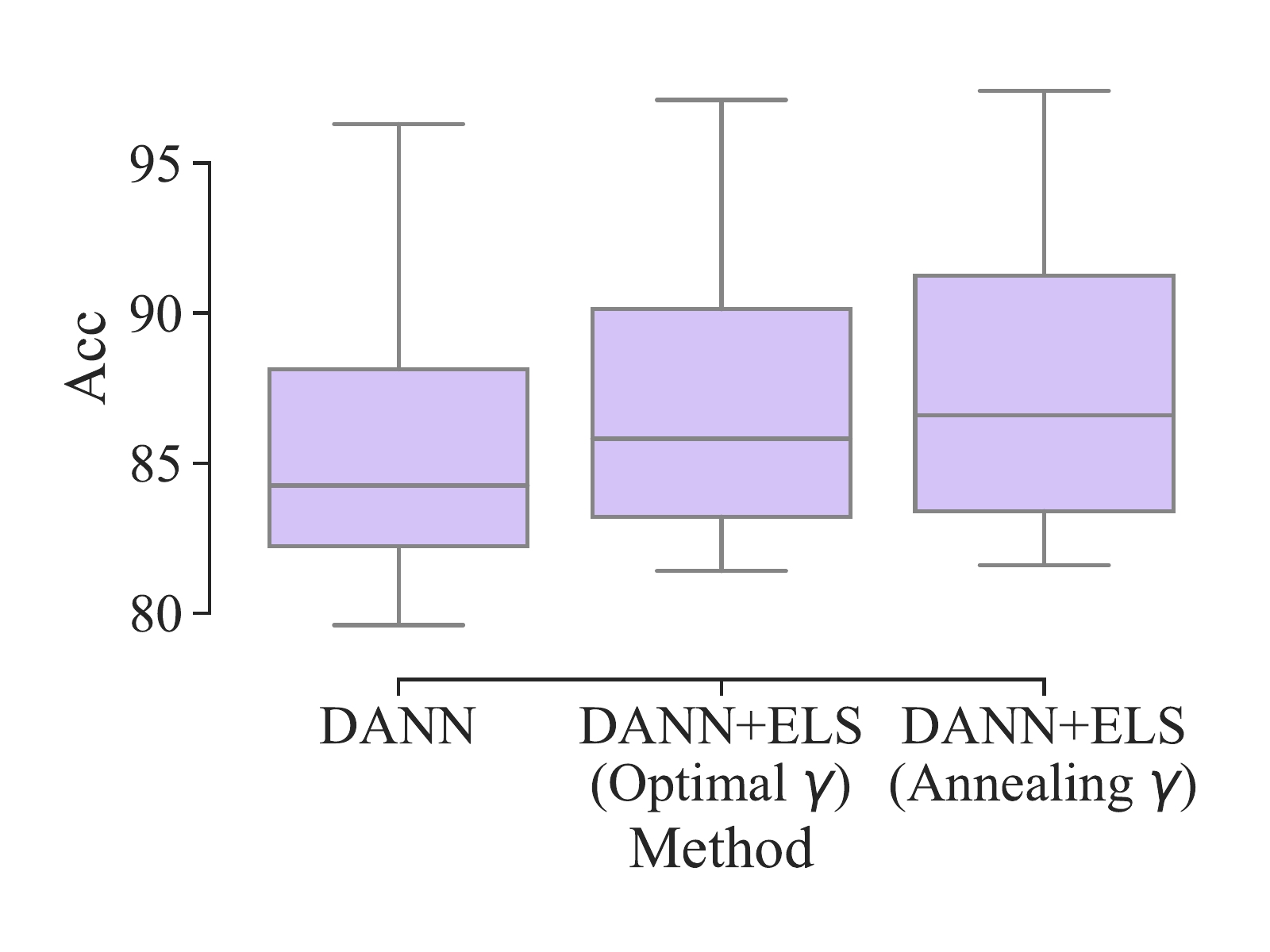}
\label{fig:decap_gamma}
%\caption{fig2}
\end{minipage}%
\vspace{-0.2cm}
}%
%\caption{fig2}
\centering
\vspace{-0.2cm}
\caption{(a) Generalization performance of \abbr compared to DANN with partial correct environment label on the PACS dataset ($P$ as target domain). (b) {The best $\gamma$ for each dataset.} Civil is the CivilComments dataset and OGB is the OGB-MolPCBA dataset. (c) Average generalization accuracy on the PACS dataset with different smoothing policies.} 
\vspace{-0.2cm}
\end{figure*}
\vspace{-0.1cm}
\subsection{Interpretation and Analysis}
\vspace{-0.1cm}
\noindent\textbf{To choose the best $\gamma$.}~\figurename~\ref{fig:gamma} visualizes the best $\gamma$ values in our experiments. For datasets like PACS and VLCS, where each domain will be set as a target domain respectively and has one best $\gamma$, we calculate the mean and standard deviation of all these $\gamma$ values. Our main observation is that, as the number of domains increases, the optimal $\gamma$ will also decrease, which is intuitive because more domains mean that the discriminator is more likely to overfit and thus needs a lower $\gamma$ to solve the problem. An interesting thing is that in~\figurename~\ref{fig:gamma}, PACS and VLCS both have $4$ domains, but VLCS needs a higher $\gamma$. \figurename~\ref{fig:data_sample} shows that images from different domains in PACS are of great visual difference and can be easily discriminated. In contrast, domains in VLCS do not show significant visual differences, and it is hard to discriminate which domain one image belongs to. The discrimination difficulty caused by this inter-domain distinction is another important factor affecting the selection of $\gamma$.

% \textbf{Model Confidence.}

\textbf{Annealing $\gamma$.} To achieve better generalization performance and avoid troublesome parametric searches, we propose to gradually decrease $\gamma$ as training progresses, specifically, $\gamma=1.0-\frac{M-1}{M}\frac{t}{T}$, where $t,T$ are the current training step and the total training steps. \figurename~\ref{fig:decap_gamma} shows that annealing $\gamma$ achieves a comparable or even better generalization performance than fine-grained searched $\gamma$.

\begin{figure}[h]
\centering
% \subfigure[Classification loss.]{
% \begin{minipage}[t]{0.33\linewidth}
% \centering
% \includegraphics[width=\linewidth]{}
% %\caption{fig1}
% \end{minipage}%
% }%
% \subfigure[Avg acc of source domains.]{
% \begin{minipage}[t]{0.33\linewidth}
% \centering
% \includegraphics[width=\linewidth]{}
% %\caption{fig2}
% \end{minipage}%
% }%
% \subfigure[Acc on the target domain.]{
% \begin{minipage}[t]{0.33\linewidth}
% \centering
% \includegraphics[width=\linewidth]{}
% %\caption{fig2}
% \end{minipage}
% }

\subfigure[Classification loss.]{
\begin{minipage}[t]{0.33\linewidth}
\centering
\includegraphics[width=\linewidth]{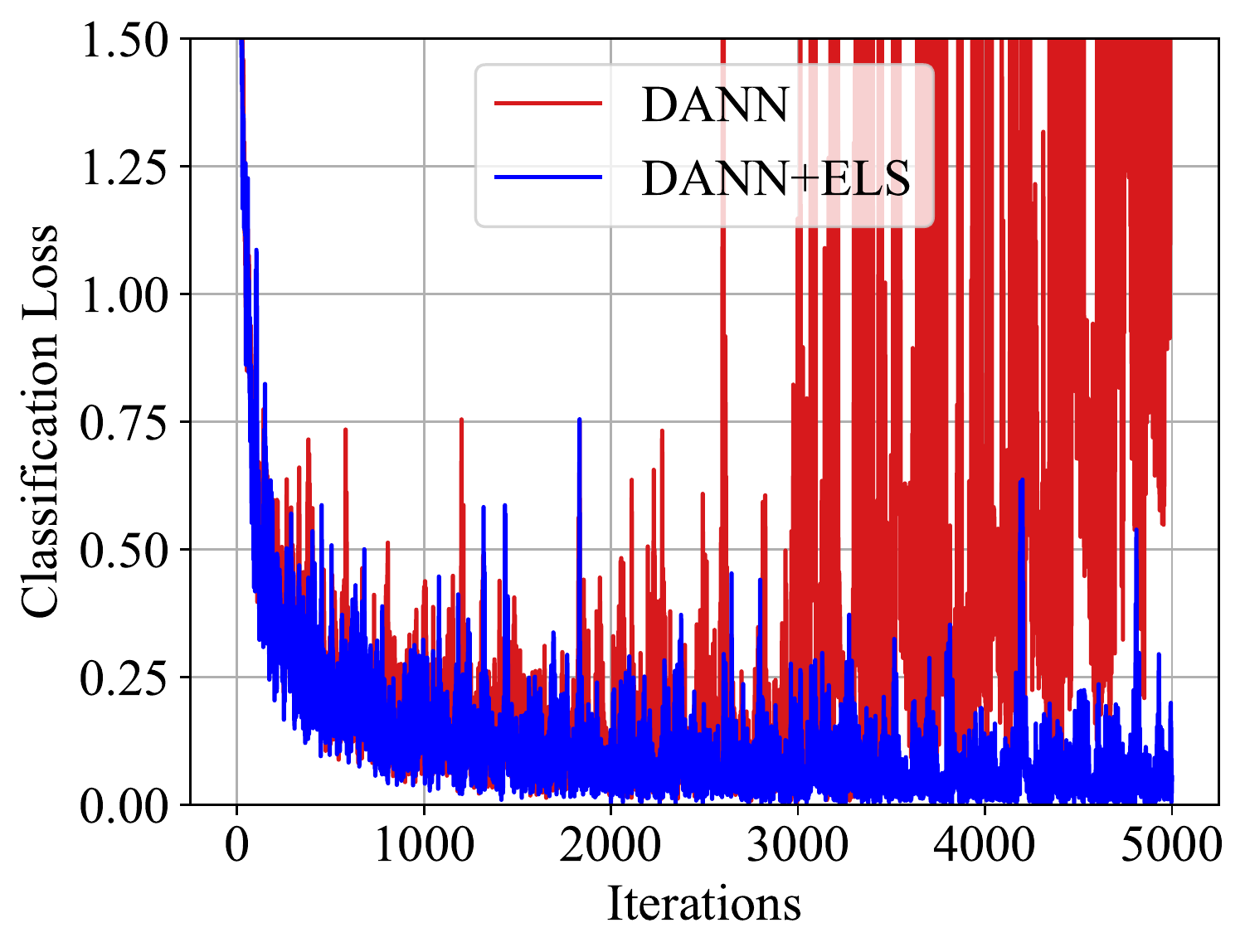}
%\caption{fig1}
\end{minipage}%
}%
\subfigure[Avg accuracy of source domains.]{
\begin{minipage}[t]{0.33\linewidth}
\centering
\includegraphics[width=\linewidth]{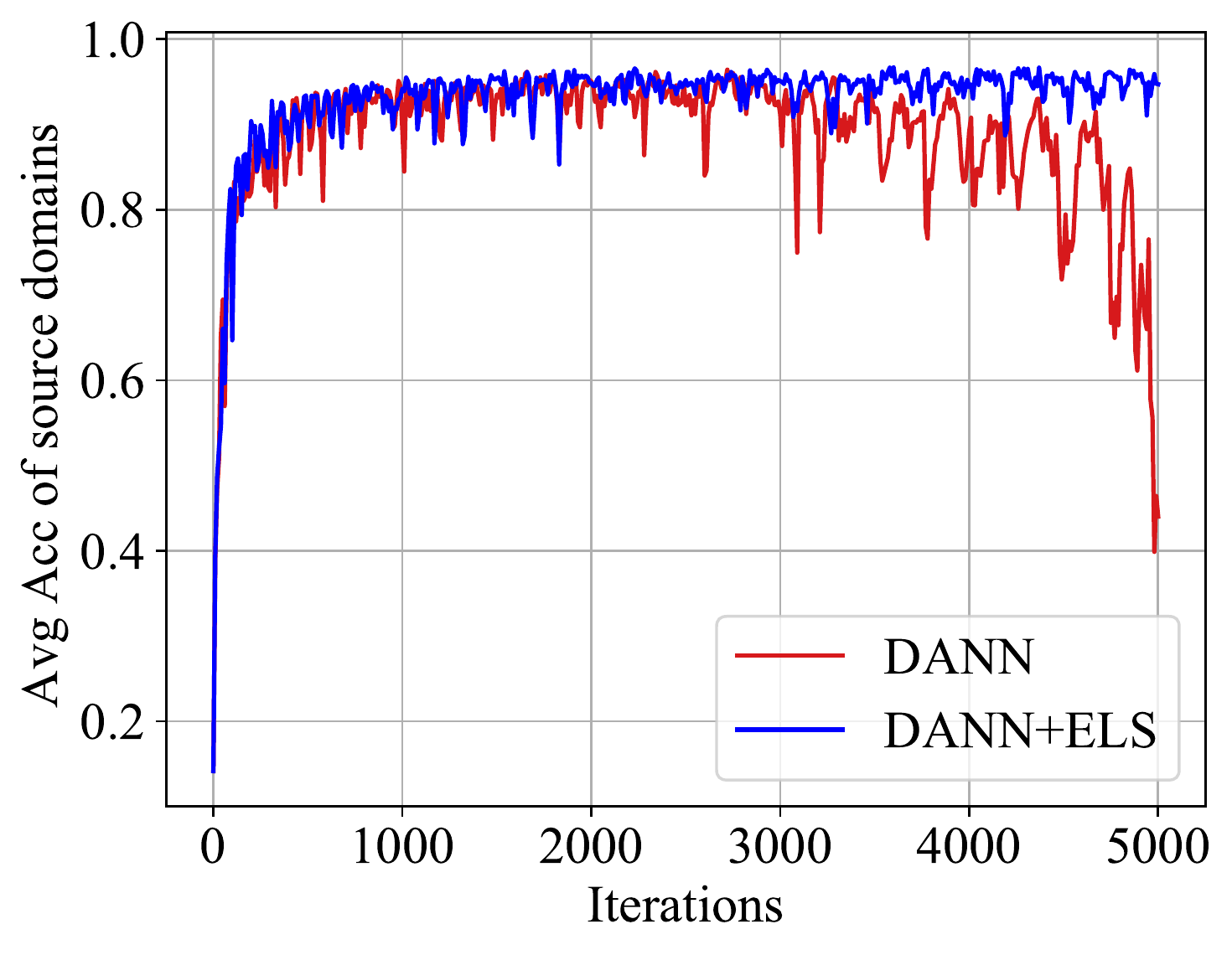}
%\caption{fig2}
\end{minipage}%
}%
\subfigure[Acc on the target domain.]{
\begin{minipage}[t]{0.33\linewidth}
\centering
\includegraphics[width=\linewidth]{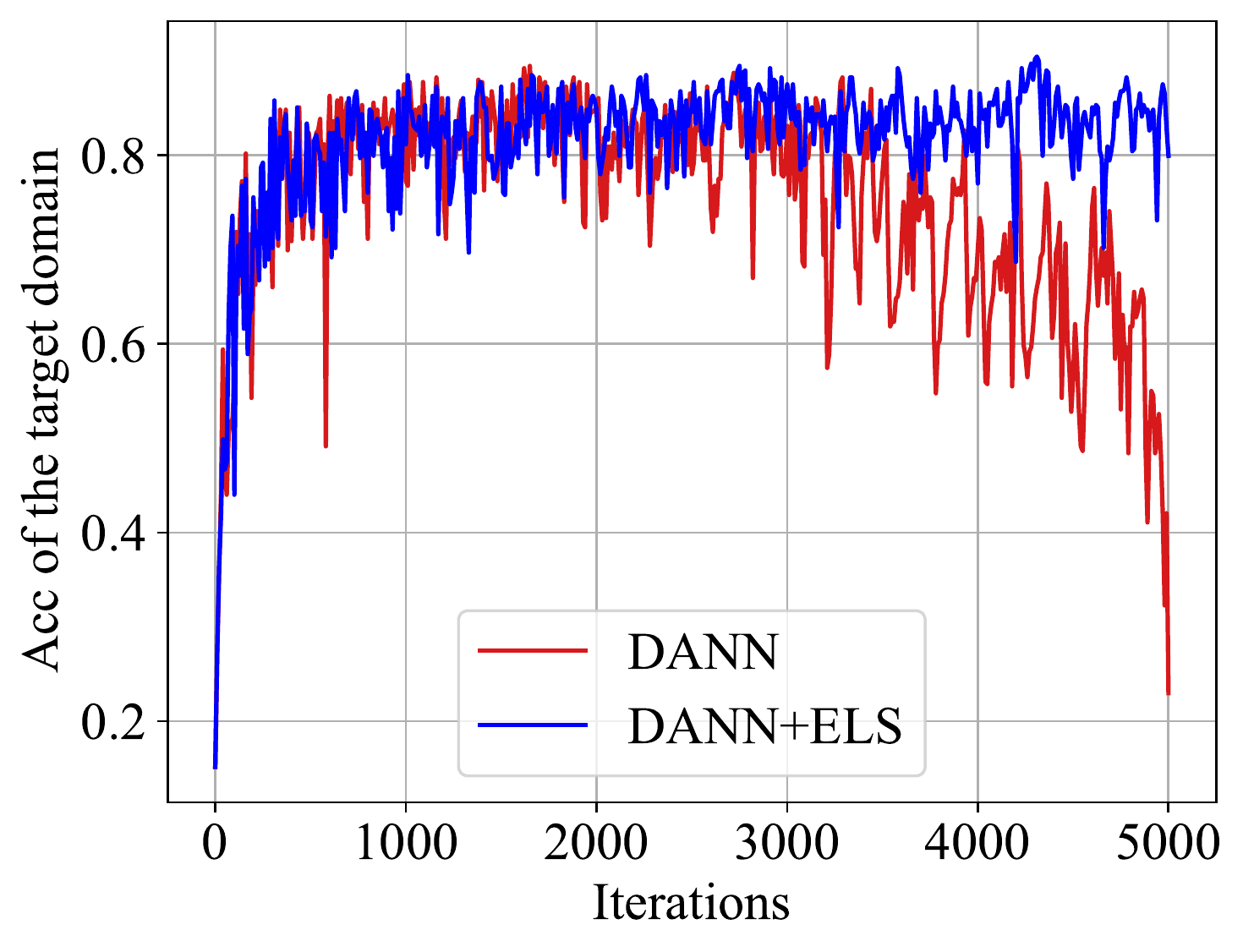}
%\caption{fig2}
\end{minipage}
}
%\caption{fig2}
\centering
\caption{\textbf{Training statistics on PACS datasets.} Alternating GD with $n_d=5,n_e=1$ is used. All other parameters setting are the same and only on the default hyperparameters
and without the fine-grained parametric search.} \label{fig:pacs_training}
\vspace{-0.2cm}
\end{figure}
\vspace{-0.2cm}
\textbf{Empirical Verification of our theoretical results.} We use the PACS dataset as an example to empirically support our theoretical results, namely verifying the benefits to convergence, training stability, and generalization results. In \figurename~\ref{fig:pacs_training}, 'A' is set as the target domain and other domains as sources. Considering \ls, we can see that in all the experimental results, \abbr with appropriate $\gamma$ attains high training stability, faster and stable convergence, and better performance compared to DANN. In comparison, the training dynamics of native DANN is highly oscillatory, especially in the middle and late stages of training.

% \textbf{Empirical Verification of our theoretical results.} We use PACS dataset as an example to empirically support our theoretical results, namely verifying the benefits to convergence, training stability, and generalization results. In \figurename~\ref{fig:pacs_training}, 'A' is set as the target domain and other domains as sources. Compare to \figurename~\ref{fig:pacs_training}(a,b,c), which used simultaneous GD and shown easily to divergent, \figurename~\ref{fig:pacs_training}(d,e,f) uses alternating GD for optimization and is shown able to converge more stably, which verify the importance of alternating GD training. Considering \ls, we can see that in all the experimental results, \abbr with appropriate $\gamma$ attains high training stability, faster and stable convergence, and better performance compared to DANN. Despite this, our experiments in Section~\ref{sec:exps} show noticeable performance gains in several tasks, benchmarks, and network architectures. 

\vspace{-2mm}
\section{Related Works}
\vspace{-0.2cm}

% \noindent\textbf{Environment Inference for Invariant Learning}

\noindent\textbf{Label Smoothing and Analysis} is a technique from the 1980s, and independently re-discovered by~\citep{szegedy2016rethinking}. Recently, label smoothing is shown to reduce the vulnerability of neural networks~\citep{warde201611} and reduce the risk of adversarial examples in GANs~\citep{salimans2016improved}. Several works seek to theoretically or empirically study the effect of label smoothing.~\citep{chen2020investigation} focus on studying the minimizer of the training error and finding the optimal smoothing parameter.~\citep{xu2020towards} analyzes the convergence behaviors of stochastic gradient descent with label smoothing. However, as far as we know, no study focuses on the effect of label smoothing on the convergence speed and training stability of DAT.

\noindent\textbf{Domain Adversarial Training}~\citep{ganin2016domain} using a domain discriminator to distinguish the source and target domains and the gradients of the discriminator to the encoder are reversed by the Gradient Reversal layer (GRL), which achieves the goal of learning domain invariant features.~\citep{schoenauer2019multi,zhao2018adversarial} extend generalization bounds in DANN~\citep{ganin2016domain} to multi-source domains and propose multisource domain adversarial networks.~\citep{acuna2022domain} interprets the DAT framework through the lens of game theory and proposes to replace gradient descent with high-order ODE solvers.~\citep{rangwani2022closer} finds that enforcing the smoothness of the classifier leads to better generalization on the target domain and presents Smooth Domain Adversarial Training (SDAT). The proposed method is orthogonal to existing DAT methods and yields excellent optimization properties theoretically and empirically.

For space limit, the related works about domain adaptation, domain generalization, and adversarial Training in GANs are in the appendix.
% However, all of them need $M$ domain discriminators which is unacceptable when $M$ is large (\eg $120,084$ domains in the OGB dataset).

\vspace{-0.2cm}
\section{Conclusion}
In this work, we propose a simple approach, \ie \ls, to optimize the training process of DAT methods from an environment label design perspective, which is orthogonal to most existing DAT methods. Incorporating \ls into DAT methods is empirically and theoretically shown to be capable of improving robustness to noisy environment labels, converge faster, attain more stable training and better generalization performance. As far as we know, our work takes a first step towards utilizing and understanding label smoothing for environmental labels.  Although \ls is designed for DAT methods, reducing the effect of environment label noise and a soft environment partition may benefit all DG/DA methods, which is a promising future direction.

% \section{Acknowledgement}

\bibliography{iclr2023_conference}
\bibliographystyle{iclr2023/iclr2023_conference}

\clearpage
\newpage
\appendix
\begin{center}
{\LARGE \textbf{Appendix}}
\end{center}

{
  \hypersetup{hidelinks}
  \tableofcontents
  %\parttoc
  \noindent\hrulefill
}

\addtocontents{toc}{\protect\setcounter{tocdepth}{2}} 
\section{Proofs of Theoretical Statements}

\begin{table}[b]
\caption{Notations.}\label{tab:notations}
\begin{tabular}{ll}
\toprule
\multicolumn{1}{c}{Symbol} & \multicolumn{1}{c}{Description}                                 \\\midrule
$\D_S,\D_T,\D_i$           & Distributions for source domain, target domain, and domain $i$. \\
$\hat{\D}_S,\hat{\D}_T,\hat{\D}_i$           & Empirical distributions for source domain, target domain, and domain $i$. \\
$p_s,p_t,p_i$           & Density functions for source domain, target domain, and domain $i$. \\
$\x_s,\x_t,\x_i$           & Data samples from source domain, target domain, and domain $i$. \\
$\D^z_S,\D^z_T,\D^z_i$     & \begin{tabular}[c]{@{}l@{}}Feature distributions of $\D_S,\D_T,\D_i$ respectively,\\ which is also termed $g\circ\D_S,g\circ\D_T,g\circ\D_i$.\end{tabular}\\
$p^z_s,p^z_t,p^z_i$           & Density functions for $\D^z_S,\D^z_T,\D^z_i$ respectively. \\
$\z_s,\z_t,\z_i$           & Data samples from $\D^z_S,\D^z_T,\D^z_i$. \\
$\mathcal{H},\hat{\mathcal{H}},\mathcal{G}$ & Support sets for hypothesis, discriminator, and feature encoder.\\
$h,\hat{h},\hat{h}^*,g$ & Hypothesis, discriminator, the optimal discriminator, and feature encoder.\\
$M,n_i$ & Number of training distributions, number of data samples in $\D_i$.\\
$\gamma$ & Hyper-parameter for the environment label smoothing.\\
$d_{\mathcal{H}},\hat{d}_{\mathcal{H}}$ & $\mathcal{H}$-divergence and Empirical $\mathcal{H}$-divergence.\\
\bottomrule
\end{tabular}
\end{table}
The commonly used notations and their corresponding descriptions are concluded in Table~\ref{tab:notations}.

\subsection{Connect Environment Label Smoothing to JS Divergence Minimization}\label{sec:theo_js}
To complete the proofs, we begin by introducing some necessary definitions and assumptions. 

\begin{definition}
($\mathcal{H}$-divergence~\citep{2006Analysis}). Given two domain distributions $\mathcal{D}_S,\mathcal{D}_T$ over $X$, and a hypothesis class $\mathcal{H}$, the \textit{$\mathcal{H}$-divergence} between $\mathcal{D}_S,\mathcal{D}_T$ is
\begin{equation}
    d_{\mathcal{H}}(\D_S,\D_T)=2\sup_{h\in\mathcal{H}}\left\mid\E_{\x\sim\D_S}[h(\x)=1]-\E_{\x\sim\D_T}[h(\x)=1]\right\mid
\end{equation}
%\nonumber
\label{define1}
\end{definition}

\begin{definition}
(Empirical $\mathcal{H}$-divergence~\citep{2006Analysis}.)
For an symmetric hypothesis class $\mathcal{H}$, one can compute the \textit{empirical $\mathcal{H}$-divergence} between two empirical distributions $\hat{\D}_S$ and $\hat{\D}_T$ by computing
\begin{equation}
    \hat{d}_{\mathcal{H}}(\hat{\D}_S,\hat{\D}_T)=2\left(1-\min_{h\in\mathcal{H}}\left[\frac{1}{m}\sum_{i=1}^mI[h(\x_i)=0]+\frac{1}{n}\sum_{i=1}^nI[h(\x_i)=1]\right]\right),\label{equ:h_divergence}
\end{equation}
where $m,n$ is the number of data samples of $\hat{\D}_S$ and $\hat{\D}_T$ respectively and $I[a]$ is the indicator function which is 1 if predicate $a$ is true, and 0 otherwise.
\label{define2}
\end{definition}

Vanilla DANN estimating the ``min'' part of \myref{equ:h_divergence} by a domain discriminator, that models the probability that a given input is from the source domain or the target domain. Specially, let the hypothesis $h$ be the composition of $h=\hat{h}\circ g$, where $\hat{h}\in\hat{\mathcal{H}}$ is a additional hypothesis and $g\in\mathcal{G}$ pushes forward the data samples to a representation space $\mathcal{Z}$. DANN~\citep{2006Analysis} seeks to approximate the $\mathcal{H}$-divergence of \myref{equ:h_divergence} by
\begin{equation}
    \max_{\hat{h}\in\hat{\mathcal{H}}} d_{\hat{h},g}(\D_S,\D_T)=\max_{\hat{h}\in\hat{\mathcal{H}}} \E_{\x_s\sim \D_S}\log \hat{h}\circ g(\x_s)+\E_{\x_t\sim\D_T} \log\left(1-\hat{h}\circ g(\x_t)\right),
    \label{equ:dann_ori}
\end{equation}
where the sigmoid activate function is ignored for simplicity, $\hat{h}\circ g(\x)$ is the prediction probability that $\x$ is belonged to $\D_S$ and $1-\hat{h}\circ g(\x)$ is the prediction probability that $\x$ is belonged to $\D_T$. Applying environment label smoothing, the target can be reformulated to 
\begin{equation}
\begin{aligned}
\max_{\hat{h}\in\hat{\mathcal{H}}} d_{\hat{h},g,\gamma}(\D_S,\D_T)=\max_{\hat{h}\in\hat{\mathcal{H}}} \E_{\x_s\sim \D_S}&\left[\gamma\log \hat{h}\circ g(\x_s)+(1-\gamma)\log\left(1-\hat{h}\circ g(\x_s)\right) \right] + \\
&\E_{\x_t\sim \D_T}\left[(1-\gamma)\log \hat{h}\circ g(\x_t)+\gamma\log\left(1-\hat{h}\circ g(\x_t)\right) \right]
\end{aligned}\label{equ:dann_ls}
\end{equation}
When $\gamma\in\{0,1\}$, \myref{equ:dann_ls} is equal to \myref{equ:dann_ori} and no environment label smoothing is applied. Then we prove the proposition~\ref{prop1}

\noindent\textbf{Proposition \ref{prop1}.} \textit{Suppose $\hat{h}$ the optimal domain classifier with no constraint and mixed distributions $\left\{         %左括号
  \begin{array}{c} 
    \D_{S'}=\gamma\D_S+(1-\gamma)\D_T\\ 
    \D_{T'}=\gamma\D_T+(1-\gamma)\D_S\\ 
  \end{array}\right.$ with hyper-parameter $\gamma$, then $\max_{\hat{h}\in\hat{\mathcal{H}}} d_{\hat{h},g,\gamma}(\D_S,\D_T)=2D_{JS}(\D_{S'}||\D_{T'})-2\log2$, where $D_{JS}$ is the Jensen-Shanon (JS) divergence.}

\begin{proof}
Denote the injected source/target density as $p_s^z:=g\circ  p_s,p_t^z:=g\circ  p_t$, where $p_s,p_t$ is the density of $\D_S,\D_T$ respectively.
We can rewrite \myref{equ:dann_ls} as:
\begin{equation}
\begin{aligned}
d_{\hat{h},g,\gamma}(\D_S,\D_T)=\int_{\mathcal{Z}}p^z_s(\z)\log \left[\gamma\log \hat{h}(\z)+(1-\gamma)\log\left(1-\hat{h}(\z)\right) \right]+\\
p_t^z(\z) \left[(1-\gamma)\log \hat{h}(\z)+\gamma\log\left(1-\hat{h}(\z)\right) \right]
\end{aligned}
\end{equation}
We first take derivatives and find the optimal $\hat{h}^*$:
\begin{equation}
\begin{aligned}
\frac{\partial d_{\hat{h},g,\gamma}(\D_S,\D_T)}{\partial \hat{h}(\z)}&=p^z_s(\z) \left[\gamma\frac{1}{\hat{h}(\z)}+(1-\gamma)\frac{-1}{1-\hat{h}(\z)} \right]+p_t^z(\z) \left[(1-\gamma)\log \frac{1}{\hat{h}(\z)}+\gamma\frac{-1}{1-\hat{h}(\z)} \right]=0\\
&\Rightarrow p_s^z(\z)\left[\gamma(1-\hat{h}(\z))-(1-\gamma)\hat{h}(\z)\right]]+p_t^z(\z)\left[(1-\gamma)(1-\hat{h}(\z))-\gamma\hat{h}(\z)\right]=0\\
&\Rightarrow p_s^z(\z)\left[\gamma-\hat{h}(\z)\right]+p_t^z(\z)\left[1-\gamma-\hat{h}(\z)\right]=0\\
&\Rightarrow \hat{h}^*(\z)=\frac{p_t^z(\z)+\gamma(p_s^z(\z)-p_t^z(\z))}{p_s^z(\z)+p_t^z(\z)}
\end{aligned}\label{equ:opt_h}
\end{equation}
For simplicity, we use $p_s,p_t$ denote $p_s^z(\z),p_t^z(\z)$ respectively and ignore the $\int_{\mathcal{Z}}$. Plugging~\myref{equ:opt_h} into \myref{equ:dann_ls} we can get
\begin{equation}
\begin{aligned}
\max_{\hat{h}\in\hat{\mathcal{H}}} d_{\hat{h},g,\gamma}(\D_S,\D_T)&=\int_{\mathcal{Z}} p_s\left[\gamma\log\left[ \frac{p_t+\gamma(p_s-p_t)}{p_s+p_t}\right] + (1-\gamma)\log\left[\frac{p_s+\gamma(p_t-p_s)}{p_s+p_t}\right]\right]\\
&\quad\quad\quad\quad\quad\quad\quad\quad\quad\quad+ p_t\left[(1-\gamma)\log\left[ \frac{p_t+\gamma(p_s-p_t)}{p_s+p_t}\right] + \gamma\log\left[\frac{p_s+\gamma(p_t-p_s)}{p_s+p_t}\right]\right] d_\z\\
&=\int_{\mathcal{Z}} \underbrace{p_s\log\frac{p_s+\gamma(p_t-p_s)}{p_s+p_t}+p_t\log\frac{p_t+\gamma(p_s-p_t)}{p_s+p_t}}_{\rm \circled{\rm 1}}+\\
&\quad\quad\quad\quad\quad\quad\quad\quad\quad\quad\quad\quad\underbrace{p_s\gamma\log\frac{p_t+\gamma(p_s-p_t)}{p_s+\gamma(p_t-p_s)}+p_t\gamma\frac{p_s+\gamma(p_t-p_s)}{p_t+\gamma(p_s-p_t)}}_{\rm \circled{\rm 2}} d_\z
% & = \int_{\z\sim g(\x_s); \x_s\sim\D_S} p_s\left[\frac{(2\gamma-2\gamma)(p_t-p_s)+p_s}{p_s+p_t}\right]
% \\
% &\quad\quad\quad\quad\quad+\int_{\z\sim g(\x_t); \x_t\sim\D_T} p_t\left[\frac{(2\gamma-2\gamma)(p_t-p_s)+p_t}{p_s+p_t}\right]
\end{aligned}\label{equ:dann_ls_1}
\end{equation}
Let $\left\{         %左括号
  \begin{array}{c} 
    p_{s'}=p_s+(1-\gamma)p_t\\ 
    p_{t'}=p_t+(1-\gamma)p_s\\ 
  \end{array}\right.$ 
 two distribution densities that are the convex combinations of $p_s,p_t$, we have 
  $\left\{         %左括号
  \begin{array}{c} 
    p_{s}=\frac{\gamma p_{s'}+(\gamma-1)p_{t'}}{2\gamma-1}\\ 
    p_{t}=\frac{\gamma p_{t'}+(\gamma-1)p_{s'}}{2\gamma-1}\\ 
  \end{array}\right.$, and $p_{s'}+p_{t'}=p_s+p_t$. 
Then $\rm \circled{\rm 1}$ in \myref{equ:dann_ls_1} can be rearranged to
\begin{equation}
\begin{aligned}
&\frac{\gamma}{2\gamma-1}\left(p_{s'}\log\frac{p_{t'}}{p_{s'}+p_{t'}} + p_{t'}\log\frac{p_{s'}}{p_{s'}+p_{t'}}\right)+\frac{\gamma-1}{2\gamma-1}\left(p_{t'}\log\frac{p_{t'}}{p_{s'}+p_{t'}}+p_{s'}\log\frac{p_{s'}}{p_{s'}+p_{t'}}\right)\\
&=\frac{\gamma}{2\gamma-1}\left(p_{s'}\log\frac{p_{t'}}{p_{s'}+p_{t'}}+p_{s'}\log\frac{p_{s'}}{p_{t'}}-p_{s'}\log\frac{p_{s'}}{p_{t'}} + p_{t'}\log\frac{p_{s'}}{p_{s'}+p_{t'}}\right)+\frac{\gamma-1}{2\gamma-1}\left(p_{t'}\log\frac{p_{t'}}{p_{s'}+p_{t'}}+p_{s'}\log\frac{p_{s'}}{p_{s'}+p_{t'}}\right)\\
&=\left(p_{t'}\log\frac{p_{t'}}{p_{s'}+p_{t'}}+p_{s'}\log\frac{p_{s'}}{p_{s'}+p_{t'}}\right)-\frac{\gamma}{2\gamma-1}\left(p_{s'}\log\frac{p_{s'}}{p_{t'}} +p_{t'}\log\frac{p_{t'}}{p_{s'}} \right)
\\
&= 2\frac{1}{2}\left(p_{t'}\log\frac{2p_{t'}}{p_{s'}+p_{t'}}+p_{s'}\log\frac{2p_{s'}}{p_{s'}+p_{t'}}-2\log2\right)-\frac{\gamma}{2\gamma-1}\left(p_{s'}\log\frac{p_{s'}}{p_{t'}} +p_{t'}\log\frac{p_{t'}}{p_{s'}} \right)\\
&=2D_{JS}(\D_{S'}||\D_{T'})-2\log2-\frac{\gamma}{2\gamma-1}\left(p_{s'}-p_{t'}\right)\log\frac{p_{s'}}{p_{t'}}
\end{aligned}%\nonumber
\end{equation}
$\rm \circled{\rm 2}$ in \myref{equ:dann_ls_1} can be rearranged to
\begin{equation}
\begin{aligned}
&\gamma\left(p_s\log\frac{p_{s'}}{p_{t'}}+p_t\log\frac{p_{t'}}{p_{s'}}\right)\\
&=\gamma\log\frac{p_{s'}}{p_{t'}}\left(\frac{\gamma p_{s'}+(\gamma-1)p_{t'}}{2\gamma-1}-\frac{\gamma p_{t'}+(\gamma-1)p_{s'}}{2\gamma-1}\right)\\
&=\frac{\gamma}{2\gamma-1}(p_{s'}-p_{t'})\log\frac{p_{s'}}{p_{t'}}
\end{aligned}%\nonumber
\end{equation}
By plugging the rearranged $\rm \circled{\rm 1}$ and $\rm \circled{\rm 2}$ into \myref{equ:dann_ls_1}, we get
\begin{equation}
\max_{\hat{h}\in\hat{\mathcal{H}}} d_{\hat{h},g,\gamma}(\D_S,\D_T)=2D_{JS}(\D_{S'}||\D_{T'})-2\log2%\nonumber
\end{equation}
% Compared to Proposition 2 in~\citep{acuna2021f} that adversarial training in DANN is equal to $\max_{\hat{h}\in\hat{\mathcal{H}}} d_{s,t}(\hat{h},g)=2D_{JS}(\D_S||\D_T)-2\log2$. The only difference here is the mixed distributions $\D_{S'},\D_{T'}$, which enables more flexible control on divergence minimization. For example, when $\gamma=1$, $p_{s'}=p_s,p_{t'}=p_t$ which is the same as the original adversarial training; when $\gamma=0.5$, $p_{s'}=p_{t'}=0.5(p_s+p_t)$ and $\max_{\hat{h}\in\hat{\mathcal{H}}} d_{\hat{h},g,\gamma}(\D_S,\D_T)=-2\log2$, which is a constant value, meaning that $d_{\hat{h},g,\gamma}(\D_S,\D_T)$ will not supply gradients during training and the training process will convergence like ERM. In other words, $\gamma$ controls the tradeoff between algorithm convergence and adversarial divergence minimization.
\end{proof}

\subsection{Connect One-sided Environment Label Smoothing to JS Divergence Minimization}\label{sec:theo_gan_js}
\begin{prop}
Given two domain distributions $\mathcal{D}_S,\mathcal{D}_T$ over $X$, where $\mathcal{D}_S$ is the read data distribution and $\mathcal{D}_T$ is the generated data distribution. The cost used for the discriminator is:
\begin{equation}
    \max_{h\in\mathcal{H}} d_{h}(\D_S,\D_T)=\max_{h\in{\mathcal{H}}} \E_{\x_s\sim \D_S}\log h(\x_s)+\E_{\x_t\sim\D_T} \log\left(1-h(\x_t)\right),%\nonumber
\end{equation}
where $h\in\mathcal{H}:\mathcal{X}\rightarrow[0,1]$. Suppose ${h}\in{\mathcal{H}}$ the optimal discriminator with no constraint and mixed distributions $\left\{         %左括号
  \begin{array}{l} 
    \D_{S'}=\gamma\D_S\\ 
    \D_{T'}=\D_T+(1-\gamma)\D_S\\ 
  \end{array}\right.$ with hyper-parameter $\gamma$. Then to minimize domain divergence by adversarial training with \textbf{one-sided environment label smoothing} is equal to minimize $2D_{JS}(\D_{S'}||\D_{T'})-2\log2$, where $D_{JS}$ is the Jensen-Shanon (JS) divergence.
  \label{prop2}
\end{prop}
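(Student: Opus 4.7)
The plan is to mirror the derivation of Proposition~\ref{prop1}, but to exploit the fact that one-sided smoothing only perturbs the source term in the adversarial cost, which (as I will show) leads to a considerably simpler algebraic reduction than the two-sided case. Concretely, denote the source and target densities by $p_s,p_t$, and write the one-sided smoothed objective as
\begin{equation*}
d_{h,\gamma}(\D_S,\D_T)=\int \Bigl[\,p_s(\x)\bigl(\gamma\log h(\x)+(1-\gamma)\log(1-h(\x))\bigr)+p_t(\x)\log(1-h(\x))\Bigr]\,d\x.
\end{equation*}

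First I would take the pointwise derivative $\partial/\partial h(\x)$ under the unconstrained-discriminator assumption and solve the resulting first-order condition
\begin{equation*}
\gamma\,p_s(1-h)-(1-\gamma)\,p_s\,h-p_t\,h=0,
\end{equation*}
which gives the closed form $h^*(\x)=\tfrac{\gamma p_s(\x)}{p_s(\x)+p_t(\x)}$ and $1-h^*(\x)=\tfrac{(1-\gamma)p_s(\x)+p_t(\x)}{p_s(\x)+p_t(\x)}$. Setting $p_{s'}:=\gamma p_s$ and $p_{t'}:=(1-\gamma)p_s+p_t$, these simplify to $h^*=p_{s'}/(p_{s'}+p_{t'})$ and $1-h^*=p_{t'}/(p_{s'}+p_{t'})$, and one checks immediately that $p_{s'}+p_{t'}=p_s+p_t$.

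Next I would substitute $h^*$ back into the objective. The key observation that makes the algebra clean is that the coefficient of $\log\tfrac{p_{t'}}{p_{s'}+p_{t'}}$ is exactly $(1-\gamma)p_s+p_t=p_{t'}$, while the coefficient of $\log\tfrac{p_{s'}}{p_{s'}+p_{t'}}$ is $\gamma p_s=p_{s'}$. Hence
\begin{equation*}
\max_h d_{h,\gamma}(\D_S,\D_T)=\int\Bigl[p_{s'}\log\tfrac{p_{s'}}{p_{s'}+p_{t'}}+p_{t'}\log\tfrac{p_{t'}}{p_{s'}+p_{t'}}\Bigr]d\x.
\end{equation*}
Inserting $\log 2-\log 2$ into each logarithm and using $\int(p_{s'}+p_{t'})d\x=2$, the right-hand side becomes $2D_{JS}(\D_{S'}\Vert\D_{T'})-2\log 2$, which is the stated identity.

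The main subtlety is that $p_{s'}$ and $p_{t'}$ are not probability densities ($p_{s'}$ integrates to $\gamma$ and $p_{t'}$ to $2-\gamma$), so $D_{JS}$ here must be read in the generalized sense already used in Proposition~\ref{prop1}, where the mixed ``distributions'' in the JS expression are convex combinations of $\D_S$ and $\D_T$ rescaled to have total mass $1$; once that bookkeeping is fixed the identity goes through unchanged. I expect no further obstacle, since unlike the symmetric case the optimal $h^*$ already lines up with the target mixture $p_{s'}/(p_{s'}+p_{t'})$ and no change-of-variables to invert $(p_{s'},p_{t'})\!\mapsto\!(p_s,p_t)$ is needed.
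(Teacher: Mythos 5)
Your proof is correct and follows essentially the same route as the paper's: solve the first-order condition to get $h^*=\gamma p_s/(p_s+p_t)$, substitute back, and regroup the coefficients into $p_{s'}=\gamma p_s$ and $p_{t'}=p_t+(1-\gamma)p_s$ to read off $2D_{JS}(\D_{S'}\Vert\D_{T'})-2\log 2$. Your closing remark that $\D_{S'},\D_{T'}$ are unnormalized (mass $\gamma$ and $2-\gamma$), so the JS expression must be read in a generalized sense with $\int(p_{s'}+p_{t'})\,d\x=2$, is a fair bookkeeping point that the paper's own proof glosses over.
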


 \begin{proof}
Applying \textit{one-sided environment label smoothing}, the target can be reformulated to 
\begin{equation}
\begin{aligned}
\max_{h\in\mathcal{H}} d_{h,\gamma}(\D_S,\D_T)&=\max_{h\in\mathcal{H}} \E_{\x_s\sim \D_S}\left[\gamma\log h(\x_s)+(1-\gamma)\log\left(1-h(\x_s)\right) \right] + \E_{\x_t\sim \D_T}\left[\log\left(1-h(\x_t)\right) \right]\\
& = \max_{h\in\mathcal{H}}\int_{\mathcal{X}}p_s(\x)\log\left[\gamma\log h(\x)+(1-\gamma)\log(1-h(\x))\right]+p_t(\x)\log(1-h(\x))
\end{aligned}%\nonumber
\end{equation}
where $\gamma$ is a value slightly less than one, $p_s(\x),p_t(\x)$ is the density of $\D_S,\D_T$ respectively. By taking derivatives and finding the optimal $h$ we can get $h^*=\frac{\gamma p_s(\x)}{p_s(\x)+p_t(\x)}$. Plugging the optimal $h^*$ into the original target we can get:
\begin{equation}
\begin{aligned}
&=\int_{\mathcal{X}} p_s(\x)\left[\gamma\log \frac{\gamma p_s(\x)}{p_s(\x)+p_t(\x)}+(1-\gamma)\log\frac{p_t(\x)+(1-\gamma)p_s(\x)}{p_s(\x)+p_t(\x)} \right]+p_t(\x)\log\frac{p_t(\x)+(1-\gamma)p_s(\x)}{p_s(\x)+p_t(\x)}d_{\x}\\
&=\int_{\mathcal{X}}p_s(\x)\gamma\log\frac{\gamma p_s(\x)}{p_s(\x)+p_t(\x)}+\left[p_s(1-\gamma)+p_t(\x)\right]\log\frac{p_t(\x)+(1-\gamma)p_s(\x)}{p_s(\x)+p_t(\x)}d_{\x}\\
&=\int_{\mathcal{X}}p_{s'}(\x)\log\frac{p_{s'}(\x)}{p_{s'}(\x)+p_{t'}(\x)}+p_{t'}(\x)\log\frac{p_{t'}(\x)}{p_{s'}(\x)+p_{t'}(\x)}d_{\x}\\
&=2D_{JS}(\D_{S'}||\D_{T'})-2\log2,
\end{aligned}%\nonumber
\end{equation}
where $\left\{         %左括号
  \begin{array}{l} 
    \D_{S'}=\gamma\D_S\\ 
    \D_{T'}=\D_T+(1-\gamma)\D_S\\ 
  \end{array}\right.$ are two mixed distributions and $\left\{         %左括号
  \begin{array}{l} 
    p_{s'}=\gamma p_s\\ 
    p_{t'}=p_t+(1-\gamma)p_s\\ 
  \end{array}\right.$ are their densities. 
\end{proof}
Our result supplies an explanation to ``\textit{why GANs only use one-sided label smoothing rather than native label smoothing}''. That is, if the density of real data in a region is near zero $p_s(\x)\rightarrow 0$, native environment label smoothing will be dominated by only the generated sample densities because $\left\{         %左括号
  \begin{array}{l} 
    p_{s'}=p_t+\gamma(p_s-p_t)\approx (1-\gamma) p_t\\ 
    p_{t'}=p_s+\gamma(p_t-p_s)\approx \gamma p_t\\ 
  \end{array}\right.$.
Namely, the discriminator will not align the distribution between generated samples and real samples, but enforce the generator to produce samples that follow the fake mode $\D_T$. In contrast, one-sided label smoothing reserves the real distribution density as far as possible, that is, $p_{s'}=\gamma p_s, p_{t'}\approx \gamma p_t$, which avoids divergence minimization between fake mode to fake mode and relieves model collapse. 

\subsection{Connect Multi-Domain Adversarial Training to KL Divergence Minimization}\label{sec:ms_ls}
\begin{prop}
Given domain distributions $\{\D_i\}_{i=1}^M$ over $X$, and a hypothesis class $\mathcal{H}$. Suppose $\hat{h}\in\hat{\mathcal{H}}$ the optimal discriminator with no constraint and mixed distributions $\D_{Mix}=\sum_{i=1}^M \D_i$, and $\{\D_{i'}=\gamma \D_i+\frac{1-\gamma}{M-1}\sum^M_{j=1;j\neq i}\D\}_{i=1}^M$ with hyper-parameter $\gamma\in[0.5,1]$. Then to minimize domain divergence by adversarial training w/wo \textbf{environment label smoothing} is equal to minimize $\sum_{i=1}^M D_{KL}(\D_i||\D_{Mix})$, and $\sum_{i=1}^M D_{KL}(\D_{i'}||\D_{Mix})$ respectively, where $D_{KL}$ is the Kullback–Leibler (KL) divergence.
\label{prop3}
\end{prop}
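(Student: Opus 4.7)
The plan is to mirror the Lagrange-multiplier calculation of Proposition~\ref{prop1}, adapted to the softmax-constrained multi-class discriminator $\hat{h}=(\hat{h}_1,\dots,\hat{h}_M)$. First I would rewrite the smoothed objective from Equation~(\ref{equ:dann_ls_main}) as an integral over feature space and regroup the terms by discriminator coordinate. The term $\log\hat{h}_j(\z)$ gets contributions from $\x\sim\D_j$ (with weight $\gamma$) and from every $\x\sim\D_i$ with $i\neq j$ (each with weight $(1-\gamma)/(M-1)$), so after swapping the order of summation its total coefficient is exactly $p_{j'}(\z):=\gamma p_j(\z)+\tfrac{1-\gamma}{M-1}\sum_{i\neq j}p_i(\z)$, which is precisely the density of $\D_{j'}$.

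Next I would exploit the pointwise softmax constraint $\sum_j \hat{h}_j(\z)=1$ to maximize $\sum_j p_{j'}(\z)\log\hat{h}_j(\z)$ separately at each $\z$. A standard Lagrange-multiplier computation (this is just cross-entropy maximization with fixed nonnegative weights) yields the closed-form optimizer $\hat{h}_j^\ast(\z)=p_{j'}(\z)/S(\z)$, where $S(\z):=\sum_k p_{k'}(\z)$. The crucial algebraic identity is
\[
S(\z)=\gamma\sum_k p_k(\z)+\frac{1-\gamma}{M-1}\sum_k\bigl(p_{Mix}(\z)-p_k(\z)\bigr)=\gamma p_{Mix}(\z)+(1-\gamma)p_{Mix}(\z)=p_{Mix}(\z),
\]
so $\hat{h}_j^\ast(\z)=p_{j'}(\z)/p_{Mix}(\z)$.

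Substituting $\hat{h}^\ast$ back into the smoothed objective then yields
\[
\max_{\hat{h}\in\hat{\mathcal{H}}} d_{\hat{h},g,\gamma}(\D_1,\dots,\D_M)=\sum_{j=1}^M\int p_{j'}(\z)\log\frac{p_{j'}(\z)}{p_{Mix}(\z)}\,d\z=\sum_{j=1}^M D_{KL}(\D_{j'}\,\|\,\D_{Mix}),
\]
which is the claimed identity in the smoothed case. Specializing to $\gamma=1$ collapses $p_{j'}$ to $p_j$ and recovers $\sum_j D_{KL}(\D_j\|\D_{Mix})$ for the vanilla (unsmoothed) case.

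The main obstacle is the normalization identity $S(\z)=p_{Mix}(\z)$: it is elementary but depends crucially on the smoothing weight being exactly $(1-\gamma)/(M-1)$ rather than any other constant. With a different weight, $S$ would still be a positive linear combination of the $p_k$ but not the unnormalized mixture $p_{Mix}$, so $\hat{h}_j^\ast$ would lose its clean mixture-ratio form and the divergence interpretation would break down. This particular choice of smoothing weight is therefore precisely what pins the divergence-minimization interpretation to the single reference distribution $\D_{Mix}$ and keeps the resulting identity as a clean sum of $M$ KL divergences.
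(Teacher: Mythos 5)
Your proposal is correct and follows essentially the same route as the paper's proof: a pointwise Lagrange-multiplier maximization of the smoothed objective under the constraint $\sum_j\hat{h}_j(\z)=1$, yielding $\hat{h}_j^{*}(\z)=p_{j'}(\z)/p_{Mix}(\z)$, followed by substitution to obtain $\sum_{j} D_{KL}(\D_{j'}\|\D_{Mix})$. The only cosmetic difference is that you regroup the objective by discriminator coordinate before optimizing and recover the unsmoothed case as the $\gamma=1$ specialization, whereas the paper optimizes the terms as written, treats the two cases separately, and performs the index-swap rearrangement after plugging in the optimizer; the underlying computation is identical.
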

\begin{proof}
We restate corresponding notations and definitions as follows. Given $M$ domains $\{\D_i\}_{i=1}^M$. Let the hypothesis $h$ be the composition of $h=\hat{h}\circ g$, where $g\in\mathcal{G}$ pushes forward the data samples to a representation space $\mathcal{Z}$ and the domain discriminator with softmax activation function is defined as $\hat{h}=(\hat{h}_1(\cdot),\dots,\hat{h}_M(\cdot))\in\hat{\mathcal{H}}:\mathcal{Z}\rightarrow[0,1]^M;\sum_{i=1}^M\hat{h}_i(\cdot)=1$. Denote $g\circ\D_i$ the feature distribution of $\D_i$ which is encoded by encoder $g$. The cost used for the discriminator can be defined as:
\begin{equation}
\begin{aligned}
    \max_{\hat{h}\in\hat{\mathcal{H}}} d_{\hat{h}, g}(\D_1,\dots,\D_M)=&\max_{\hat{h}\in{\mathcal{H}}} \E_{\z\sim g\circ  \D_1}\log \hat{h}_1(\z)+\dots+\E_{\z\sim g\circ \D_M} \log\hat{h}_M(\z),\text{s.t. }\sum_{i=1}^M\hat{h}_i(\z)=1
\end{aligned}
\end{equation}
Denote $p_i^z(\z)$ the density of feature distribution $g\circ\D_i$. For simplicity, we ignore $\int_{\mathcal{Z}}$. Applying lagrange multiplier and taking the first derivative with respect to each $\hat{h}_i$, we can get
\begin{equation}
\left\{         %左括号
  \begin{array}{c} 
    \frac{\partial d_{\hat{h},g}}{\partial \hat{h}_1}=p_1^z(\z)\frac{1}{\hat{h}_1(z)}-\lambda=0\\ 
    \vdots\\ 
    \frac{\partial d_{\hat{h},g}}{\partial \hat{h}_M}=p_M^z(\z)\frac{1}{\hat{h}_M(z)}-\lambda=0\\ 
  \end{array}\right.
  \Rightarrow
\left\{         %左括号
  \begin{array}{c} 
    \hat{h}_1(\z)=\frac{p_1^z(\z)}{\lambda}\\ 
    \vdots\\ 
    \hat{h}_M(\z)=\frac{p_M^z(\z)}{\lambda}\\ 
  \end{array}\right.
   \Rightarrow^{\rm \circled{\rm 1}}
\left\{         %左括号
  \begin{array}{c} 
    \hat{h}_1^*(\z)=\frac{p_1^z(\z)}{p_1^z(\z)+\dots+p_M^z(\z)}\\ 
    \vdots\\ 
    \hat{h}_M^*(\z)=\frac{p_M^z(\z)}{p_1^z(\z)+\dots+p_M^z(\z)}\\ 
  \end{array}\right.
  \label{equ:opt_multidomain}
\end{equation}
where $\lambda$ is the lagrange variable and $\rm \circled{\rm 1}$ is because the constraint $\sum_{i=1}^M\hat{h}_i(\z)=1$. Denote $\D_{Mix}=\sum_{i=1}^M \D_i$ is a mixed distribution and $p_{Mix}=\sum_{i=1}^M p_i$ is the density. Then we have

\begin{equation}
\begin{aligned}
\max_{\hat{h}\in\hat{\mathcal{H}}} d_{\hat{h}, g}(\D_1,\dots,\D_M)&=\int_\mathcal{Z} p_1^z(\z)\log \frac{p_1^z(\z)}{p^z_{Mix}(\z)}+p_2^z(\z)\log \frac{p_2^z(\z)}{p^z_{Mix}(\z)}+\cdot+p_M^z(\z)\log \frac{p_M^z(\z)}{p^z_{Mix}(\z)} d_\z\\
&=\sum_{i=1}^M D_{KL}(\D_i||\D_{Mix}),
\end{aligned}%\nonumber
\end{equation}
where $D_{KL}$ is the KL divergence. With \textbf{environment label smoothing}, the target is 
\begin{equation}
\begin{aligned}
\max_{\hat{h}\in\hat{\mathcal{H}}} d_{\hat{h},g,\gamma}(\D_1,\dots,\D_M)=\max_{\hat{h}\in\hat{\mathcal{H}}} \E_{\z\sim g\circ \D_1}&\left[\gamma\log \hat{h}_1(\z)+\frac{(1-\gamma)}{M-1}\sum_{j=1;j\neq1}^M\log\left(\hat{h}_j(\z)\right) \right] + \dots +\\
&\E_{\z\sim g\circ \D_M}\left[\gamma\log \hat{h}_M(\z)+\frac{(1-\gamma)}{M-1}\sum_{j=1;j\neq M}^M\log\left(\hat{h}_j(\z)\right) \right],\text{s.t. }\sum_{i=1}^M\hat{h}_i(\z)=1
\end{aligned}%\nonumber
\end{equation}
Take the same operation as \myref{equ:opt_multidomain} we can get
\begin{equation}
\left\{         %左括号
  \begin{array}{c} 
    \frac{\partial d_{\hat{h},g,\gamma}}{\partial \hat{h}_1}=\gamma p_1^z(\z)\frac{1}{\hat{h}_1(z)}+\frac{1-\gamma}{M-1}\sum_{j=1;j\neq1}^M p_j^z(\z)\frac{1}{\hat{h}_1(z)}-\lambda=0\\ 
    \vdots\\ 
    \frac{\partial d_{\hat{h},g,\gamma}}{\partial \hat{h}_M}=\gamma p_M^z(\z)\frac{1}{\hat{h}_M(z)}+\frac{1-\gamma}{M-1}\sum_{j=1;j\neq M}^M p_j^z(\z)\frac{1}{\hat{h}_M(z)}-\lambda=0\\ 
  \end{array}\right.
  \Rightarrow
\left\{         %左括号
  \begin{array}{c} 
    \hat{h}_1^*(\z)=\frac{\gamma p_1^z(\z)+\frac{1-\gamma}{M-1}\sum_{j=1;j\neq1}^M p_j^z(\z)}{p_1^z(\z)+\dots+p_M^z(\z)}\\ 
    \vdots\\ 
    \hat{h}_M^*(\z)=\frac{\gamma p_M^z(\z)+\frac{1-\gamma}{M-1}\sum^M_{j=1;j\neq M}p_j^z(\z)}{p_1^z(\z)+\dots+p_M^z(\z)}\\
  \end{array}\right.
  \label{equ:opt_multidomain_ls}
\end{equation}
Denote $\{\D_{i'}=\gamma \D_i+\frac{1-\gamma}{M-1}\sum^M_{j=1;j\neq i}\D\}_{i=1}^M$ a set of mixed distributions and $\{p_{i'}(\z)=\gamma p_i^z(\z)+\frac{1-\gamma}{M-1}\sum^M_{j=1;j\neq i}p_j^z(\z)\}_{i=1}^M$ the corresponding densities. Plugging \myref{equ:opt_multidomain_ls} to the target we can get
\begin{equation}
\begin{aligned}
&\sum_{i=1}^M\left[\int_\mathcal{Z}  \gamma p_i^z(z)\log \frac{\gamma p_i^z(\z)+\frac{1-\gamma}{M-1}\sum_{j=1;j\neq i}^M p_j^z(\z)}{p_i^z(\z)+\dots+p_M^z(\z)} + 
\frac{(1-\gamma)}{M-1}\sum_{k=1;k\neq i}^M p_i^z(\z)\log \frac{\gamma p_k^z(\z)+\frac{1-\gamma}{M-1}\sum_{j=1;j\neq i}^M p_j^z(\z)}{p_i^z(\z)+\dots+p_M^z(\z)}  d_\z\right]\\
&=\sum_{i=1}^M\left[\int_\mathcal{Z}  \gamma p_i^z(z)\log \frac{\gamma p_i^z(\z)+\frac{1-\gamma}{M-1}\sum_{j=1;j\neq i}^M p_j^z(\z)}{p_i^z(\z)+\dots+p_M^z(\z)} + 
\frac{(1-\gamma)}{M-1}\sum_{k=1;k\neq i}^M p_k^z(\z)\log \frac{\gamma p_i^z(\z)+\frac{1-\gamma}{M-1}\sum_{j=1;j\neq i}^M p_j^z(\z)}{p_i^z(\z)+\dots+p_M^z(\z)}  d_\z\right]\\
&=\sum_{i=1}^M\left[\int_\mathcal{Z}  \left(\gamma p_i^z(z)+\frac{(1-\gamma)}{M-1}\sum_{k=1;k\neq i}^M p_k^z(\z)\right)\log \frac{\gamma p_i^z(\z)+\frac{1-\gamma}{M-1}\sum_{j=1;j\neq i}^M p_j^z(\z)}{p_i^z(\z)+\dots+p_M^z(\z)} d_\z\right]\\
&=\sum_{i=1}^M D_{KL}(\D_{i'}||\D_{Mix})
\end{aligned}%\nonumber
\end{equation}
\end{proof}

\subsection{Training Stability Brought by Environment Label Smoothing}\label{sec:app_stable}
% We start with two domain settings and then extend the results to multi-domain settings easily. 
Let $\D_S,\D_T$ two distributions and $\D^z_S,\D^z_T$ their induced distributions projected by encoder $g:\mathcal{X}\rightarrow\mathcal{Z}$ over feature space. We first show that if $\D^z_S,\D^z_T$ are disjoint or lie in low dimensional manifolds, there is always a perfect discriminator between them. 

\begin{theo}
(Theorem 2.1. in~\citep{arjovsky2017towards}.) If two distribution $\D^z_S,\D^z_T$ have support contained on two disjoint compact subsets $\mathcal{M}$ and $\mathcal{P}$ respectively, then there is a smooth optimal discriminator  $\hat{h}^*:\mathcal{Z}\rightarrow[0,1]$ that has accuracy $1$ and $\nabla_\z \hat{h}^*(\z)=0$ for all $\z\sim\mathcal{M}\bigcup \mathcal{P}$.
\label{theo:stable1}
\end{theo}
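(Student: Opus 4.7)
The plan is to exploit the disjoint compactness to obtain a strictly positive separation distance and then use smooth mollification to build the discriminator. By compactness of $\mathcal{M}\times\mathcal{P}$ and continuity of the Euclidean distance, $\delta := \inf\{\|\z-\z'\| : \z\in\mathcal{M},\ \z'\in\mathcal{P}\} > 0$, since $\mathcal{M}\cap\mathcal{P}=\emptyset$. This separation is the only geometric input I will need.

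First I would form the open thickening $\mathcal{M}_{\delta/4} = \{\z\in\mathcal{Z} : d(\z,\mathcal{M}) < \delta/4\}$ and its indicator $\mathbf{1}_{\mathcal{M}_{\delta/4}}$. Then I would mollify: let $\rho_\epsilon$ be a standard $C^\infty$ mollifier of support radius $\epsilon < \delta/8$ with $\int \rho_\epsilon = 1$, and define $\hat{h}^* := \rho_\epsilon * \mathbf{1}_{\mathcal{M}_{\delta/4}}$. Classical mollifier theory gives $\hat{h}^* \in C^\infty(\mathcal{Z},[0,1])$.

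Next I would verify the three conclusions in order. For accuracy and vanishing gradient on $\mathcal{M}$: any $\z\in\mathcal{M}$ has the ball $B(\z,\epsilon)\subset \mathcal{M}_\epsilon\subset \mathcal{M}_{\delta/4}$, so the kernel averages the value $1$ only, and the convolution equals $1$ identically on the open set $\{\z : B(\z,\epsilon)\subset \mathcal{M}_{\delta/4}\}$, which contains $\mathcal{M}$. Being locally constant there, $\nabla_\z \hat{h}^*(\z)=0$ for every $\z\in\mathcal{M}$, and the discriminator correctly outputs $1$ on $\mathcal{M}$. For $\mathcal{P}$: any $\z\in\mathcal{P}$ has $d(\z,\mathcal{M})\ge\delta$, so $B(\z,\epsilon)$ lies entirely outside $\mathcal{M}_{\delta/4}$ because $\delta-\epsilon > \delta/4$, hence the convolution equals $0$ on an open neighborhood of $\mathcal{P}$ and again $\nabla_\z \hat{h}^*(\z)=0$ there. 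Labelling $\mathcal{M}$ as the positive class yields accuracy $1$ on $\mathrm{supp}(\D_S^z)\cup\mathrm{supp}(\D_T^z)\subset \mathcal{M}\cup\mathcal{P}$.

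The main obstacle, and the reason I would not use a direct Urysohn-style construction, is that the naive choice $\hat{h}^*(\z)=\varphi(d(\z,\mathcal{M}))$ with a smooth cutoff $\varphi$ only delivers a Lipschitz function, since $d(\cdot,\mathcal{M})$ is generically non-differentiable on $\mathcal{M}$ itself. Even smoothing $\varphi$ ensures that $\hat{h}^*$ attains an extremum on $\mathcal{M}$, which is strictly weaker than the required $\nabla_\z \hat{h}^* = 0$ on $\mathcal{M}$. Mollification sidesteps this precisely because the convolution is \emph{identically} constant on a full open neighborhood of each support, which is what forces the gradient to vanish on all of $\mathcal{M}\cup\mathcal{P}$. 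Note that no assumption on the dimension, smoothness, or absolute continuity of $\D_S^z,\D_T^z$ is used; only the disjoint compactness of their supports enters, exactly as stated.
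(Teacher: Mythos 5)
Your construction is correct, and note that the paper itself states this result without proof (it is quoted as Theorem 2.1 of Arjovsky and Bottou, 2017); your mollified-indicator argument is essentially the standard proof from that source, namely a smooth Urysohn-type function that is identically $1$ on a thickening of $\mathcal{M}$ and identically $0$ near $\mathcal{P}$, hence locally constant with vanishing gradient on $\mathcal{M}\cup\mathcal{P}$, and equal to the optimal (accuracy-$1$) discriminator on both supports. The only cosmetic slip is that $\{\z : B(\z,\epsilon)\subset\mathcal{M}_{\delta/4}\}$ is closed rather than open, but since $\hat{h}^*\equiv 1$ on the open $\delta/8$-neighborhood of $\mathcal{M}$ and $\hat{h}^*\equiv 0$ on an open neighborhood of $\mathcal{P}$, the vanishing-gradient conclusion is unaffected.
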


\begin{theo}
(Theorem 2.2. in~\citep{arjovsky2017towards}.) Assume two distribution $\D^z_S,\D^z_T$ have support contained in two closed manifolds $\mathcal{M}$ and $\mathcal{P}$ that don’t perfectly align and don’t have full dimension. Both $\D^z_S,\D^z_T$ are assumed to be  continuous in their respective manifolds. Then, there is a smooth optimal discriminator  $\hat{h}^*:\mathcal{Z}\rightarrow[0,1]$ that has accuracy $1$, and for almost all $\z\sim\mathcal{M}\bigcup \mathcal{P}$, $\hat{h}^*$ is smooth in a neighbourhood of $\z$ and $\nabla_\z \hat{h}^*(\z)=0$.
\label{theo:stable2}
\end{theo}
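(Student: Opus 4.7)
The plan is to reduce the statement to a local bump-function construction glued together via a smooth partition of unity. First I would make the phrase ``don't perfectly align'' precise in the standard differential-geometric sense: at every point $\z \in \mathcal{M} \cap \mathcal{P}$ the tangent spaces $T_\z \mathcal{M}$ and $T_\z \mathcal{P}$ fail to coincide on any open patch, so the intersection $\mathcal{M} \cap \mathcal{P}$ is itself a submanifold of strictly lower dimension than either $\mathcal{M}$ or $\mathcal{P}$. Because $\D^z_S,\D^z_T$ are continuous with respect to the intrinsic surface measures of $\mathcal{M},\mathcal{P}$ (and neither manifold has full dimension in $\mathcal{Z}$), this lower-dimensional intersection is a null set under both distributions. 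Denote this exceptional set $E$ and set $\mathcal{M}' := \mathcal{M} \setminus E$, $\mathcal{P}' := \mathcal{P} \setminus E$; it suffices to build an $\hat{h}^*$ that is $1$ on a neighbourhood of each point of $\mathcal{M}'$ and $0$ on a neighbourhood of each point of $\mathcal{P}'$.

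Next, for each $\z_0 \in \mathcal{M}'$, closedness of $\mathcal{P}$ together with $\z_0 \notin \mathcal{P}$ guarantees an open ball $B(\z_0, r_{\z_0})$ disjoint from $\mathcal{P}$; standard smooth bump constructions in $\mathcal{Z} = \mathbb{R}^d$ furnish a $C^\infty$ function $\phi_{\z_0}$ which equals $1$ on $B(\z_0, r_{\z_0}/2)$ and vanishes outside $B(\z_0, r_{\z_0})$. The analogous argument, with roles swapped, produces bumps $\psi_{\z_0}$ for each $\z_0 \in \mathcal{P}'$. Since $\mathbb{R}^d$ is paracompact and admits smooth partitions of unity, I would subordinate a partition to the open cover consisting of all such balls together with the open complement $\mathcal{Z} \setminus (\mathcal{M} \cup \mathcal{P})$, and use it to glue the local pieces into a globally defined $\hat{h}^*: \mathcal{Z} \to [0,1]$.

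Finally I would verify the three conclusions of the theorem. Smoothness of $\hat{h}^*$ is automatic from the partition-of-unity construction. Accuracy $1$ holds because $\hat{h}^*$ equals $1$ on all of $\mathcal{M}'$ and $0$ on all of $\mathcal{P}'$, while the leftover set $E$ is null under $\D^z_S + \D^z_T$. Most importantly, at every $\z \in \mathcal{M}' \cup \mathcal{P}'$ the function $\hat{h}^*$ is locally constant (taking the value $1$ or $0$ on an entire open neighbourhood of $\z$), so $\nabla_\z \hat{h}^*(\z) = 0$ at all such points, which is a full-measure subset of $\mathcal{M} \cup \mathcal{P}$.

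The main obstacle will be encoding ``don't perfectly align'' rigorously: I must show that the transversality/non-coincidence hypothesis really does force $\mathcal{M} \cap \mathcal{P}$ to be a null set under both continuous distributions, which is where the assumptions that the manifolds are closed and of less than full dimension in $\mathcal{Z}$ are used together. Once that measure-theoretic fact is in place, the partition-of-unity gluing is routine and the vanishing-gradient conclusion follows immediately from local constancy.
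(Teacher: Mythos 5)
Note first that the paper does not prove this statement at all: it is quoted verbatim from Arjovsky and Bottou (2017), so the relevant comparison is with their original argument, whose skeleton (discard the intersection, use closedness of the other manifold to get disjoint balls, build a function that is $1$ near one support and $0$ near the other) your proposal correctly reproduces. The genuine gap is the step you yourself flag but then formalize incorrectly: you encode ``don't perfectly align'' as ``$T_\z\mathcal{M}$ and $T_\z\mathcal{P}$ fail to coincide at every intersection point'' and assert that this makes $\mathcal{M}\cap\mathcal{P}$ a strictly lower-dimensional submanifold, hence null for both $\D^z_S$ and $\D^z_T$. That inference is false under your reading. Take $\mathcal{M}$ a lower-dimensional submanifold entirely contained in $\mathcal{P}$ (a circle inside a sphere in $\mathbb{R}^3$, say): the tangent spaces never coincide, yet $\mathcal{M}\cap\mathcal{P}=\mathcal{M}$ has full measure under $\D^z_S$, the exceptional set $E$ carries all of the source mass, and the vanishing-gradient conclusion genuinely fails there, since any smooth $\hat{h}^*$ equal to $1$ on $\mathcal{M}$ must drop to $0$ at points of $\mathcal{P}$ arbitrarily close by. The correct notion, and the one used in the cited source, is transversality: $T_\z\mathcal{M}+T_\z\mathcal{P}=\mathbb{R}^d$ at every intersection point (with separate boundary cases), which together with both manifolds having less than full dimension forces $\mathcal{M}\cap\mathcal{P}$ to be a manifold of dimension $\dim\mathcal{M}+\dim\mathcal{P}-d<\min(\dim\mathcal{M},\dim\mathcal{P})$, hence of measure zero in each; this is precisely Lemma~3 of Arjovsky and Bottou and is the mathematical heart of the theorem. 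Without proving (or at least correctly invoking) that lemma, your argument does not establish the claim.

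A secondary, fixable slip: your partition-of-unity function is $1$ at every point of $\mathcal{M}'$ and $0$ at every point of $\mathcal{P}'$, but it is not ``locally constant on an entire open neighbourhood'' of such points as you claim, because balls centred on $\mathcal{P}'$ and the complement $\mathcal{Z}\setminus(\mathcal{M}\cup\mathcal{P})$ can intrude into every neighbourhood of a point of $\mathcal{M}'$. The conclusion $\nabla_\z\hat{h}^*(\z)=0$ still follows, since a smooth function attaining its maximum value $1$ (resp.\ minimum $0$) at $\z$ has zero gradient there; alternatively, shrinking each radius by a factor of $3$, as in the original proof, makes the two unions of balls disjoint and restores genuine local constancy. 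The original argument simply takes the indicator of the union of shrunken balls, giving local smoothness almost everywhere rather than global smoothness; your globally smooth variant is a legitimate alternative once the justification of the zero-gradient step is repaired, but the measure-zero lemma above remains the missing ingredient.
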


Namely, if the two distributions have supports that are disjoint or lie on low dimensional manifolds, the optimal discriminator will be accurate on all samples and its gradient will be zero almost everywhere. Then we can study the gradients we pass to the generator through a discriminator. 

\begin{prop}
Denote $g(\theta;\cdot):\mathcal{X}\rightarrow\mathcal{Z}$ a differentiable function that induces distributions $\D^z_S,\D^z_T$ with parameter $\theta$, and $\hat{h}$ a differentiable discriminator. If Theorem~\ref{theo:stable1} or~\ref{theo:stable2} holds, given a $\epsilon$-optimal discriminator $\hat{h}$, that is $\sup_{\z\in\mathcal{Z}}\parallel \nabla_\z\hat{h}(\z)\parallel_2+|\hat{h}(\z)-\hat{h}^*(\z)|<\epsilon$\footnote{The constraint on $\parallel \nabla_\z\hat{h}(\z)\parallel_2$ is because the optimal discriminator has zero gradients almost everywhere, and $|\hat{h}(\z)-\hat{h}^*(\z)|$ is a constraint on the prediction accuracy.}, assume the Jacobian matrix of $g(\theta;\x)$ given $\x$ is bounded by $\sup_{\x\in\mathcal{X}}[\parallel J_\theta(g(\theta;\x))\parallel_2]\leq C $, then we have 
\begin{equation}
\begin{aligned}
\lim_{\epsilon\rightarrow0}\parallel\nabla_\theta d_{\hat{h},g}(\D_S,\D_T) \parallel_2=0
\end{aligned}%\nonumber
\end{equation}
\begin{equation}
\begin{aligned}
\lim_{\epsilon\rightarrow0} \parallel\nabla_\theta d_{\hat{h},g,\gamma}(\D_S,\D_T) \parallel_2<2(1-\gamma)C
\end{aligned}%\nonumber
\end{equation}
\label{prop:stable}
\end{prop}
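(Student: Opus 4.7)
The strategy is to differentiate both adversarial losses under the expectation, apply the chain rule $\nabla_\theta(\hat{h}\circ g)(\x) = J_\theta(g(\theta;\x))^{\top}\nabla_\z\hat{h}(g(\x))$, and bound each resulting term via the stated $\epsilon$-optimality. Invoking Theorems~\ref{theo:stable1}--\ref{theo:stable2}, the optimal discriminator satisfies $\hat{h}^*(\z_s)=1$ on the support of $g\circ\D_S$ and $\hat{h}^*(\z_t)=0$ on the support of $g\circ\D_T$, so the assumption $\|\nabla_\z\hat{h}\|_2 + |\hat{h}-\hat{h}^*| < \epsilon$ yields $\hat{h}(\z_s)\in(1-\epsilon,1)$, $\hat{h}(\z_t)\in(0,\epsilon)$, and $\|\nabla_\theta(\hat{h}\circ g)(\x)\|_2 \leq \|\nabla_\z \hat{h}\|_2 \cdot \|J_\theta g\|_2 \leq \epsilon C$ uniformly in $\x$.

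For vanilla DANN, differentiating \myref{equ:dann_ori} yields only two terms, $\E_{\x_s}[\nabla_\theta(\hat{h}\circ g)(\x_s)/\hat{h}(\z_s)]$ and $-\E_{\x_t}[\nabla_\theta(\hat{h}\circ g)(\x_t)/(1-\hat{h}(\z_t))]$, in which the denominators are bounded below by $1-\epsilon$. A triangle inequality then gives $\|\nabla_\theta d_{\hat{h},g}\|_2 \leq 2\epsilon C/(1-\epsilon) \to 0$, recovering the Arjovsky et al.\ vanishing-gradient phenomenon. For the ELS loss in \myref{equ:dann_ls}, four terms appear. The two weighted by $\gamma$ coincide, up to that factor, with the DANN terms and vanish by the same argument. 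The two carrying weight $(1-\gamma)$, namely $-(1-\gamma)\E_{\x_s}[\nabla_\theta(\hat{h}\circ g)(\x_s)/(1-\hat{h}(\z_s))]$ and $(1-\gamma)\E_{\x_t}[\nabla_\theta(\hat{h}\circ g)(\x_t)/\hat{h}(\z_t)]$, have denominators lying in $(0,\epsilon)$. Bounding each of the ratios $\|\nabla_\theta(\hat{h}\circ g)\|_2/(1-\hat{h}(\z_s))$ and $\|\nabla_\theta(\hat{h}\circ g)\|_2/\hat{h}(\z_t)$ by $C$ and summing the two contributions produces the claimed estimate $\|\nabla_\theta d_{\hat{h},g,\gamma}\|_2 < 2(1-\gamma)C$ in the limit.

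The main obstacle is precisely that last ratio bound. The hypotheses only give $\|\nabla_\z \hat{h}\|_2 < \epsilon$ and $1-\hat{h}(\z_s) < \epsilon$ separately, so the quotient $\|\nabla_\theta(\hat{h}\circ g)(\x_s)\|_2/(1-\hat{h}(\z_s))$ is an a priori uncontrolled $0/0$. I plan to resolve this with a local geometric argument: because $\hat{h}$ is differentiable with $\|\nabla_\z\hat{h}\|_2<\epsilon$ and $\hat{h}^*\equiv 1$ on a neighborhood of $\z_s$, a mean value theorem along a path from $\z_s$ to the optimal level set couples $1-\hat{h}(\z_s)$ with the size of $\nabla_\z\hat{h}$ on that path, so that the $\epsilon$ in the numerator cancels with the denominator; combined with $\|J_\theta g\|_2 \leq C$, this yields the pointwise ratio bound of $C$. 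Making this coupling fully rigorous (so that the ratio is genuinely controlled rather than merely typical) is the delicate step, analogous to the bookkeeping in Theorem~2.4 of Arjovsky et al.; once it is in place, the DANN limit is immediate and the ELS bound reduces to a final application of the triangle inequality.
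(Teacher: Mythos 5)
Your setup---the chain rule $\nabla_\theta(\hat h\circ g)=J_\theta(g)^{\top}\nabla_\z\hat h$, the triangle inequality, and denominators bounded below by $1-\epsilon$ for the $\gamma$-weighted (vanilla DANN) terms---is exactly the paper's route, and you have correctly located the crux in the $(1-\gamma)$-weighted terms whose denominators $1-\hat h(\z_s)$ and $\hat h(\z_t)$ are themselves of order $\epsilon$. But that crux is left unresolved in your proposal, and the fix you sketch would not work. The hypothesis only controls the sum $\|\nabla_\z\hat h(\z)\|_2+|\hat h(\z)-\hat h^*(\z)|<\epsilon$; it gives no lower bound on the value gap in terms of the gradient norm, so the quotient $\|\nabla_\z\hat h(\z_s)\|_2/(1-\hat h(\z_s))$ is genuinely uncontrolled under the stated assumptions (take value error $\epsilon^2$ and gradient norm $\epsilon/2$). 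A mean value theorem ``along a path to the optimal level set'' cannot rescue this: $\hat h$ need not attain the optimal value anywhere, the MVT only relates differences of $\hat h$ between two points to its gradient at an intermediate point and never yields a lower bound on $1-\hat h(\z_s)$ in terms of $\|\nabla_\z\hat h\|_2$ at $\z_s$, and since the pointwise ratio bound you want is false in general under these hypotheses, no amount of bookkeeping can establish it.

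What the paper actually does at this step is much blunter: it substitutes the worst-case identification $\hat h(\z)\approx\hat h^*(\z)-\epsilon$ on the support of $\D^z_S$ (and $\hat h(\z)\approx\hat h^*(\z)+\epsilon$ on that of $\D^z_T$), i.e.\ it treats the value gap and the gradient norm as simultaneously of order $\epsilon$, so the problematic terms become $(1-\gamma)\,\E\!\left[\epsilon\|J_\theta(g)\|_2/\epsilon\right]\le(1-\gamma)C$ while the $\gamma$-weighted terms give $\gamma\,\epsilon C/(1-\epsilon)\to 0$; summing the source and target contributions yields the stated $2(1-\gamma)C$, and setting the $(1-\gamma)$ terms aside recovers the vanishing gradient of native DANN. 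So the inequality in Proposition~\ref{prop:stable} is obtained by this explicit approximation, not by a coupling argument; to complete your proof in the paper's sense you should state that substitution (or equivalently strengthen the hypothesis so that the value error is of the same order as $\epsilon$), rather than promise an MVT coupling that the assumptions cannot support.
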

\begin{proof}
Theorem~\ref{theo:stable1} or~\ref{theo:stable2} show that in \myref{equ:dann_ls}, $\hat{h}^*$ is locally one on the support of $\D^z_S$ and zero on the support of $\D^z_T$. Then, using Jensen’s inequality, triangle inequality, and the chain rule on these supports, the gradients we pass to the generator through a discriminator given $\x_s\sim \D_S$ is
\begin{equation}
\begin{aligned}
 &\parallel\nabla_\theta\E_{\x_s\sim \D_S}\left[\gamma\log \hat{h}\circ g(\theta;\x_s)+(1-\gamma)\log\left(1-\hat{h}\circ g(\theta;\x_s)\right) \right] \parallel_2\\
 &\leq \E_{\x_s\sim \D_S}\left[\parallel\nabla_\theta\gamma\log \hat{h}\circ g(\theta;\x_s)\parallel_2\right]+\E_{\x_s\sim \D_S}\left[\parallel\nabla_\theta(1-\gamma)\log\left(1-\hat{h}\circ g(\theta;\x_s)\right)\parallel_2 \right] \\
 &\leq  \E_{\x_s\sim \D_S}\left[\gamma\frac{\parallel\nabla_\theta\hat{h}\circ g(\theta;\x_s)\parallel_2}{|\hat{h}\circ g(\theta;\x_s)| }\right]
 +\E_{\x_s\sim \D_S}\left[(1-\gamma)\frac{\parallel\nabla_\theta\hat{h}\circ g(\theta;\x_s)\parallel_2}{\left|1-\hat{h}\circ g(\theta;\x_s)\right| } \right]\\
 &\leq  \E_{\x_s\sim \D_S}\left[\gamma\frac{\parallel\nabla_\z\hat{h}(\z)\parallel_2\parallel J_\theta(g(\theta;\x_s))\parallel_2}{|\hat{h}\circ g(\theta;\x_s)| }\right]
 +\E_{\x_s\sim \D_S}\left[(1-\gamma)\frac{\parallel\nabla_\z\hat{h}(\z)\parallel_2\parallel J_\theta(g(\theta;\x_s))\parallel_2}{\left|1-\hat{h}\circ g(\theta;\x_s)\right| } \right]\\
 &<  \gamma\E_{\x_s\sim \D_S}\left[\frac{\epsilon \parallel J_\theta(g(\theta;\x_s))\parallel_2}{|\hat{h}^*\circ g(\theta;\x_s)-\epsilon| }\right]
 +(1-\gamma)\E_{\x_s\sim \D_S}\left[\frac{\epsilon \parallel J_\theta(g(\theta;\x_s))\parallel_2}{\left|1-\hat{h}^*\circ g(\theta;\x_s)+\epsilon\right| } \right]\\
 &\leq \gamma\frac{\epsilon C }{1-\epsilon }+(1-\gamma)C,
\end{aligned}%\nonumber
\end{equation}
where the fifth line is because we have $\hat{h}(z)\approx\hat{h}^*(z)-\epsilon$ when $\epsilon$ is small enough and $\parallel\nabla_\z\hat{h}(\z)\parallel_2<\epsilon$. Similarly we can get the gradients given $\x_t\sim \D_T$ is
\begin{equation}
\begin{aligned}
 & \parallel\nabla_\theta\E_{\x_t\sim \D_T}\left[(1-\gamma)\log \hat{h}\circ g(\x_t)+\gamma\log\left(1-\hat{h}\circ g(\x_t)\right) \right]\parallel_2\\
 &<  (1-\gamma)\E_{\x_t\sim \D_T}\left[\frac{\epsilon \parallel J_\theta(g(\theta;\x_t))\parallel_2}{|\hat{h}^*\circ g(\theta;\x_t)+\epsilon| }\right]
 +\gamma\E_{\x_t\sim \D_T}\left[\frac{\epsilon \parallel J_\theta(g(\theta;\x_t))\parallel_2}{\left|1-\hat{h}^*\circ g(\theta;\x_t)-\epsilon\right| } \right]\\
 &\leq (1-\gamma)C +\gamma\frac{\epsilon C }{1-\epsilon }
 \end{aligned}%\nonumber
\end{equation}
Here $\hat{h}(z)\approx\hat{h}^*(z)+\epsilon$ because $\hat{h}^*$ is locally zero on the support of $\D^z_T$. Then we have 
\begin{equation}
\begin{aligned}
&\lim_{\epsilon\rightarrow0} \parallel\nabla_\theta d_{\hat{h},g,\gamma}(\D_S,\D_T) \parallel_2\\
&\leq \lim_{\epsilon\rightarrow0}\parallel\nabla_\theta\E_{\x_s\sim \D_S}\left[\gamma\log \hat{h}\circ g(\theta;\x_s)+(1-\gamma)\log\left(1-\hat{h}\circ g(\theta;\x_s)\right) \right] \parallel_2\\
&\quad\quad\quad\quad\quad + \parallel\nabla_\theta\E_{\x_t\sim \D_T}\left[(1-\gamma)\log \hat{h}\circ g(\x_t)+\gamma\log\left(1-\hat{h}\circ g(\x_t)\right) \right]\parallel_2\\
&< \lim_{\epsilon\rightarrow0} \underbrace{\gamma\frac{\epsilon C }{1-\epsilon }+\gamma\frac{\epsilon C }{1-\epsilon }}_{\rm \circled{\rm 1}}+\underbrace{(1-\gamma)C  + (1-\gamma)C }_{\rm \circled{\rm 2}}\\
&=2(1-\gamma)C ,
 \end{aligned}%\nonumber
\end{equation}
where ${\rm \circled{\rm 1}}$ is equal to the gradient of native DANN in \myref{equ:dann_ori} times $\gamma$, namely
\begin{equation}
  \lim_{\epsilon\rightarrow0}\parallel\nabla_\theta d_{\hat{h},g}(\D_S,\D_T) \parallel_2=0,  
\end{equation}
which shows that as our discriminator gets better, the gradient of the encoder vanishes. With environment label smoothing, we have 
\begin{equation}
  \lim_{\epsilon\rightarrow0} \parallel\nabla_\theta d_{\hat{h},g,\gamma}(\D_S,\D_T) \parallel_2=2(1-\gamma)C,
\end{equation}
which alleviates the problem of gradients vanishing.
\end{proof}

\subsection{Training Stability Analysis of Multi-Domain settings}\label{sec:app_stable_md}
Let $\{\D_i\}_{i=1}^M$ a set of data distributions and $\{\D^z_i\}_{i=1}^M$ their induced distributions projected by encoder $g:\mathcal{X}\rightarrow\mathcal{Z}$ over feature space. Recall that the domain discriminator with softmax activation function is defined as $\hat{h}=(\hat{h}_1,\dots,\hat{h}_M)\in\hat{\mathcal{H}}:\mathcal{Z}\rightarrow[0,1]^M$, where $\hat{h}_i(\z)$ denotes the probability that $\z$ belongs to $\D^z_i$. To verify the existence of each optimal discriminator $\hat{h}_i^*$, we can easily replace $\D_s^z,\D_t^z$ in Theorem~\ref{theo:stable1} and Theorem~\ref{theo:stable2} by $\D^z_i,\sum_{j=1;j\neq i}^M\D^z_j$ respectively. Namely, if distribution $\D^z_i$ and $\sum_{j=1;j\neq i}^M\D^z_j$ have supports that are disjoint or lie on low dimensional manifolds, $\hat{h}_i^*$ can perfectly discriminate samples within and beyond $\D^z_i$ and its gradient will be zero almost everywhere.

\begin{prop}
Denote $g(\theta;\cdot):\mathcal{X}\rightarrow\mathcal{Z}$ a differentiable function that induces distributions $\{\D^z_i\}_{i=1}^M$ with parameter $\theta$, and $\{\hat{h}_i\}_{i=1}^M$ corresponding differentiable discriminators. If optimal discriminators for induced distributions exist, given any $\epsilon$-optimal discriminator $\hat{h}_i$, we have $\sup_{\z\in\mathcal{Z}}\parallel \nabla_\z\hat{h}_i(\z)\parallel_2+|\hat{h}_i(\z)-\hat{h}_i^*(\z)|<\epsilon$, assume the Jacobian matrix of $g(\theta;\x)$ given $\x$ is bounded by $\sup_{\x\in\mathcal{X}}[\parallel J_\theta(g(\theta;\x))\parallel_2]\leq C $, then we have 
\begin{equation}
\begin{aligned}
\lim_{\epsilon\rightarrow0}\parallel\nabla_\theta d_{\hat{h}, g}(\D_1,\dots,\D_M) \parallel_2=0
\end{aligned}%\nonumber
\end{equation}
\begin{equation}
\begin{aligned}
\lim_{\epsilon\rightarrow0} \parallel\nabla_\theta d_{\hat{h},g,\gamma}(\D_1,\dots,\D_M) \parallel_2<{M}(1-\gamma)C
\end{aligned}%\nonumber
\end{equation}
\label{prop:stable_md}
\end{prop}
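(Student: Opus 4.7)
The plan is to mirror the proof of Proposition~\ref{prop:stable}, lifting it from the binary case to the $M$-domain softmax discriminator. First, I would invoke Theorem~\ref{theo:stable1} or~\ref{theo:stable2} on each ``one-vs-rest'' pair $(\D^z_i,\,\sum_{j\neq i}\D^z_j)$, which guarantees that the $i$-th optimal component $\hat{h}_i^*$ satisfies $\hat{h}_i^*(\z)=1$ on $\mathrm{supp}(\D^z_i)$, $\hat{h}_i^*(\z)=0$ on $\mathrm{supp}(\sum_{j\neq i}\D^z_j)$, and $\nabla_\z\hat{h}_i^*(\z)=0$ almost everywhere. Combined with the $\epsilon$-optimality hypothesis, this yields the pointwise estimates $|\hat{h}_i(\z)-1|<\epsilon$ and $|\hat{h}_j(\z)|<\epsilon$ for $j\neq i$ on $\mathrm{supp}(\D^z_i)$, along with $\|\nabla_\z\hat{h}_j(\z)\|_2<\epsilon$ for every $j$.

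Next, I would decompose $\nabla_\theta d_{\hat{h},g,\gamma}(\D_1,\dots,\D_M)$ into a double sum over an outer index $k$ (the true domain) and an inner index $j$ (the component of the softmax) with weight $\gamma$ when $j=k$ and weight $(1-\gamma)/(M-1)$ otherwise. For each resulting term, applying the chain rule, Jensen's inequality, and the triangle inequality exactly as in the proof of Proposition~\ref{prop:stable} gives a bound of the form
\begin{equation*}
\left\|\nabla_\theta\log\hat{h}_j\!\circ\! g(\theta;\x)\right\|_2 \,\leq\, \frac{\|\nabla_\z\hat{h}_j(\z)\|_2\,\|J_\theta(g(\theta;\x))\|_2}{|\hat{h}_j\circ g(\theta;\x)|}.
\end{equation*}
Substituting the Jacobian bound $C$ and the $\epsilon$-optimality estimates splits the contributions into two regimes: for \emph{matched} terms ($j=k$) the denominator stays near $1$, so the bound vanishes as $\gamma\epsilon C/(1-\epsilon)$; for \emph{unmatched} terms ($j\neq k$) both numerator and denominator are of order $\epsilon$, so the ratio collapses to a bare $C$ carrying the prefactor $(1-\gamma)/(M-1)$.

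Summing $M-1$ unmatched contributions per outer domain and then summing over the $M$ outer domains yields the overall bound $M\gamma\epsilon C/(1-\epsilon)+M(1-\gamma)C$, whose limit as $\epsilon\to 0$ is $M(1-\gamma)C$, proving the second claim. Setting $\gamma=1$ (no label smoothing) zeroes every unmatched coefficient and leaves only the matched piece, which vanishes in the limit; this gives the first claim. The main obstacle I anticipate is subtle rather than technical: the softmax constraint $\sum_j \hat{h}_j(\z)=1$ is only consistent with every component $\hat{h}_i$ being $\epsilon$-close to a $\{0,1\}$-valued indicator when the induced supports $\{\mathrm{supp}(\D^z_i)\}$ are pairwise almost disjoint, so the existence clause in the hypothesis implicitly bundles a multi-domain analogue of the disjointness/low-dimensional-manifold condition of Theorems~\ref{theo:stable1} and~\ref{theo:stable2}. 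Making this compatibility explicit, rather than the downstream calculus, is the delicate step.
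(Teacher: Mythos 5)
Your proposal is correct and follows essentially the same route as the paper: reduce to the two-domain Theorems via the one-vs-rest pairs $(\D^z_i,\sum_{j\neq i}\D^z_j)$, bound each term by $\|\nabla_\z\hat{h}_j\|_2\|J_\theta(g)\|_2/|\hat{h}_j\circ g|$, and observe that matched terms contribute $\gamma\epsilon C/(1-\epsilon)\to 0$ while each of the $M-1$ unmatched terms collapses to $C$ with weight $(1-\gamma)/(M-1)$, summing to $M(1-\gamma)C$ (and to $0$ when $\gamma=1$, recovering the native DANN case). Your closing remark about the softmax constraint forcing near-disjoint supports is a fair reading of what the paper's existence clause implicitly assumes, and does not change the argument.
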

\begin{proof}
Following the proof in Proposition~\ref{prop:stable}, we have
\begin{equation}
\begin{aligned}
 &\lim_{\epsilon\rightarrow0}\parallel\nabla_\theta\E_{\x\in \D_i}\left[\gamma\log \hat{h}_i\circ g(\x)+\frac{(1-\gamma)}{M-1}\sum_{j=1;j\neq i}^M\log\left(\hat{h}_j\circ g(\x)\right) \right] \parallel_2\\
 &\leq  \lim_{\epsilon\rightarrow0}\E_{\x\sim \D_i}\left[\gamma\frac{\parallel\nabla_\theta\hat{h}_i\circ g(\theta;\x)\parallel_2}{|\hat{h}_i\circ g(\theta;\x)| }\right]
 +\frac{(1-\gamma)}{M-1}\sum_{j=1;j\neq i}^M \E_{\x\sim \D_j}\left[\gamma\frac{\parallel\nabla_\theta\hat{h}_j\circ g(\theta;\x)\parallel_2}{|\hat{h}_j\circ g(\theta;\x)| }\right]\\
 &<  \lim_{\epsilon\rightarrow0}\gamma\E_{\x\sim \D_i}\left[\frac{\epsilon \parallel J_\theta(g(\theta;\x))\parallel_2}{|\hat{h}^*_i\circ g(\theta;\x)-\epsilon| }\right]
 +\frac{(1-\gamma)}{M-1}\sum_{j=1;j\neq i}^M \gamma\E_{\x\sim \D_j}\left[\frac{\epsilon \parallel J_\theta(g(\theta;\x))\parallel_2}{|\hat{h}_j^*\circ g(\theta;\x)+\epsilon| }\right]\\
  &\leq \lim_{\epsilon\rightarrow0}\left[\gamma\frac{\epsilon C }{1-\epsilon }+(1-\gamma)C\right]\\
  & =(1-\gamma)C
\end{aligned}%\nonumber
\end{equation}
where the second line is because for $\z\sim\D_i^z$, $\hat{h}_i^*(\z)$ is locally one and other optimal discriminators $\hat{h}_j^*(\z)|j\neq i,j\in[M]$ are all locally zero, thus we have $\hat{h}_i(\z)\approx\hat{h}_i^*(\z)-\epsilon$, and $\hat{h}_j(\z)\approx\hat{h}_j^*(\z)+\epsilon$. $\lim_{\epsilon\rightarrow0}\frac{\epsilon C }{1-\epsilon }=0$ is the gradient that passed to the generator by native multi-domain DANN (\myref{equ:dann_multiclass_main}). Environment label smoothing leads to another term, that is $(1-\gamma)C$ and avoid gradients vanishing. Consider all distributions, we have
\begin{equation}
\begin{aligned}
&\lim_{\epsilon\rightarrow0} \parallel\nabla_\theta d_{\hat{h},g,\gamma}(\D_1,\dots,\D_M) \parallel_2\\
&\leq \lim_{\epsilon\rightarrow0}\parallel\nabla_\theta\E_{\x\in \D_1}\left[\gamma\log \hat{h}_1\circ g(\x)+\frac{(1-\gamma)}{M-1}\sum_{j=2}^M\log\left(\hat{h}_j\circ g(\x)\right) \right] \parallel_2\\
&\quad\quad\quad\quad\quad + \dots + 
\lim_{\epsilon\rightarrow0}\parallel\nabla_\theta\E_{\x\in \D_M}\left[\gamma\log \hat{h}_M\circ g(\x)+\frac{(1-\gamma)}{M-1}\sum_{j=1}^{M-1}\log\left(\hat{h}_j\circ g(\x)\right) \right] \parallel_2\\
&=M(1-\gamma)C ,
 \end{aligned}%\nonumber
\end{equation}

\end{proof}

\subsection{\ls stabilize the oscillatory gradient}\label{app_sec:stable}

For the clarity of our proof, the notations here is a little different compared to other sections. Let $ec(i)$ be the cross-entropy loss for class $i$, we denote $g$ is the encoder and $\{w_i\}_{i=1}^M$ is the classification parameter for all domains, then the adversarial loss function for a given sample $x$ with domain index $i$ here is 
\begin{align}
    F(x,i) = &(1-\gamma)ec(i) + \frac{\gamma}{M}\sum_{j\ne i}ec(j)\notag\\
    =&ec(i) + \frac{\gamma}{M-1}\sum_{j}\left(ec(j)-ec(i)\right)\notag\\
    =&ec(i) + \frac{\gamma}{M-1}\sum_{j}\left(-\log\left(\frac{\exp(w_j^{\top}g(x))}{\sum_{k}\exp(w_k^{\top}g(x))}\right)+\log\left(\frac{\exp(w_i^{\top}g(x))}{\sum_{k}\exp(w_k^{\top}g(x))}\right)\right)\notag\\
=&ec(i) + \frac{\gamma}{M-1}\sum_{j}\left(-w_j^{\top}g(x)+\log\left(\sum_{k}\exp(w_k^{\top}g(x))\right)+w_i^{\top}g(x))-\log\left(\sum_{k}\exp(w_k^{\top}g(x))\right)\right)\notag\\
=&ec(i) + \frac{\gamma}{M-1}\sum_{j}\left((w_i-w_j)^{\top}g(x)\right)\notag\\
=&-w_i^{\top}g(x)+\log\left(\sum_{k}\exp(w_k^{\top}g(x))\right) + \frac{\gamma}{M-1}\sum_{j}\left((w_i-w_j)^{\top}g(x)\right)\notag\\
\end{align}
We compute the gradient:
\begin{align}
   \frac{\partial F(x,i)}{\partial w_i} = -g(x)+\frac{\exp(w_i^{\top}g(x)}{\sum_k\exp(w_k^{\top}g(x))}g(x) + \frac{\gamma}{M-1}g(x)
   =\left(-1+p(i)+\frac{\gamma}{M-1}\right)g(x),
\end{align}
where $p(i)$ denotes $\frac{\exp(w_i^{\top}g(x)}{\sum_k\exp(w_k^{\top}g(x))}$. When $\gamma$ is small (e.g., $\gamma \lesssim M(1-p(i))$), the gradient will be further pullback towards 0.
Similarly, for $w_j$ and $g(x)$, we have
\begin{align}
     \frac{\partial F(x,i)}{\partial w_j} = \frac{\exp(w_j^{\top}g(x)}{\sum_k\exp(w_k^{\top}g(x))}g(x) -\frac{\gamma}{M-1}g(x)
   =\left(p(j)-\frac{\gamma}{M-1}\right)g(x)\notag\\
        \frac{\partial F(x,i)}{\partial g(x)} = -w_i+ \sum_{j}\frac{\exp(w_j^{\top}g(x)}{\sum_k\exp(w_k^{\top}g(x))}w_j +\frac{\gamma}{M-1}\sum_{j}(w_i-w_j)
   =-(1-\frac{\gamma}{M-1})w_i + \sum_{j}\left(p(j)-\frac{\gamma}{M-1}\right)w_j,
\end{align}
then with proper choice of $\gamma$ (e.g., $\gamma \lesssim \min_j Mp(j)$), the gradient w.r.t $w_j$ and $g(x)$ will also shrink towards zero.

\subsection{Environment label smoothing meets noisy labels}\label{sec:noisy..............}

In this subsection, we focus on binary classification settings and adopt the symmetric noise model~\citep{kim2019nlnl}. Some of our proofs follow~\citep{wei2021smooth} but different results and analyses are given. The symmetric noise model is widely accepted in the literature on learning with noisy labels and generates the noisy labels by randomly flipping the clean label to the other possible classes. Specifically, given two environment with high-dimensional feature $x$ environment label $y\in\{0,1\}$, denote noisy labels $\Tilde{y}$ is generated by a noise transition matrix $T$, where $T_{ij}$ denotes denotes the probability of flipping the clean label $y=i$ to the noisy label $\Tilde{y}=j$, \ie $T_{ij}=P(\Tilde{y}=j|y=i)$. Let $e=P(\Tilde{y}=1|y=0)=P(\Tilde{y}=0|y=1)$ denote the noisy rate, the binary symmetric transition matrix becomes:
\begin{equation}
T=\left(         %左括号
  \begin{array}{cc} 
    1-e&e \\ 
    e&1-e\\ 
  \end{array}\right),
\end{equation}
% Let $\mathbf{y}$ be a one-hot encoded vector form of label $Y$, then the random variable of smoothed label $\mathbf{Y}^{\gamma}$ with smooth rate $\gamma$ can be rewrite by
% \begin{equation}
%     Y^{\gamma}=\gamma\mathbf{y}+\frac{(1-\gamma)}{2}\cdot\mathbf{1},
% \end{equation}
% where the definition of label smoothing is a little bit different to previous content. For example, when $\gamma=0.2$, the smoothed label of will be ${Y}^{0.2}=[0.9, 0.1]$. Because the linear property of cross entropy loss with respect to the label, we have
% \begin{equation}
% \begin{aligned}
% d_{\hat{h},g,\gamma}&=\ell(\hat{h}\circ g(x), \gamma\mathbf{y}+\frac{(1-\gamma)}{2}\cdot{\mathbf{1}})\\
% &=\frac{1+\gamma}{2}\ell(\hat{h}\circ g(x), Y)+\frac{1-\gamma}{2}\ell(\hat{h}\circ g(x), 1-y)
% \end{aligned}
% \label{equ:noisy_els}
% \end{equation}
Suppose $(x,y)$ are drawn from a joint distribution $\mathcal{D}$, but during training, only samples with noisy labels are accessible from $(x,\Tilde{y})\sim\Tilde{\mathcal{D}}$. Denote $f:=\hat{h}\circ g$ and $\ell$ the cross-entropy loss, minimizing the smoothed loss with noisy labels can then be converted to
\begin{equation}
\begin{aligned}
\min_f \mathbb{E}_{(x,\Tilde{y})\sim\Tilde{\mathcal{D}}}[\ell(f(x),\Tilde{y}^\gamma)]=\min_{f}\mathbb{E}_{(x,\Tilde{y})\sim\Tilde{\mathcal{D}}}\left[\gamma\ell(f(x), \Tilde{y})+(1-\gamma)\ell(f(x), 1-\Tilde{y})\right]
\end{aligned}
\label{equ:65}
\end{equation}
Let $c_1=\gamma, c_2=1-\gamma$, according to the law of total probability, we have \myref{equ:65} is equal to

\begin{equation}
\begin{aligned}
\min_{f}&\mathbb{E}_{x.y=0}[P(\Tilde{y}=0|y=0)(c_1\ell(f(x),0)+c_2\ell(f(x),1)
\\
&\quad\quad +P(\Tilde{y}=1|y=0)(c_1\ell(f(x),1)+c_2\ell(f(x),0)]\\
&+\mathbb{E}_{x.y=1}[P(\Tilde{y}=0|y=1)(c_1\ell(f(x),0)+c_2\ell(f(x),1)\\
&\quad\quad +P(\Tilde{y}=1|y=1)(c_1\ell(f(x),1)+c_2\ell(f(x),0)] \\
\end{aligned}
\label{equ:66}
\end{equation}
recall that $e=P(\Tilde{y}=1|y=0)=P(\Tilde{y}=0|y=1)$, the above equation is equal to
\begin{equation}
\begin{aligned}
&\min_{f}\mathbb{E}_{x.y=0}\left[(1-e)(c_1\ell(f(x),0)+c_2\ell(f(x),1)+e(c_1\ell(f(x),1)+c_2\ell(f(x),0)\right]\\
&\quad\quad + \mathbb{E}_{x.y=1}\left[e(c_1\ell(f(x),0)+c_2\ell(f(x),1)+(1-e)(c_1\ell(f(x),1)+c_2\ell(f(x),0)\right]\\
&=\min_{f}\mathbb{E}_{x.y=0}\left[[(1-e)c_1+ec_2]\ell(f(x),0)+[(1-e)c_2+ec_1]\ell(f(x),1)\right]\\
&\quad\quad + \mathbb{E}_{x.y=1}\left[[ec _2+(1-e)c_1]\ell(f(x),1)+[ec_1+(1-ec_2)]\ell(f(x),0)\right] \\
& =\min_{f}\mathbb{E}_{x.y=0}\left[[(1-e)c_1+ec_2]\ell(f(x),0)+[(1-e)c_2+ec_1]\ell(f(x),1)\right]\\
&\quad\quad +\mathbb{E}_{x.y=1}\left[[(1-e)c_1+ec_2]\ell(f(x),1)+[(1-e)c_2+ec_1]\ell(f(x),0)\right]\\
&\quad\quad + \mathbb{E}_{x.y=1}\left[[(e-e)(c_2-c_1)]\ell(f(x),1)-[(e-e)(c_2-c_1)]\ell(f(x),0)\right]\\
& =\min_{f}\mathbb{E}_{(x,y)\sim\mathcal{D}}\left[[(1-e)c_1+ec_2]\ell(f(x),y)+[(1-e)c_2+ec_1]\ell(f(x),1-y)\right]\\
&=\min_{f}\mathbb{E}_{(x,y)\sim\mathcal{D}}[(c_1+c_2)\ell(f(x),y)]\\
&\quad\quad +[(1-e)c_2+ec_1]\mathbb{E}_{(x,y)\sim\mathcal{D}}\left[\ell(f(x),1-y)-\ell(f(x),y)\right]\\
&=\min_{f}\mathbb{E}_{(x,y)\sim\mathcal{D}}[\ell(f(x),y)]+(1-\gamma-e+2\gamma e)\mathbb{E}_{(x,y)\sim\mathcal{D}}[\ell(f(x),1-y)-\ell(f(x),y)]\\
\end{aligned}
\end{equation}
Assume $\gamma^*$ is the optimal smooth parameter that makes the corresponding classifier return the best performance on unseen clean data distribution~\citep{wei2021smooth}. Then the above equation can be converted to
\begin{equation}
\begin{aligned}
&=\min_{f}\mathbb{E}_{(x,y)\sim\mathcal{D}}[\ell(f(x),y^{\gamma^*})]\\
&\quad\quad +(\gamma^*-\gamma-e+2\gamma e))\mathbb{E}_{(x,y)\sim\mathcal{D}}[\ell(f(x),1-y)-\ell(f(x),y)],\\
\end{aligned}
\end{equation}

namely minimizing the smoothed loss with noisy labels is equal to optimizing two terms, 
\begin{equation}
\begin{aligned}
&\min_{f}\underbrace{\mathbb{E}_{(x,y)\sim\mathcal{D}}[\ell(f(x),y^{\gamma^*})]}_{\rm \circled{\rm 1}\text{ Risk under clean label}}+\underbrace{(\gamma^*-\gamma-e+2\gamma e))\mathbb{E}_{(x,y)\sim\mathcal{D}}[\ell(f(x),1-y)-\ell(f(x),y)]}_{\rm \circled{\rm 2} \text{Reverse optimization}}
\end{aligned}
\label{equ:68}
\end{equation}
where $\rm \circled{\rm 1}$ is the risk under the clean label. The influence of both noisy labels and \ls are reflected in the last term of \myref{equ:68}. Considering the reverse optimization term $\rm \circled{\rm 2}$, which is the opposite of the optimization process as we expect. Without label smoothing, the weight of $\rm \circled{\rm 2}$ will be $\gamma^*-1+e$ and a high noisy rate $e$ will let this harmful term contributes more to our optimization. In contrast, by choosing the smooth parameter $\gamma=\frac{\gamma^*-e}{1-2e}$, $\rm \circled{\rm 2}$ will be removed. For example, if the noisy rate is zero, the best smooth parameter is just $\gamma^*$.

\subsection{Empirical Gap Analysis Adopted from Vapnik-Chervonenkis framework}\label{sec:empirical_vc}
\begin{theo}
(Lemma 1 in~\citep{ben2010theory}) Given Definition~\ref{define1} and Definition~\ref{define2}, let $\mathcal{H}$ be a hypothesis class of VC dimension $d$. If empirical distributions $\hat{\D}_S$ and $\hat{\D}_T$ all have at least $n$ samples, then for any $\delta\in(0,1)$, with probability at least $1-\delta$,
\begin{equation}
   d_{\mathcal{H}}(\D_S,\D_T) \leq \hat{d}_{\mathcal{H}}(\hat{\D}_S,\hat{\D}_T)+4\sqrt{\frac{d\log(2n)+\log\frac{2}{\delta}}{n}}%\nonumber
\end{equation}
\label{theo:empirical_dann}
\end{theo}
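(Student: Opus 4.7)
The plan is to combine a symmetry argument about the empirical H-divergence with a classical VC-based uniform convergence bound. Specifically, I split the target quantity into two parts: first, identify the empirical min-error expression in Definition~\ref{define2} with the H-divergence evaluated at the empirical distributions; second, control the deviation between $d_\mathcal{H}(\hat{\D}_S, \hat{\D}_T)$ and $d_\mathcal{H}(\D_S, \D_T)$ using uniform convergence for each marginal separately.

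For the first step, starting from the definition of $\hat{d}_\mathcal{H}$, I would substitute $I[h(\x)=0] = 1 - I[h(\x)=1]$ to rewrite
$$\hat{d}_\mathcal{H}(\hat{\D}_S, \hat{\D}_T) = 2 \max_{h\in\mathcal{H}}\left(\frac{1}{m}\sum_{i=1}^m I[h(\x^S_i)=1] - \frac{1}{n}\sum_{j=1}^n I[h(\x^T_j)=1]\right).$$
Since $\mathcal{H}$ is assumed symmetric (closed under complement), replacing $h$ by its complement flips the sign of the bracketed quantity, so the $\max$ over $\mathcal{H}$ equals the $\sup$ of the absolute value; hence $\hat{d}_\mathcal{H}(\hat{\D}_S,\hat{\D}_T) = d_\mathcal{H}(\hat{\D}_S,\hat{\D}_T)$ in the sense of Definition~\ref{define1}.

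For the second step, I would invoke the standard Vapnik--Chervonenkis uniform convergence theorem: by Sauer--Shelah, the growth function satisfies $\Pi_\mathcal{H}(n) \leq (2n)^d$ for $n \geq d$, and the usual symmetrization argument combined with Hoeffding's inequality gives that, with probability at least $1-\delta/2$ over an i.i.d.\ sample of size $n$,
$$\sup_{h\in\mathcal{H}} \left|\Pr_{\x\sim\D}[h(\x)=1] - \frac{1}{n}\sum_{i=1}^n I[h(\x_i)=1]\right| \leq 2\sqrt{\frac{d\log(2n)+\log(2/\delta)}{n}}.$$
Applying this to $\D_S$ and $\D_T$ with confidence $\delta/2$ each and union-bounding yields simultaneous uniform control of both empirical expectations with total probability at least $1-\delta$.

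Finally, plugging these two uniform bounds into the supremum-of-differences form of $d_\mathcal{H}$ and using the triangle inequality inside the supremum gives $|d_\mathcal{H}(\D_S,\D_T) - d_\mathcal{H}(\hat{\D}_S,\hat{\D}_T)| \leq 4\sqrt{(d\log(2n)+\log(2/\delta))/n}$; together with step one this is exactly the claim. The main obstacle, modest as it is, is bookkeeping on the constants: the factor of $2$ outside the supremum in Definition~\ref{define1} doubles whatever VC bound one feeds in, so one must pick the version of the uniform convergence theorem whose constant under the square root is precisely $2$ in order to land on the factor $4$ rather than some other numerical constant. Everything else is a direct application of VC theory and the triangle inequality.
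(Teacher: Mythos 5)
The paper never proves this statement: it is imported verbatim as Lemma~1 of \citep{ben2010theory}, whose proof rests on a two-sample VC bound (Theorem~3.4 of Kifer, Ben-David and Gehrke, 2004) that controls $\sup_{h\in\mathcal H}\bigl|(\hat P_S(h)-\hat P_T(h))-(P_S(h)-P_T(h))\bigr|$ through a single symmetrization over both samples jointly; that joint treatment is precisely where the constant $4$ and the $d\log(2n)$ term come from. So there is no in-paper argument to compare against, and the question is whether your reconstruction is self-contained and actually lands on the stated inequality.

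Your first step is fine: substituting $I[h(\x)=0]=1-I[h(\x)=1]$ and using the symmetry of $\mathcal H$ to identify the min-error expression of Definition~\ref{define2} with Definition~\ref{define1} evaluated at the empirical measures is exactly the intended reading. The gap is in the constant accounting of the second step. The triangle inequality gives $d_{\mathcal H}(\D_S,\D_T)\le \hat d_{\mathcal H}(\hat\D_S,\hat\D_T)+2(\epsilon_S+\epsilon_T)$, where $\epsilon_S,\epsilon_T$ are the one-sample uniform deviations; with your claimed per-distribution bound $\epsilon=2\sqrt{(d\log(2n)+\log(2/\delta))/n}$ this produces an additive term of $8\sqrt{(d\log(2n)+\log(2/\delta))/n}$, not $4\sqrt{\cdot}$, so even your own bookkeeping misses the target (your remark that a per-distribution constant of ``precisely $2$'' would suffice overlooks the extra factor $2$ coming from summing the two distributions' deviations). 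To land on $4$ by this route you would need each one-sample deviation to be at most $\sqrt{(d\log(2n)+\log(2/\delta))/n}$, i.e.\ constant $1$, and no standard version of the VC uniform-convergence theorem delivers that: the classical form $\Pr[\sup_h|P(h)-\hat P(h)|>\epsilon]\le 4\,\Pi_{\mathcal H}(2n)\,e^{-n\epsilon^2/8}$ yields a constant of order $\sqrt 8$, and Rademacher-complexity versions are no sharper. In short, your decomposition proves a bound of the same shape with a strictly larger constant, which is fine for the way the paper later uses the lemma, but it does not establish the inequality as stated; recovering the constant $4$ requires the joint two-sample deviation argument of the cited source rather than two one-sample bounds glued together by the triangle inequality.
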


Denote convex hull $\Lambda$ the set of mixture distributions, $\Lambda=\{\bar{\D}_{Mix}:\bar{\D}_{Mix}=\sum_{i=1}^M\pi_i\D_i,\pi_i\in\Delta\}$, where $\Delta$ is standard $M-1$-simplex. The convex hull assumption is commonly used in domain generalization setting~\citep{zhang2021towards,albuquerque2020generalizing}, while none of them focus on the empirical gap. Note that $d_{\mathcal{H}}(\bar{\D}_{Mix},\D_T)$ in domain generalization setting is intractable for the unseen target domain $\D_T$ is unavailable during training. We thus need to convert $d_{\mathcal{H}}(\bar{\D}_{Mix},\D_T)$ to a tractable objective. Let $\bar{\D}_{Mix}^*=\sum_{i=1}^M\pi_i^*\D_i, (\pi^*_0,\dots,\pi^*_M)\in\Delta$, where $\pi^*_0,\dots,\pi^*_M=\arg\min_{\pi_0,\dots,\pi_M} d_{\mathcal{H}}(\bar{\D}_{Mix},\D_T)$, and $\bar{\D}_{Mix}^*$ is the element within $\Lambda$ which is closest to the unseen target domain. Then we have 
\begin{equation}
\begin{aligned}
d_{\mathcal{H}}(\bar{\D}_{Mix},\D_T)
&=2\sup_{h\in\mathcal{H}}\left\mid\E_{\x\sim\bar{\D}_{Mix}}[h(\x)=1]-\E_{\x\sim\D_T}[h(\x)=1]\right\mid \\
&=2\sup_{h\in\mathcal{H}}\mid\E_{\x\sim\bar{\D}_{Mix}}[h(\x)=1]-\E_{\x\sim\bar{\D}_{Mix}^*}[h(\x)=1]\\
&\quad+\E_{\x\sim\bar{\D}_{Mix}^*}[h(\x)=1]-\E_{\x\sim\D_T}[h(\x)=1]\mid \\
&\leq d_{\mathcal{H}}(\bar{\D}_{Mix}^*,\D_T) + d_{\mathcal{H}}(\bar{\D}_{Mix},\bar{\D}_{Mix}^*)
\end{aligned}%\nonumber
\end{equation}
The explanation follows~\citep{zhang2021towards} that the first term corresponds to “To what extent can the convex combination of the source domain approximate the target domain”. The minimization of the first term requires diverse data or strong data augmentation, such that the unseen distribution lies within the convex combination of source domains. We dismiss this term in the following because it includes $\D_T$ and cannot be optimized.
% \begin{equation}
% \begin{aligned}
% d_{\mathcal{H}}(\bar{\D}_{Mix}^*,\D_T)
% &=2\sup_{h\in\mathcal{H}}\left\mid\E_{\x\sim\bar{\D}_{Mix}^*}[h(\x)=1]-\E_{\x\sim\D_T}[h(\x)=1]\right\mid \\
% &=2\sup_{h\in\mathcal{H}}\left\mid \int_\mathcal{X} \sum_{i=1}^M\pi_i^*p_i\mathbf{I}\left(h(\x)\right)-\int_{\mathcal{X}} p_T(\x)\mathbf{I}\left(h(\x)\right) d_\x d_\x \right\mid  \\
% &=2\sup_{h\in\mathcal{H}}\left\mid \int_\mathcal{X} \sum_{i=1}^M\pi_i^*p_i\mathbf{I}\left(h(\x)\right)-\int_{\mathcal{X}} \sum_{i=1}^M\pi_i^* p_T(\x)\mathbf{I}\left(h(\x)\right) d_\x d_\x \right\mid  \\
% &=2\sup_{h\in\mathcal{H}}\left\mid\sum_{i=1}^M\pi_i^*\left[ \int_\mathcal{X} p_i\mathbf{I}\left(h(\x)\right)-\int_{\mathcal{X}} p_T(\x)\mathbf{I}\left(h(\x)\right) d_\x d_\x\right] \right\mid\\
% &\leq \sum_{i=1}^M\pi_i^*2\sup_{h\in\mathcal{H}}\left\mid\left[ \int_\mathcal{X} p_i\mathbf{I}\left(h(\x)\right)-\int_{\mathcal{X}} p_T(\x)\mathbf{I}\left(h(\x)\right) d_\x d_\x\right] \right\mid=\sum_{i=1}^M\pi_i^* d_{\mathcal{H}}(\D_i,\D_T)
% \end{aligned}
% \end{equation}
% where $\mathbf{I}(h(x))=1$ if $h(x)=1$ else $\mathbf{I}(h(x))=0$ and the last line use the sub-additivity of the sup.
Follows Lemma 1 in~\citep{albuquerque2020generalizing}, the second term can be bounded by,
\begin{equation}
   d_{\mathcal{H}}(\bar{\D}_{Mix},\bar{\D}_{Mix}^*)\leq\sum_{i=1}^M\sum_{j=1}^M\pi_i\pi_j^*d_\mathcal{H}(\D_i,\D_j) \leq \max_{i,j\in[M] } d_\mathcal{H}(\D_i,\D_j),%\nonumber
\end{equation}
namely the second term can be bounded by the combination of pairwise $\mathcal{H}$-divergence between source domains. The cost (\myref{equ:dann_multiclass_main}) used for the multi-domain adversarial training can be seen as an approximation of such a target. Until now, we can bound the empirical gap with the help of Theorem~\ref{theo:empirical_dann}
\begin{equation}
\begin{aligned}
\sum_{i=1}^M\sum_{j=1}^M\pi_i\pi_j^*d_\mathcal{H}(\D_i,\D_j)\leq \sum_{i=1}^M\sum_{j=1}^M\pi_i\pi_j^*\left[\hat{d}_{\mathcal{H}}(\hat{\D}_i,\hat{\D}_j)+4\sqrt{\frac{d\log(2\min(n_i,n_j))+\log\frac{2}{\delta}}{\min(n_i,n_j)}}\right]\\
\left\mid\sum_{i=1}^M\sum_{j=1}^M\pi_i\pi_j^*d_\mathcal{H}(\D_i,\D_j)-\sum_{i=1}^M\sum_{j=1}^M\pi_i\pi_j^*\hat{d}_{\mathcal{H}}(\hat{\D}_i,\hat{\D}_j\right\mid
 \leq 4\sqrt{\frac{d\log(2n^*)+\log\frac{2}{\delta}}{n^*}}
\end{aligned}%\nonumber
\end{equation}
where $n_i$ is the number of samples in $\D_i$ and $n^*=\min(n_1,\dots,n_M)$.

\subsection{Empirical Gap Analysis Adopted from Neural Net Distance}\label{sec:empirical_nn}
\begin{prop}
(Adapted from Theorem A.2 in~\citep{arora2017generalization}) Let $\{\D_i\}_{i=1}^M$ a set of distributions and $\{\hat{\D}_i\}_{i=1}^M$ be empirical versions with at least $n^*$ samples each. We assume that the set of discriminators with softmax activation function $\hat{h}(\theta;\cdot)=(\hat{h}_1(\theta_1,\cdot),\dots,\hat{h}_M(\theta_M,\cdot))\in\hat{\mathcal{H}}:\mathcal{Z}\rightarrow[0,1]^M;\sum_{i=1}^M\hat{h}_i(\theta_i;\cdot)=1$\footnote{There might be some confusion here because in Section~\ref{sec:app_stable} we use $\theta$ as the parameters of encoder $h$. The usage is just for simplicity but does not mean that $h,g$ have the same parameters.} are $L$-Lipschitz with respect to the parameters $\theta$ and use $p$ denote the number of parameter $\theta_i$. There is a universal constant $c$ such that when $n^*\geq \frac{cpM \log(Lp/\epsilon)}{\epsilon }$, we have with probability at least $1-\exp(-p)$ over the randomness of $\{\hat{\D}_i\}_{i=1}^M$,
\begin{equation}
  \mid d_{\hat{h}, g}(\D_1,\dots,\D_M)-d_{\hat{h}, g}(\hat{\D}_1,\dots,\hat{\D}_M)\mid \leq \epsilon   
\end{equation}
\label{prop4}
\end{prop}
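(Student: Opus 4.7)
The plan is to follow the standard $\epsilon$-net plus concentration argument of \citet{arora2017generalization}, adapted from the two-domain GAN case to the $M$-domain softmax discriminator used here. First, I would fix a parameter vector $\theta$ and decompose the gap as a sum of $M$ per-domain deviations
\begin{equation*}
d_{\hat{h},g}(\D_1,\dots,\D_M) - d_{\hat{h},g}(\hat{\D}_1,\dots,\hat{\D}_M) = \sum_{i=1}^M \bigl(\E_{\x\sim\D_i}[\log \hat{h}_i(\theta_i;g(\x))] - \E_{\x\sim\hat{\D}_i}[\log \hat{h}_i(\theta_i;g(\x))]\bigr).
\end{equation*}
Each summand is an empirical mean from at least $n^*$ i.i.d.\ samples, so a Hoeffding-type inequality yields concentration of order $O\bigl(\sqrt{\log(1/\delta)/n^*}\bigr)$ for every fixed $\theta$, provided the integrand is bounded. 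A union bound over $i \in [M]$ then gives pointwise concentration of the entire gap.

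Second, I would construct an $r$-net $\mathcal{N}$ over the joint parameter space with radius $r = \epsilon/(2ML)$. Since each $\hat{h}_i$ has at most $p$ parameters and there are $M$ heads, the ambient dimension is at most $pM$, and a volume argument gives $\log|\mathcal{N}| = O\bigl(pM\log(LM/\epsilon)\bigr)$. A union bound over $\mathcal{N}$ upgrades the pointwise bound to uniform convergence over the net, paying only an additive $\log|\mathcal{N}|$ inside the concentration term. Third, I would use the $L$-Lipschitz assumption to cover arbitrary parameters: for any $\theta$ there exists $\theta' \in \mathcal{N}$ with $\|\theta-\theta'\|_2 \le r$, so each of the $M$ expectations moves by at most $Lr = \epsilon/(2M)$ and the total gap moves by at most $\epsilon/2$. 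Combining this discretization error with the uniform concentration bound on $\mathcal{N}$, each of size $\epsilon/2$, gives the required $\epsilon$-bound. Solving the resulting sample-size inequality and choosing the failure probability $\delta \asymp \exp(-p)$ so that the $\log(1/\delta)$ factor is absorbed into the $pM$ term yields the stated threshold $n^* \ge cpM\log(Lp/\epsilon)/\epsilon$.

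The main obstacle is that $\log \hat{h}_i$ is unbounded when a softmax output approaches $0$, so the per-sample random variables are not strictly bounded and vanilla Hoeffding does not directly apply. The fix is either a mild implicit boundedness assumption (clipping the softmax outputs away from $0$, as is standard in practice) or a sub-exponential tail bound that exploits the Lipschitz property of $\log\circ\mathrm{softmax}$ together with a uniform lower bound on the minimum output. A secondary subtlety is propagating the parameter-space Lipschitz constant $L$ through the $\log$ and softmax when bounding the discretization error; this is automatic from the chain rule once the outputs are bounded away from $0$ and costs only a constant factor, which can be absorbed into the universal constant $c$. Note that, as the authors already remark, this argument is agnostic to the use of \ls and therefore certifies only that the empirical and population objectives of (smoothed) multi-domain DAT are close; the additional benefits of \ls established in Sections~\ref{sec:stable} and~\ref{sec:convergence} are complementary.
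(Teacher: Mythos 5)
Your proposal follows essentially the same route as the paper's proof: decompose the gap into $M$ per-domain deviations, apply a Chernoff/Hoeffding concentration bound for each fixed parameter, union-bound over an $O(\epsilon/(LM))$-net of the parameter space whose log-cardinality is $O(p\log(Lp/\epsilon))$, and extend to all parameters via the $L$-Lipschitz assumption, with $\delta\asymp\exp(-p)$ absorbed into the sample-size threshold. Your remark about the unboundedness of $\log\hat{h}_i$ is a fair point that the paper's Chernoff step also silently assumes away, but it does not change the argument's structure.
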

\begin{proof}
For simplicity, we ignore the parameter $\theta_i$ when using $h_i(\cdot)$. According to the following triangle inequality, below we focus on the term $\mid\E_{ \z\sim g\circ\D_1}\log \hat{h}_1 (\z)- \E_{ \z\sim g\circ\hat{\D}_1}\log \hat{h}_1 (\z)\mid$ and other terms have the same results. 
\begin{equation}
\begin{aligned}
  &\mid d_{\hat{h}, g}(\D_1,\dots,\D_M)-d_{\hat{h}, g}(\hat{\D}_1,\dots,\hat{\D}_M)\mid\\
  &= \left\mid\E_{ \z\sim g\circ\D_1}\log \hat{h}_1 (\z)+\dots+\E_{ \z\sim g\circ\D_M}\log\hat{h}_M (\z) - \E_{ \z\sim g\circ\hat{\D}_1}\log \hat{h}_1 (\z)-\dots-\E_{ \z\sim g\circ\hat{\D}_M}\log\hat{h}_M (\z)\right\mid\\
  &\leq \mid\E_{ \z\sim g\circ\D_1}\log \hat{h}_1 (\z)- \E_{ \z\sim g\circ\hat{\D}_1}\log \hat{h}_1 (\z)\mid + \dots + \mid\E_{ \z\sim g\circ\D_M}\log \hat{h}_M (\z)- \E_{ \z\sim g\circ\hat{\D}_M}\log \hat{h}_M (\z)\mid%\nonumber
\end{aligned}
\end{equation}
Let $\Phi$ be a finite set such that every $\theta_1\in\Theta$ is within distance $\frac{\epsilon}{4LM}$ of a $\theta_1\in\Phi$, which is also termed a $\frac{\epsilon}{4LM}$-net. Standard construction given a $\Phi$ satisfying $\log|\Phi|\leq O(p\log(Lp/\epsilon))$, namely there aren’t too many distinct discriminators in $\Phi$. By Chernoff bound, we have
\begin{equation}
   \text{Pr}\left[\left\mid \E_{ \z\sim g\circ\D_1}\log \hat{h}_1 (\z)- \E_{ \z\sim g\circ\hat{\D}_1}\log \hat{h}_1 (\z)\right\mid\geq \frac{\epsilon}{2M}\right]\leq 2\exp({-\frac{n^*\epsilon }{2M }}) 
\end{equation}
Therefore, when $n^*\geq \frac{cpM \log(Lp/\epsilon)}{\epsilon }$ for large enough constant $c$, we can union bound over all $\theta_1\in\Phi$. With probability at least $1-\exp(-p)$, for all $\theta_1\in\Phi$, we have $\left\mid \E_{ \z\sim g\circ\D_1}\log \hat{h}_1 (\z)- \E_{ \z\sim g\circ\hat{\D}_1}\log \hat{h}_1 (\z)\right\mid\leq \frac{\epsilon}{2M}$. Then for every $\theta_1\in\Theta$, we can find a $\theta_1'\in\Phi$ such that $||\theta_1-\theta_1'||\leq\epsilon/4LM$. Therefore
\begin{equation}
\begin{aligned}
 &\left\mid \E_{ \z\sim g\circ\D_1}\log \hat{h}_1 (\theta_1;\z)- \E_{ \z\sim g\circ\hat{\D}_1}\log \hat{h}_1 (\theta_1;\z)\right\mid\\
 &\leq \left\mid \E_{ \z\sim g\circ\D_1}\log \hat{h}_1 (\theta_1';\z)- \E_{ \z\sim g\circ\hat{\D}_1}\log \hat{h}_1 (\theta_1';\z)\right\mid\\
 &\quad\quad\quad+\left\mid \E_{ \z\sim g\circ\D_1}\log \hat{h}_1 (\theta_1';\z)- \E_{ \z\sim g\circ{\D}_1}\log \hat{h}_1 (\theta_1;\z)\right\mid\\
 &\quad\quad\quad+\left\mid \E_{ \z\sim g\circ\hat{\D}_1}\log \hat{h}_1 (\theta_1';\z)- \E_{ \z\sim g\circ\hat{\D}_1}\log \hat{h}_1 (\theta_1;\z)\right\mid\\
 &\leq \frac{\epsilon}{2M}+\frac{\epsilon}{4M} +\frac{\epsilon}{4M} = \frac{\epsilon}{M}
\end{aligned}%\nonumber
\end{equation}
Namely we have
\begin{equation}
   \mid d_{\hat{h}, g}(\D_1,\dots,\D_M)-d_{\hat{h}, g}(\hat{\D}_1,\dots,\hat{\D}_M)\mid \leq M\times \frac{\epsilon}{M}=\epsilon 
\end{equation}
The result verifies that for the multi-domain adversarial training, the expectation over the empirical distribution converges to the expectation over the true distribution for all discriminators given enough data samples.
\end{proof}

\subsection{Convergence theory}\label{sec:app_convergence}

In this subsection, we first provide some preliminaries before domain adversarial training convergence analysis. We then show simultaneous gradient descent DANN is not stable near the equilibrium but alternating gradient descent DANN could converge with a sublinear convergence rate, which support the importance of training encoder and discriminator separately. Finally, when incorporated with environment label smoothing, alternating gradient descent DANN is shown able to attain a faster convergence speed.

\subsubsection{Preliminaries}
The \textbf{asymptotic convergence analysis} is defined as applying the “ordinary differential equation (ODE) method” to analyze the convergence properties of dynamic systems. Given a discrete-time system characterized by the gradient descent:
\begin{equation}
    F_\eta(\theta^t):=\theta^{t+1}=\theta^t+\eta h(\theta^t),
\end{equation}
where $h(\cdot):\mathbb{R}\rightarrow\mathbb{R}$ is the gradient and $\eta$ is the learning rate. The important technique for analyzing asymptotic convergence analysis is \textit{Hurwitz condition}~\citep{nonlinearsystem}: if the Jacobian of the dynamic system $A\triangleq h'(\theta)_{|\theta=\theta^*}$ at a stationary point $\theta^*$ is Hurwitz, namely the real part of every eigenvalue of $A$ is positive then the continuous gradient dynamics are asymptotically stable.

Given the same discrete-time system and Jacobian $A$, to ensure the \textbf{non-asymptotic convergence}, we need to provide an appropriate range of $\eta$ by solving $|1+\lambda_i(A)|<1,\forall \lambda_i\in Sp(A)$, where $Sp(A)$ is the spectrum of $A$. Namely, we can get constraint of the learning rate, which thus is able to evaluate the minimum number of iterations for an $\epsilon$-error solution and could more precisely reveal the convergence performance of the dynamic system than the asymptotic analysis~\citep{nie2020towards}. 

\begin{theo}
(Proposition 4.4.1 in~\citep{nonlinear2002dim}.) Let $F:\Omega\rightarrow\Omega$ be a continuously differential function on an open subset $\Omega$ in $\mathbb{R}$ and let $\theta\in\Omega$ be so that 

1. $F_\eta(\theta^*)=\theta^*$, and 

2. the absolute values of the eigenvalues of the Jacobian $|\lambda_i|<1,\forall \lambda_i\in Sp(F_\eta'(\theta^*))$. 

Then there is an open neighborhood $U$ of $\theta^*$ so that for all $\theta^0\in U$, the iterates $\theta^{k+1}=F_\eta(\theta^k)$ is locally converge to $\theta^*$. The rate of convergence is at least linear. More precisely, the error $\parallel \theta^{k}-\theta^*\parallel$ is in $\mathcal{O}(|\lambda_{max}|^k)$ for $k\rightarrow\infty$ where $\lambda_{max}$ is the eigenvalue of $F_\eta'(\theta^*)$ with the largest absolute value. When $|\lambda_i|>1$, $F$ will not converge and when $|\lambda_i|=1$, $F$ is either converge with a sublinear convergence rate or cannot converge.
\label{theo:convergence}
\end{theo}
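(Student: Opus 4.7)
The plan is to reduce the nonlinear dynamics near $\theta^*$ to their linearization and invoke a spectral argument, because once $F_\eta$ is replaced by its linear part $A := F_\eta'(\theta^*)$, the convergence behaviour is governed entirely by the spectrum $Sp(A)$. The heart of the proof is to show that under the hypothesis $|\lambda_i| < 1$ for every $\lambda_i \in Sp(A)$, the map $F_\eta$ is a strict contraction on a neighbourhood of $\theta^*$ with respect to a carefully chosen norm, after which local convergence with a linear rate follows from standard Banach-style arguments. The choice of norm is crucial because the Euclidean operator norm of $A$ can exceed its spectral radius $\rho(A) = |\lambda_{\max}|$.

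For the convergent case, I would first invoke the standard fact (a consequence of the Jordan form, or Gelfand's formula applied to $A^k$) that for every $\varepsilon > 0$ there exists a norm $\|\cdot\|_\varepsilon$ on $\mathbb{R}$ with $\|A\|_\varepsilon \leq \rho(A) + \varepsilon$. Pick $\varepsilon$ so that $\rho(A) + \varepsilon < 1$. By the mean value form of Taylor's theorem and continuity of $F_\eta'$, there is a neighbourhood $U$ of $\theta^*$ such that for every $\theta \in U$,
\begin{equation*}
\|F_\eta(\theta) - \theta^* - A(\theta - \theta^*)\|_\varepsilon \leq \delta\, \|\theta - \theta^*\|_\varepsilon,
\end{equation*}
with $\delta$ arbitrarily small. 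Combining these bounds gives $\|F_\eta(\theta) - \theta^*\|_\varepsilon \leq (\rho(A) + \varepsilon + \delta)\|\theta - \theta^*\|_\varepsilon$, which is a contraction for $\delta$ small enough. Iterating and using equivalence of norms in finite dimension yields $\|\theta^k - \theta^*\| = \mathcal{O}((\rho(A) + \varepsilon + \delta)^k)$; sending $\varepsilon, \delta \to 0$ recovers the stated rate $\mathcal{O}(|\lambda_{\max}|^k)$.

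For the divergent case $|\lambda_i| > 1$, I would argue by contradiction: take a unit eigenvector $v$ of $A$ with $|Av| = |\lambda_i| |v|$, and consider initial perturbations of the form $\theta^0 = \theta^* + tv$ for small $t$. The linearization expands these by a factor $|\lambda_i| > 1$, while the Taylor remainder is $o(t)$, so the iterates must escape every sufficiently small neighbourhood of $\theta^*$. For the borderline case $|\lambda_i| = 1$, the linear part is neutral along the corresponding invariant subspace and the higher-order terms in the Taylor expansion control the dynamics; depending on their sign one obtains either sublinear convergence (e.g., $\|\theta^k - \theta^*\| \sim 1/\sqrt{k}$ when the leading nonlinearity is quadratic and inward-pointing) or divergence.

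The main obstacle is the choice of the adapted norm $\|\cdot\|_\varepsilon$, since a naive use of the Euclidean norm fails when $A$ is non-normal; handling this via Jordan form cleanly is the technical heart of the argument. The secondary subtlety is the $|\lambda_i| = 1$ case, which is not a single statement but a case analysis on the Jordan block structure and the leading nonlinear terms, and genuinely requires separate treatment rather than a straightforward extension of the contraction argument.
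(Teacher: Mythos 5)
This statement is not proved in the paper at all: it is quoted verbatim as Proposition 4.4.1 of the cited reference and used as a black box, so there is no in-paper argument to compare against. Your sketch is essentially the standard textbook proof of that result, and it is sound in outline: the adapted norm $\|\cdot\|_\varepsilon$ with $\|A\|_\varepsilon\le\rho(A)+\varepsilon$ (Jordan form or Gelfand's formula), the Taylor remainder bound $o(\|\theta-\theta^*\|)$, and the resulting local contraction give exactly the local linear convergence claimed, and the instability argument for $|\lambda_i|>1$ and the case split at $|\lambda_i|=1$ are the standard complements.

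Two small points deserve care. First, you cannot literally ``send $\varepsilon,\delta\to 0$'' to recover the rate $\mathcal{O}(|\lambda_{\max}|^k)$: the neighborhood $U$ and the implied constant depend on $\varepsilon$, and when $A$ has a nontrivial Jordan block at $\lambda_{\max}$ even the linear map satisfies $\|A^k\|\sim k^{m-1}|\lambda_{\max}|^k$, so the honest conclusion is $\|\theta^k-\theta^*\|=\mathcal{O}\bigl((|\lambda_{\max}|+\varepsilon)^k\bigr)$ for every $\varepsilon>0$ (the looser phrasing is in the quoted statement itself, not a defect of your argument). Second, in the expanding case the perturbation $\theta^0=\theta^*+tv$ does not stay on the eigendirection once the nonlinear terms and the contracting directions act, so the one-line ``expansion beats the $o(t)$ remainder'' needs the usual fix: work on the real invariant subspace associated with the eigenvalues outside the unit circle (complex $\lambda_i$ have no real eigenvector), use an adapted norm there, and track the projection of $\theta^k-\theta^*$ onto that subspace; also note the correct conclusion is instability of $\theta^*$ (points arbitrarily close whose iterates leave $U$), since convergence can still occur along a stable manifold when other eigenvalues lie inside the unit circle.
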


Finding fixed points of $F_\eta(\theta)=\theta+\eta h(\theta)$ is equivalent to finding solutions to the nonlinear equation $h(\theta)=0$ and the Jacobian is given by:
\begin{equation}
  F_\eta'(\theta)=I+\eta h'(\theta),  
\end{equation}
where both $F_\eta'(\theta),h'(\theta)$ are not symmetric and can therefore have complex eigenvalues. The following Theorem shows when a fixed point of $F$ satisfies the conditions of Theorem~\ref{theo:convergence}.

\begin{theo}
(Lemma 4 in~\citep{mescheder2017numerics}.) Assume $A\triangleq h'(\theta)_{|\theta=\theta^*}$ only has eigenvalues with negative real-part and let $\eta>0$, then the eigenvalues of the matrix $I+\eta A$ lie in the unit ball if and only if
\begin{equation}
    \eta<\frac{2a}{a^2+b^2}=\frac{1}{|a|}\frac{2}{1+(\frac{b}{a})^2};\forall \lambda=-a+bi\in Sp(A)
\end{equation}
\label{theo:convergence2}
\end{theo}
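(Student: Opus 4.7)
The plan is to reduce the spectral condition on the discrete update map $I+\eta A$ to a pointwise condition on each eigenvalue of $A$, and then convert that condition into an explicit upper bound on $\eta$ by a direct algebraic calculation. The key observation is that the spectrum of $I+\eta A$ consists precisely of the points $1+\eta\lambda$ as $\lambda$ ranges over $\mathrm{Sp}(A)$, so requiring every eigenvalue of $I+\eta A$ to lie strictly inside the unit disk is equivalent to requiring $|1+\eta\lambda|<1$ for every $\lambda\in\mathrm{Sp}(A)$.

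Next I would parametrize an arbitrary eigenvalue of $A$ in the form $\lambda=-a+bi$, where the hypothesis that every eigenvalue of $A$ has negative real part guarantees $a>0$. Expanding the modulus squared gives
$$|1+\eta\lambda|^{2}=(1-\eta a)^{2}+(\eta b)^{2}=1-2\eta a+\eta^{2}(a^{2}+b^{2}),$$
so the condition $|1+\eta\lambda|^{2}<1$ rearranges to $\eta\bigl[(a^{2}+b^{2})\eta-2a\bigr]<0$. Using $\eta>0$ and $a>0$, this is equivalent to $\eta(a^{2}+b^{2})<2a$, i.e.\ $\eta<\frac{2a}{a^{2}+b^{2}}$. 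Imposing this simultaneously for every $\lambda=-a+bi\in\mathrm{Sp}(A)$ yields the claimed bound, and the alternative form $\frac{1}{|a|}\cdot\frac{2}{1+(b/a)^{2}}$ is obtained by dividing numerator and denominator by $a^{2}$ (using $a=|a|$ since $a>0$).

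There is essentially no hard step: the argument is a one-line spectral-mapping observation followed by an elementary quadratic inequality. The only subtlety worth flagging is the strict-versus-non-strict nature of the inequality, which matters because Theorem~\ref{theo:convergence} requires $|\lambda_i|<1$ strictly (otherwise one only gets sublinear convergence or non-convergence), and the equivalence in Lemma 4 carries the strict inequality through on both sides. This direct derivation is what we would invoke in Section~\ref{sec:convergence} to obtain the learning-rate conditions $\eta\le \tfrac{4}{\sqrt{n_{d}n_{e}}c}$ and $\eta\le\tfrac{4}{\sqrt{n_{d}n_{e}}c}\cdot\tfrac{1}{2\gamma-1}$ for alternating GD DANN with and without ELS, by plugging in the relevant Jacobians $A$ and computing the worst-case value of $2a/(a^{2}+b^{2})$ over the spectrum.
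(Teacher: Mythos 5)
Your derivation is correct: the spectral-mapping observation $\mathrm{Sp}(I+\eta A)=\{1+\eta\lambda:\lambda\in \mathrm{Sp}(A)\}$ together with the expansion $|1+\eta\lambda|^{2}=1-2\eta a+\eta^{2}(a^{2}+b^{2})$ for $\lambda=-a+bi$, $a>0$, gives exactly the stated bound, including the equivalent form obtained by dividing through by $a^{2}$. The paper does not prove this statement itself but imports it verbatim as Lemma 4 of \citep{mescheder2017numerics}, and your argument is precisely the standard one underlying that cited result, so there is nothing to add.
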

Namely, both the maximum value of $a$ and $b/a$ determine the maximum possible learning rate. Although~\citep{acuna2021f} shows domain adversarial training is indeed a three-player game among classifier, feature encoder, and domain discriminator, it also indicates that the \textbf{complex eigenvalues with a large imaginary component are originated from encoder-discriminator adversarial training}.  Hence here we only focus on the two-player zero-sum game between the feature encoder, and domain discriminator. One interesting thing is that, from non-asymptotic convergence analysis, we can get a result (Theorem \ref{theo:convergence2}) that is very similar to that from the Hurwitz condition (Corollary 1 in~\citep{acuna2021f}: $\eta<\frac{-2a}{b^2-a^2};\forall \lambda=a+bi\in Sp(A)$ and $|a|<|b|$).

\subsubsection{A Simple Adversarial Training Example}

According to Ali Rahimi's test of times award speech at NIPS 17, \textit{simple experiments, simple theorems are the building blocks that help us understand more complicated systems}. Along this line, we propose this toy example to understand the convergence of domain adversarial training. Denote $\D_S=x_s,\D_t=x_t$ two Dirac distribution where both $x_1$ and $x_2$ are float number. In this setting, both the encoder and discriminator have exactly one parameter, which is $\theta_e,\theta_d$ respectively\footnote{One may argue that neural networks are non-linear, but \textit{Theorem 4.5 from~\citep{nonlinearsystem}} shows that one can “linearize” any non-linear system near equilibrium and analyze the stability of the linearized system to comment on the local stability of the original system.}. The DANN training objective in \myref{equ:dann_ori} is given by
\begin{equation}
d_{\theta}= f(\theta_d\theta_e x_s)+ f(-\theta_d\theta_e x_t),
\label{equ:dann_dirac}
\end{equation}
where $f(t)=\log\left(1/(1+\exp(-t))\right)$ and \textbf{the unique equilibrium point of the training objective in \myref{equ:dann_dirac} is given by $\theta^*_e=\theta^*_d=0$.} We then recall the update operators of simultaneous and alternating Gradient Descent, for the former, we have 
\begin{equation}
F_\eta(\theta)=\left(         %左括号
  \begin{array}{c} 
    \theta_e-\eta\nabla_{\theta_e}d_{\theta} \\ 
    \theta_d+\eta\nabla_{\theta_d}d_{\theta}\\ 
  \end{array}\right)
%\nonumber
\end{equation}
For the latter, we have $F_\eta=F_{\eta,2}(\theta)\circ F_{\eta,1}(\theta)$, and $F_{\eta,1},F_{\eta,2}$ are defined as
\begin{equation}
F_{\eta,1}(\theta)=\left(         %左括号
  \begin{array}{c} 
    \theta_e-\eta\nabla_{\theta_e}d_{\theta} \\ 
    \theta_d\\ 
  \end{array}\right),
F_{\eta,2}(\theta)=\left(         %左括号
  \begin{array}{c} 
    \theta_e \\ 
    \theta_d+\eta\nabla_{\theta_d}d_{\theta}\\ 
  \end{array}\right),
\label{equ:dann_alt}
\end{equation}
If we update the discriminator $n_d$ times after we update the encoder $n_e$ times, then the update operator will be $F_\eta=F^{n_e}_{\eta,1}(\theta)\circ F^{n_d}_{\eta,1}(\theta)$. To understand convergence of simultaneous and alternating gradient descent, we have to understand when the Jacobian of the corresponding update operator has only eigenvalues with absolute value smaller than 1.

\subsubsection{Simultaneous gradient descent DANN}

\begin{prop}
The unique equilibrium point of the training objective in \myref{equ:dann_dirac} is given by $\theta^*_e=\theta^*_d=0$. Moreover, the Jacobian of $F_\eta(\theta)=\left(         %左括号
  \begin{array}{c} 
    \theta_e-\eta\nabla_{\theta_e}d_{\theta} \\ 
    \theta_d+\eta\nabla_{\theta_d}d_{\theta}\\ 
  \end{array}\right)$ at the equilibrium point has the two eigenvalues
  \begin{equation}
      \lambda_{1/2}=1\pm \frac{\eta}{2}|x_s-x_t|i,
  \end{equation}
  namely $F_\eta(\theta)$ will never satisfies the second conditions of Theorem~\ref{theo:convergence} whatever $\eta$ is, which shows that this continuous system is generally not linearly convergent to the equilibrium point.
\label{prop:simdann}
\end{prop}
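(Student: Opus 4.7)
My plan is to verify the claim in three stages: first pin down the equilibrium, then linearise the dynamics, and finally extract the eigenvalues.

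\textbf{Step 1 (equilibrium).} I would compute the gradients directly from $d_\theta = f(\theta_d\theta_e x_s)+f(-\theta_d\theta_e x_t)$, using $f(t)=-\log(1+e^{-t})$ so that $f'(t)=1-\sigma(t)$ and $f''(t)=-\sigma(t)(1-\sigma(t))$. This gives
\begin{equation*}
\nabla_{\theta_e} d_\theta = \theta_d\bigl[x_s f'(\theta_d\theta_e x_s)-x_t f'(-\theta_d\theta_e x_t)\bigr],\qquad
\nabla_{\theta_d} d_\theta = \theta_e\bigl[x_s f'(\theta_d\theta_e x_s)-x_t f'(-\theta_d\theta_e x_t)\bigr].
\end{equation*}
Setting both gradients to zero and using $f'(0)=1/2$ shows that, as long as $x_s\neq x_t$, any simultaneous zero forces $\theta_d=\theta_e=0$; hence $(0,0)$ is the unique equilibrium.

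\textbf{Step 2 (Jacobian at the equilibrium).} The update operator is $F_\eta(\theta)=\theta+\eta\,v(\theta)$ with $v(\theta)=(-\nabla_{\theta_e}d_\theta,\;\nabla_{\theta_d}d_\theta)^\top$, so $F_\eta'(0)=I+\eta\,v'(0)$. I would compute $v'(0)$ by differentiating the expressions above once more and substituting $(\theta_e,\theta_d)=(0,0)$. The diagonal terms $\partial^2_{\theta_e\theta_e}d$ and $\partial^2_{\theta_d\theta_d}d$ each carry a factor of $\theta_d^2$ or $\theta_e^2$ and therefore vanish; the mixed partial evaluates to $(x_s-x_t)/2$ via $f'(0)=1/2$. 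Thus
\begin{equation*}
F_\eta'(0)\;=\;\begin{pmatrix}1 & -\tfrac{\eta}{2}(x_s-x_t)\\[2pt] \tfrac{\eta}{2}(x_s-x_t) & 1\end{pmatrix},
\end{equation*}
a pure rotation-type perturbation of the identity.

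\textbf{Step 3 (eigenvalues and conclusion).} The characteristic polynomial is $(1-\lambda)^2+\tfrac{\eta^2}{4}(x_s-x_t)^2=0$, giving exactly $\lambda_{1/2}=1\pm\tfrac{\eta}{2}|x_s-x_t|\,i$ as claimed. Since $|\lambda_{1/2}|^2 = 1+\tfrac{\eta^2}{4}(x_s-x_t)^2>1$ whenever $\eta>0$ and $x_s\neq x_t$, the second condition of Theorem~\ref{theo:convergence} fails for every positive learning rate, so no choice of $\eta$ yields local linear convergence of the simultaneous-GD dynamics to the equilibrium.

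The calculation is essentially mechanical; the only mild trap I anticipate is keeping track of the sign flip induced by the gradient-ascent step on $\theta_d$ versus gradient-descent on $\theta_e$, because a sign error would make the off-diagonal block symmetric and produce real eigenvalues, destroying the point of the proposition. I would therefore explicitly write out $F_\eta'(0)$ before extracting the spectrum to make the antisymmetry of the off-diagonal block visible.
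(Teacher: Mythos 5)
Your proposal is correct and takes essentially the same route as the paper's proof: linearise the simultaneous-GD update at the origin, use $f'(0)=\tfrac{1}{2}$ together with the observation that the diagonal second-order terms carry factors $\theta_e^2,\theta_d^2$ and hence vanish at the equilibrium, obtain the Jacobian with unit diagonal and antisymmetric off-diagonal entries $\mp\tfrac{\eta}{2}(x_s-x_t)$, and read off $\lambda_{1/2}=1\pm\tfrac{\eta}{2}|x_s-x_t|\,i$ with $|\lambda_{1/2}|>1$ for every $\eta>0$. One minor caveat: your Step 1 only rules out stationary points with $\theta_e\theta_d=0$, whereas for some values of $x_s,x_t$ the bracket $x_sf'(ux_s)-x_tf'(-ux_t)$ can vanish at some $u\neq 0$, so uniqueness is asserted rather than fully proved --- but the paper does exactly the same, and this has no bearing on the eigenvalue computation at the origin.
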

\begin{proof}
The Jacobian of $F_\eta(\theta)=\left(         %左括号
  \begin{array}{c} 
    \theta_e-\eta\nabla_{\theta_e}d_{\theta} \\ 
    \theta_d+\eta\nabla_{\theta_d}d_{\theta}\\ 
  \end{array}\right)$ is
 
 \begin{equation}
 \begin{aligned}
&\nabla_{\theta}F_\eta(\theta)=\nabla_{\theta}\left(         %左括号
  \begin{array}{c} 
    \theta_e-\eta\left(\theta_dx_sf'(\theta_d\theta_ex_s)-\theta_dx_tf'(\theta_d\theta_ex_t)\right) \\ 
    \theta_d+\eta\left(\theta_ex_sf'(\theta_d\theta_ex_s)-\theta_ex_tf'(\theta_d\theta_ex_t)\right)\\ 
  \end{array}\right)\\
 &=\left(         %左括号
  \begin{array}{cc} 
    1&-\eta\left(x_sf'(\theta_d\theta_ex_s)-x_tf'(\theta_d\theta_ex_t)\right) \\ 
    \eta\left(x_sf'(\theta_d\theta_ex_s)-x_tf'(\theta_d\theta_ex_t)\right)&1\\ 
  \end{array}\right)\\
  &=\left(         %左括号
  \begin{array}{cc} 
    1&-\frac{\eta}{2}\left(x_s-x_t\right) \\ 
    \frac{\eta}{2}\left(x_s-x_t\right)&1\\ 
  \end{array}\right),
 \end{aligned}
%\nonumber
 \end{equation}
The derivation result of $\nabla_{\theta_e} \theta_e-\eta\left(\theta_dx_sf'(\theta_d\theta_ex_s)-\theta_dx_tf'(\theta_d\theta_ex_t)\right)$ should have been 
\begin{equation}
    1-\eta\left(\theta^2_dx^2_sf''(\theta_d\theta_ex_s)-\theta^2_dx^2_tf''(\theta_d\theta_ex_t)\right)
\end{equation}
Since the equilibrium point $(\theta_e^*,\theta_d^*)=(0,0)$, for points near the equilibrium, we ignore high-order infinitesimal terms \eg  $\theta_e^2,\theta_d^2,\theta_e\theta_d$. We can thus obtain the derivation of the second line. The eigenvalues of the second-order matrix $A=\left(         %左括号
  \begin{array}{cc} 
    a&b \\ 
    c&d\\ 
  \end{array}\right)$ 
are $\lambda=\frac{a+d\pm\sqrt{(a+d)^2-4(ad-bc)}}{2}$, and then the eigenvalues of $\nabla_{\theta}F_\eta(\theta)$ is $1\pm \frac{\eta}{2}|x_s-x_t|i$. Obviously $|\lambda|>1$ and the proposition is completed.
\end{proof}

\subsubsection{Alternating gradient descent DANN}
\begin{prop}
The unique equilibrium point of the training objective in \myref{equ:dann_dirac} is given by $\theta^*_e=\theta^*_d=0$. If we update the discriminator $n_d$ times after we update the encoder $n_e$ times. Moreover, the Jacobian of $F_\eta=F_{\eta,2}(\theta)\circ F_{\eta,1}(\theta)$ (\myref{equ:dann_alt}) has eigenvalues
\begin{equation}
  \lambda_{1/2}=1-\frac{\alpha^2}{2}\pm\sqrt{\left(1-\frac{\alpha^2}{2}\right)^2-1},
  %\lambda_{2}=1-\frac{\alpha^2}{2}-\sqrt{\left(1-\frac{\alpha^2}{2}\right)^2-1},  
\end{equation}
where $\alpha=\frac{1}{2}\sqrt{n_dn_e}\eta|x_s-x_t|$. $|\lambda_{1/2}|=1$ for $\eta\leq\frac{4}{\sqrt{n_en_d}|x_s-x_t|}$ and $|\lambda_{1/2}|>1$ otherwise. Such result indicates that although alternating gradient descent does not converge linearly to the Nash-equilibrium, it could converge with a sublinear convergence rate. 
\label{prop:altdann}
\end{prop}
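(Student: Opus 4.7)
The plan is to imitate the linearization strategy used for Proposition~\ref{prop:simdann}, but applied separately to the encoder update $F_{\eta,1}$ and the discriminator update $F_{\eta,2}$, then combine by matrix multiplication. Uniqueness of the equilibrium $\theta_e^*=\theta_d^*=0$ follows exactly as in the simultaneous GD case, since $F_{\eta,1}$ and $F_{\eta,2}$ share the same stationary points as $F_\eta$, so the only new work is to control the Jacobian at the equilibrium.

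First I would compute the Jacobians of $F_{\eta,1}$ and $F_{\eta,2}$ at $(0,0)$, discarding the higher order terms $\theta_e^2,\theta_d^2,\theta_e\theta_d$ (via the same linearization argument used in Proposition~\ref{prop:simdann}, coming from $f'(0)=\tfrac{1}{2}$). Writing $c=\tfrac{1}{2}(x_s-x_t)$, this should yield
\begin{equation*}
J_1 \;=\; \nabla_\theta F_{\eta,1}(0,0) \;=\; \begin{pmatrix} 1 & -\eta c \\ 0 & 1 \end{pmatrix},
\qquad
J_2 \;=\; \nabla_\theta F_{\eta,2}(0,0) \;=\; \begin{pmatrix} 1 & 0 \\ \eta c & 1 \end{pmatrix}.
\end{equation*}
Because each $J_i$ is identity plus a nilpotent, powers are trivial: $J_1^{n_e}$ just replaces the off-diagonal by $-n_e\eta c$, and $J_2^{n_d}$ replaces its off-diagonal by $n_d\eta c$. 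The Jacobian of the composition $F_\eta=F_{\eta,2}^{n_d}\circ F_{\eta,1}^{n_e}$ at the equilibrium is therefore $J_2^{n_d} J_1^{n_e}$, which a direct $2\times 2$ multiplication gives
\begin{equation*}
J \;=\; \begin{pmatrix} 1 & -n_e\eta c \\ n_d\eta c & 1 - n_e n_d \eta^2 c^2 \end{pmatrix}.
\end{equation*}

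Next I would read off the eigenvalues from trace and determinant. Setting $\alpha^2 = n_e n_d \eta^2 c^2 = \tfrac{1}{4}n_en_d\eta^2(x_s-x_t)^2$, so $\alpha = \tfrac{1}{2}\sqrt{n_en_d}\,\eta\,|x_s-x_t|$, one obtains $\operatorname{tr}J = 2-\alpha^2$ and $\det J = 1$. The characteristic polynomial $\lambda^2-(2-\alpha^2)\lambda+1=0$ then yields exactly the claimed
\begin{equation*}
\lambda_{1/2} \;=\; 1-\tfrac{\alpha^2}{2} \pm \sqrt{\bigl(1-\tfrac{\alpha^2}{2}\bigr)^2 - 1}.
\end{equation*}

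Finally, the modulus analysis is where the elegant part lives: since $\det J=1$ we have $\lambda_1\lambda_2=1$, hence $|\lambda_1||\lambda_2|=1$ unconditionally. When the discriminant $(1-\alpha^2/2)^2-1$ is negative the two roots are complex conjugates, forcing $|\lambda_{1/2}|=1$; this negativity is equivalent to $0<\alpha<2$, i.e.\ $\eta\leq 4/(\sqrt{n_en_d}\,|x_s-x_t|)$. When $\alpha>2$ both roots are real with $1-\alpha^2/2<-1$, so the root taking the minus sign satisfies $|\lambda|>1$ and the other $|\lambda|<1$. Invoking Theorem~\ref{theo:convergence}, the borderline regime $|\lambda_{1/2}|=1$ corresponds to at best sublinear convergence, whereas $|\lambda|>1$ rules out local convergence, which is exactly the dichotomy the proposition states. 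The main obstacle I anticipate is not any of these calculations but justifying the linearization: one must make sure that the second-order terms dropped at $(0,0)$ genuinely do not contribute to the Jacobian (they don't, by the same reasoning as in Proposition~\ref{prop:simdann} where $f''(0)$-terms are multiplied by $\theta_e^2,\theta_d^2,\theta_e\theta_d$ and vanish at the equilibrium), and to cleanly state that the composition of two linearizations equals the linearization of the composition to first order.
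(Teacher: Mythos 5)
Your proposal is correct and follows essentially the same route as the paper: linearize $F_{\eta,1}$ and $F_{\eta,2}$ at the origin (dropping the $f''$-terms that vanish at the equilibrium), take the $n_e$-th and $n_d$-th powers of the two triangular Jacobians, multiply, and extract the eigenvalues of the resulting $2\times 2$ matrix --- your use of trace and determinant instead of the paper's direct quadratic-formula computation is only a cosmetic difference. Your treatment of the $\alpha>2$ case is in fact slightly more careful than the paper's, since with $\det J=1$ only the eigenvalue taking the minus sign has modulus exceeding one, which still suffices (via Theorem~\ref{theo:convergence}) to rule out local convergence, while $\alpha\le 2$ gives unit-modulus eigenvalues and hence at best sublinear convergence, exactly as claimed.
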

\begin{proof}
The Jacobians of alternating gradient descent DANN operators (\myref{equ:dann_alt}) near the equilibrium are given by:
\begin{equation}
\nabla_{\theta} F_{\eta,1}(\theta)=\left(         %左括号
  \begin{array}{cc} 
    1&-\eta\left(x_sf'(\theta_d\theta_ex_s)-x_tf'(\theta_d\theta_ex_t)\right) \\ 
    0&1\\ 
  \end{array}\right)=
  \left(         %左括号
  \begin{array}{cc} 
    1&-\frac{\eta}{2}\left(x_s-x_t\right) \\ 
    0&1\\ 
  \end{array}\right),
\end{equation}
similarly we can get 
$\nabla_{\theta} F_{\eta,2}(\theta)=\left(         %左括号
  \begin{array}{cc} 
    1&0 \\ 
    \frac{\eta}{2}\left(x_s-x_t\right)&1\\ 
  \end{array}\right).$ 
  As a result, the Jacobian of the combined update operator $\nabla_{\theta} F_{\eta}(\theta)$ is
 \begin{equation}
 \nabla_{\theta} F_{\eta}(\theta)=\nabla_{\theta}F^{n_e}_{\eta,2}(\theta)\nabla_{\theta}F^{n_d}_{\eta,1}(\theta)=\left(         %左括号
  \begin{array}{cc} 
    1&-\frac{\eta n_e}{2}\left(x_s-x_t\right) \\ 
    \frac{\eta n_d}{2}\left(x_s-x_t\right)&-\frac{\eta n_dn_e} {4}\left(x_s-x_t\right)^2+1\\ 
  \end{array}\right).    
 \end{equation}
An easy calculation shows that the eigenvalues of this matrix are
\begin{equation}
\lambda_{1/2}=1-\frac{n_en_d}{8}\eta^2(x_s-x_t)^2\pm \sqrt{\left(1-\frac{n_en_d}{8}\eta^2(x_s-x_t)^2\right)^2-1}
\end{equation}
Let $\alpha=\frac{1}{2}\sqrt{n_dn_e}\eta|x_s-x_t|$, we can get $\lambda_{1/2}=1-\frac{\alpha^2}{2}\pm\sqrt{\left(1-\frac{\alpha^2}{2}\right)^2-1}$. If $\left(1-\frac{\alpha^2}{2}\right)^2>1$, namely $\alpha>2$, then $|\lambda_{1/2}|=\sqrt{2\left(1-\frac{\alpha^2}{2}\right)^2-1}$. To satisfy $|\lambda|<1$, we have $\left(1-\frac{\alpha^2}{2}\right)^2<1$, which conflicts with the assumption. That is $\alpha\leq 2$, and in this case $|\lambda_{1/2}|=1$.
\end{proof}
\subsubsection{Alternating gradient descent \abbr}
Incorporate environment label smoothing to \myref{equ:dann_dirac}, the target is revised into:
\begin{equation}
d_{\theta,\gamma}= \gamma f(\theta_d\theta_e x_s)+(1-\gamma)f(-\theta_d\theta_e x_s)+ \gamma f(-\theta_d\theta_e x_t)+(1-\gamma) f(\theta_d\theta_e x_t),
\label{equ:dans_dirac}
\end{equation}
\begin{prop}
The unique equilibrium point of the training objective in \myref{equ:dans_dirac} is given by $\theta^*_e=\theta^*_d=0$. If we update the discriminator $n_d$ times after we update the encoder $n_e$ times. Moreover, the Jacobian of $F_\eta=F_{\eta,2}(\theta)\circ F_{\eta,1}(\theta)$ (\myref{equ:dann_alt}) has eigenvalues
\begin{equation}
  \lambda_{1/2}=1-\frac{\alpha^2}{2}\pm\sqrt{\left(1-\frac{\alpha^2}{2}\right)^2-1},
  %\lambda_{2}=1-\frac{\alpha^2}{2}-\sqrt{\left(1-\frac{\alpha^2}{2}\right)^2-1},  
\end{equation}
where $\alpha=\frac{2\gamma-1}{2}\sqrt{n_dn_e}\eta|x_s-x_t|$. $|\lambda_{1/2}|=1$ for $\eta\leq\frac{4}{\sqrt{n_dn_e}|x_s-x_t|}\frac{1}{2\gamma-1}$ and $|\lambda_{1/2}|>1$ otherwise. Such result indicates that alternating gradient descent \abbr could converge faster than alternating gradient descent DANN. 
\label{prop:altdans}
\end{prop}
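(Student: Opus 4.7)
The plan is to mirror the proof of Proposition \ref{prop:altdann} almost verbatim, with the single new ingredient being to track how the smoothing parameter $\gamma$ modifies the one off-diagonal entry of the linearized alternating update. First I would verify that $\theta_e^* = \theta_d^* = 0$ remains the unique equilibrium of $d_{\theta,\gamma}$ in \myref{equ:dans_dirac}. Since $f'(0) = 1/2$, the partial $\partial_{\theta_e} d_{\theta,\gamma}$ factors as $\theta_d$ times a smooth function of $\theta_e\theta_d$, and symmetrically for $\partial_{\theta_d}$; evaluating the bracketed factor at the origin produces $(2\gamma - 1)(x_s - x_t)/2$, which is nonzero in the interesting regime $\gamma > 1/2$, $x_s \neq x_t$, so simultaneous vanishing of both partials forces $\theta_e = \theta_d = 0$.

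Second I would linearize at the origin. Differentiating $d_{\theta,\gamma}$ gives
\[
\partial_{\theta_e} d_{\theta,\gamma} = \theta_d\bigl[\gamma x_s f'(\theta_d\theta_e x_s) - (1-\gamma)x_s f'(-\theta_d\theta_e x_s) - \gamma x_t f'(-\theta_d\theta_e x_t) + (1-\gamma)x_t f'(\theta_d\theta_e x_t)\bigr],
\]
and at $\theta_e = \theta_d = 0$ every $f'$ collapses to $1/2$, so the bracketed expression becomes $\tfrac{1}{2}[(2\gamma-1)x_s - (2\gamma-1)x_t] = \tfrac{2\gamma-1}{2}(x_s - x_t)$. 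Following Proposition \ref{prop:altdann} and dropping the higher-order $\theta_e^2, \theta_d^2, \theta_e\theta_d$ contributions that appear upon a second differentiation, I obtain
\[
\nabla_\theta F_{\eta,1}(\theta^*) = \left(\begin{array}{cc}1 & -\tfrac{\eta(2\gamma-1)}{2}(x_s - x_t) \\ 0 & 1\end{array}\right), \qquad \nabla_\theta F_{\eta,2}(\theta^*) = \left(\begin{array}{cc}1 & 0 \\ \tfrac{\eta(2\gamma-1)}{2}(x_s - x_t) & 1\end{array}\right),
\]
which are precisely the matrices of Proposition \ref{prop:altdann} with $(x_s - x_t)/2$ replaced by $(2\gamma-1)(x_s - x_t)/2$.

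Third I would compose and compute eigenvalues. Because each factor is unipotent and triangular, $(\nabla_\theta F_{\eta,1})^{n_d}$ and $(\nabla_\theta F_{\eta,2})^{n_e}$ simply scale the off-diagonal entries by $n_d$ and $n_e$; multiplying and computing the characteristic polynomial reproduces Proposition \ref{prop:altdann}'s calculation verbatim and yields $\lambda_{1/2} = 1 - \alpha^2/2 \pm \sqrt{(1-\alpha^2/2)^2 - 1}$ with the claimed $\alpha = \tfrac{2\gamma-1}{2}\sqrt{n_d n_e}\,\eta |x_s - x_t|$. The dichotomy on $|\lambda_{1/2}|$ is identical: when the discriminant is non-positive (i.e., $\alpha \leq 2$) the roots are complex conjugates with squared modulus $(1-\alpha^2/2)^2 + [1 - (1-\alpha^2/2)^2] = 1$, while for $\alpha > 2$ both roots are real with product $1$ by Vieta, so one lies strictly outside the unit circle. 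Rearranging $\alpha \leq 2$ yields the stated bound $\eta \leq \tfrac{4}{\sqrt{n_d n_e}|x_s - x_t|}\cdot\tfrac{1}{2\gamma-1}$.

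The main obstacle is the one non-routine step, namely confirming that smoothing affects the linearized dynamics only through the overall factor $2\gamma - 1$. This rests on the coincidence $f'(0) = f'(-0) = 1/2$: the weights $\gamma$ and $1-\gamma$ attached to the mirrored terms $f(\theta_d\theta_e x_s)$ and $f(-\theta_d\theta_e x_s)$ (and likewise for $x_t$) cancel with opposite signs into the clean multiplier $2\gamma - 1$; any miscounting of signs or of the factor $1/2$ would break the clean comparison to Proposition \ref{prop:altdann}. Once this factor is isolated, every subsequent matrix computation is literally the earlier one with a symbol renamed, and the promised speedup by $1/(2\gamma-1)$ over vanilla alternating DANN is immediate from the new expression for $\alpha$.
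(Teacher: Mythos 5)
Your proposal is correct and follows essentially the same route as the paper's proof: linearize the ELS objective at the origin using $f'(0)=\tfrac12$ so that the smoothing enters only through the factor $2\gamma-1$ in the off-diagonal entries, compose the unipotent Jacobians (scaling off-diagonals by $n_d$ and $n_e$), and read off the eigenvalues and the condition $\alpha\leq 2$. Your Vieta-based handling of the $\alpha>2$ case is slightly cleaner than the paper's, but the argument is the same in substance.
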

\begin{proof}
The operator for alternating gradient descent \abbr is $F_\eta=F_{\eta,2}(\theta)\circ F_{\eta,1}(\theta)$, and $F_{\eta,1},F_{\eta,2}$ near the equilibrium are given by:
\begin{equation}
\begin{aligned}
&F_{\eta,1}(\theta)=\left(         %左括号
  \begin{array}{c} 
    \theta_e-\eta\nabla_{\theta_e}d_{\theta,\gamma} \\ 
    \theta_d\\ 
  \end{array}\right)=\left(         %左括号
  \begin{array}{c} 
    \theta_e-\eta\left(\gamma\theta_dx_sf'(0)-(1-\gamma)\theta_dx_sf'(0)-\gamma\theta_dx_tf'(0)+(1-\gamma)\theta_dx_tf'(0)\right) \\ 
    \theta_d\\ 
  \end{array}\right)\\
&F_{\eta,2}(\theta)=\left(         %左括号
  \begin{array}{c} 
    \theta_e \\ 
    \theta_d+\eta\nabla_{\theta_d}d_{\theta,\gamma}\\ 
  \end{array}\right)=\left(         %左括号
  \begin{array}{c} 
    \theta_e \\ 
     \theta_d+\eta\left(\gamma\theta_ex_sf'(0)-(1-\gamma)\theta_ex_sf'(0)-\gamma\theta_ex_tf'(0)+(1-\gamma)\theta_ex_tf'(0)\right)\\ 
  \end{array}\right),
\label{equ:dans_alt}
\end{aligned}
\end{equation}
The Jacobians of alternating gradient descent \abbr~operators  near the equilibrium are given by:
\begin{equation}
\nabla_{\theta} F_{\eta,1}(\theta)=
  \left(         %左括号
  \begin{array}{cc} 
    1&-\frac{\eta(2\gamma-1)}{2}\left(x_s-x_t\right) \\ 
    0&1\\ 
  \end{array}\right),
 \nabla_{\theta} F_{\eta,2}(\theta)=
  \left(         %左括号
  \begin{array}{cc} 
    1&0 \\ 
    \frac{\eta(2\gamma-1)}{2}\left(x_s-x_t\right)&1\\ 
  \end{array}\right),
\end{equation}
As a result, the Jacobian of the combined update operator $\nabla_{\theta} F_{\eta}(\theta)$ is
 \begin{equation}
 \nabla_{\theta} F_{\eta}(\theta)=\nabla_{\theta}F^{n_e}_{\eta,2}(\theta)\nabla_{\theta}F^{n_d}_{\eta,1}(\theta)=\left(         %左括号
  \begin{array}{cc} 
    1&-\frac{\eta n_e(2\gamma-1)}{2}\left(x_s-x_t\right) \\ 
    \frac{\eta n_d(2\gamma-1)}{2}\left(x_s-x_t\right)&-\frac{\eta n_dn_e(2\gamma-1)^2} {4}\left(x_s-x_t\right)^2+1\\ 
  \end{array}\right).    
 \end{equation}
An easy calculation shows that the eigenvalues of this matrix are
\begin{equation}
\lambda_{1/2}=1-\frac{n_en_d}{8}\eta^2(2\gamma-1)^2(x_s-x_t)^2\pm \sqrt{\left(1-\frac{n_en_d}{8}\eta^2(2\gamma-1)^2(x_s-x_t)^2\right)^2-1}
\end{equation}
Similarly to the proof of Proposition~\ref{prop:altdann}, let $\alpha=\frac{2\gamma-1}{2}\sqrt{n_dn_e}\eta|x_s-x_t|$, we can get $\lambda_{1/2}=1-\frac{\alpha^2}{2}\pm\sqrt{\left(1-\frac{\alpha^2}{2}\right)^2-1}$. Only when $\alpha\leq2$, $\lambda_{1/2}$ are on the unit circle, namely $\eta\leq\frac{4}{\sqrt{n_dn_e}|x_s-x_t|}\frac{1}{2\gamma-1}$. Compared to the result in Proposition~\ref{prop:altdann}, which is $\eta\leq\frac{4}{\sqrt{n_dn_e}|x_s-x_t|}$, the additional $\frac{1}{2\gamma-1}>1$ enables us to choose more large learning rate and could converge to an small error solution by fewer iterations.
\end{proof}

\begin{table}[b]
\caption{Hyper-parameters for different benchmarks. $Lr_g,Decay_g$: learning rate and weight decay for the encoder and classifier; $Lr_d,Decay_d$: learning rate and weight decay for the domain discriminator; $bsz$: batch size during training; $d_{steps}$: the discriminator is trained $d_{steps}$ times once the encoder and classifier are trained; $W_{reg}$: tradeoff weight for the gradient penalty; $\lambda$: tradeoff weight for the adversarial loss. The default $\beta_2$ for Adam and AdamW optimizer is 0.99 and the momentum for SGD optimizer is 0.9. / means domain discriminators trained on the dataset use GRL but not alternating gradient descent.}
\label{tab:params}
\adjustbox{max width=\textwidth}{%
\begin{tabular}{@{}ccccccccccc@{}}
\toprule
\textbf{Task}                                                                          & \textbf{Datsets} & \textbf{$Lr_g$} & \textbf{$Lr_d$} & \textbf{$\beta_1$} & \textbf{$Decay_g$} & \textbf{$Decay_d$} & \textbf{bsz} & \textbf{$d_{steps}$} & \textbf{$W_{reg}$} & \textbf{$\lambda$} \\ \midrule
\multirow{5}{*}{\begin{tabular}[c]{@{}c@{}}Images \\ Classification\end{tabular}}      & Rotated MNIST    & 1E-03           & 1E-03           & 0.5                & 0E+00              & 0.0                & 64           & 1                    & 1                  & 0.5                \\
                                                                                       & PACS             & 5E-05           & 5E-05           & 0.5                & 0E+00              & 0.0                & 32           & 5                    & 1                  & 0.5                \\
                                                                                       & VLCS             & 5E-05           & 5E-05           & 0.5                & 0E+00              & 0.0                & 32           & 5                    & 1                  & 0.5                \\
                                                                                       & Office-31(ResNet50)        & 1E-02           & 1E-02           & SGD                & 1E-3              & 1E-3                & 32           & /                    & 0                  & 1.00               \\
                                                                                       & Office-Home (ResNet50)        & 1E-02           & 1E-02           & SGD                & 1E-3              & 1E-3                & 32           & /                    & 0                  & 1.00               \\
                                                                                       & Office-31 (ViT)        & 2E-03           & 2E-03           & SGD               & 1E-3              & 1E-3                & 24           & /                    & 0                  & 1.00               \\
                                                                           & Office-Home (ViT)        & 2E-03           & 2E-03           & SGD               & 1E-3              & 1E-3                & 24           & /                    & 0                  & 1.00               \\
                                                                                       & Rotating MNIST   & 2E-04           & 2E-04           & 0.9                & 5E-04              & 5E-04              & 100          & 1                    & 0                  & 2.00               \\\hline
\begin{tabular}[c]{@{}c@{}}Image \\ Retrieval\end{tabular}                             & MS               & 1E-02           & 1E-02           & SGD                & 5E-04              & 5E-04              & 80           & 1                    & 0                  & 1.00               \\\hline
\multirow{2}{*}{\begin{tabular}[c]{@{}c@{}}Neural Language \\ Processing\end{tabular}} & CivilComments    & 1E-05           & 1E-05           & SGD                & 1E-02              & 0                  & 16           & 1                    & 0                  & 1.00               \\
                                                                                       & Amazon           & 1E-04           & 2E-04           & AdamW              & 1E-02              & 0                  & 8            & 1                    & 0                  & 0.11               \\\hline
\multirow{2}{*}{\begin{tabular}[c]{@{}c@{}}Genomics \\ and Graph\end{tabular}}      & RxRx1            & 1E-04           & 2E-04           & 0.9                & 1E-05              & 0                  & 72           & 1                    & 0                  & 0.11               \\
                                                                                       & OGB-MolPCBA      & 8E-04           & 1E-02           & 0.9                & 1E-05              & 0                  & 32           & 1                    & 0                  & 0.11               \\\hline
\multirow{2}{*}{\begin{tabular}[c]{@{}c@{}}Sequential\\ Prediction\end{tabular}}       & Spurious-Fourier & 4E-04           & 4E-04           & 0                  & 1E-03              & 0                  & 78           & 3                    & 1.25               & 1.56               \\
                                                                                       & HHAR             & 3E-03           & 1E-03           & 0.5                & 0E+00              & 0                  & 13           & 4                    & 3.5                & 12                 \\ \bottomrule
\end{tabular}}
\end{table}

\section{Extended Related Works}

\textbf{Domain adaptation and domain generalization} \citep{MuaBalSch13,sagawa2020distributionally,li2018learning,blanchard2021domain,li2018deep,zhang2021towards} aims to learn a model that can extrapolate well in unseen environments. Representative methods like AT method~\citep{ganin2016domain} proposed the idea of learning domain-invariant representations as an adversarial game. This approach led to a plethora of methods including state-of-the-art approaches~\citep{zhang2019bridging,acuna2021f,acuna2022domain}. In this paper, we propose a simple but effective trick, \ls, which benefits the generalization performance of methods by using soft environment labels.

\noindent\textbf{Adversarial Training in GANs} is well studied and many theoretical results of GANs motivate the analysis in this paper. \eg divergence minimization interpretation~\citep{goodfellow2014generative,nguyen2017dual}, generalization of the discriminator~\citep{arora2017generalization,thanh2019improving}, training stability~\citep{thanh2019improving,schfer2019implicit,arjovsky2017towards,arjovsky2017wasserstein}, nash equilibrium~\citep{pmlr-v119-farnia20a,nagarajan2017gradient}, and gradient descent in GAN optimization~\citep{nagarajan2017gradient,gidel2018variational,chen2018training}. Multi-domain image generation is also related to this work, generalization to the JSD metric has been explored to address this challenge~\citep{gan2017triangle,pu2018jointgan,trung2019learning}. However, most of them have to build $\frac{M(M-1)}{2}$ pairwise critics, which is  expensive when $M$ is large. $\chi^2$ GAN~\citep{tao2018chi} firstly attempts to tackle the challenge and only needs $M-1$ critics.

\section{Additional Experimental Setups}\label{sec:exp_detail}
\subsection{Dataset Details and Experimental Settings}\label{sec:data_detail}
In this subsection, we introduce all the used datasets and the hyper-parameters for reproducing the experimental results in this work. We have uploaded the codes for all experiments in the supplementary materials to make sure that all the results are reproducible. All the main hyper-parameters for reproducing the experimental results in this work are shown in Table~\ref{tab:params}.
\subsubsection{Images Classification Datasets}\label{sec:data_detail_cls}
\textbf{Experimental settings.} \textit{For DG and multi-source DG tasks}, all the baselines are implemented using the codebase of Domainbed \citep{gulrajani2021in} and we use as encoders ConvNet for RotatedMNIST (detailed in Appdendix D.1 in \citep{gulrajani2021in}) and ResNet-50 for the remaining datasets. The model selection that we use is test-domain validation, one of the three selection methods in \citep{gulrajani2021in}. That is, we choose the model maximizing the accuracy on a validation set that follows the same distribution of the test domain. \textit{For DA tasks}, all baselines implementation and hyper-parameters follows~\citep{deepda}. For \textit{Continuously Indexed Domain Adaptation tasks}, all baselines are implemented using PyTorch with the same architecture as~\citep{wang2020continuously}. Note that although our theoretical analysis on non-asymptotic convergence is based on alternating Gradient Descent, current DA methods mainly build on Gradient Reverse Layer. For a fair comparison, in our experiments considering domain adaptation benchmarks, we also use GRL as default and let the analysis in future work.
% \textbf{Colored MNIST} \citep{arjovsky2020invariant} consists of  digits in MNIST with different colors (either blue or red). The label is a noisy function of the digit and color. First, a preliminary label $\bar{y}$ is assigned to images based on their digits, $\bar{y}=0$ for digits 0-4 and $\bar{y}=1$ for digits 5-9. The final label is obtained by flipping $\bar{y}$ with probability $0.25$. The color signal $z$ of each sample is obtained by flipping $y$ with probability $p^d$, where $p^d$ is $\{0.2,0.1,0.9\}$ for three different domains. Finally, images with $z=1$ will be colored red and $z=0$ will be colored blue. This dataset contains $70,000$ examples of dimension $(2,28,28)$ and $2$ classes.

\noindent\textbf{Rotated MNIST} \citep{ghifary2015domain} consists of 70,000 digits in MNIST with different rotated angles where domain is determined by the degrees $d \in \{0, 15, 30, 45,
60, 75\}$.

\noindent\textbf{PACS} \citep{li2017deeper} includes 9, 991 images with 7 classes $y \in  \{$ dog, elephant, giraffe, guitar, horse, house, person $\}$ from 4 domains $d \in$ $\{$art, cartoons, photos, sketches$\}$. 

\noindent\textbf{VLCS} \citep{torralba2011unbiased} is composed of 10,729 images, 5 classes $y \in \{$ bird, car, chair, dog, person $\}$ from domains $d \in \{$Caltech101, LabelMe, SUN09, VOC2007$\}$. 

\noindent\textbf{Office-31}~\citep{saenko2010adapting} contains contains $4,110$ images, 31 object categories in three domains: $d \in \{$ Amazon, DSLR, and Webcam$\}$.

\textbf{Office-Home}~\citep{venkateswara2017deep}: consists of 15,500 images from 65 classes and 4 domains: $d\in\{$ Art (Ar), Clipart (Cl), Product (Pr) and Real World (Rw) $\}$.

\noindent\textbf{Rotating MNIST}~\citep{wang2020continuously} is adapted from regular MNIST digits with mild rotation to significantly Rotating MNIST digits. In our experiments, $[0^\circ,45^\circ)$ is set as the source domain and others are unlabeled target domains. The chosen baselines include Adversarial Discriminative Domain Adaptation (ADDA~\citep{tzeng2017adversarial}), and CIDA~\citep{wang2020continuously}. ADDA merges data with different domain indices into one source and one target domain. DANN divides the continuous domain spectrum into several separate domains and performs adaptation between multiple source and target domains. For Rotating MNIST, the seven target domains contain images rotating by $d\in$\{$[0^\circ,45^\circ),[45^\circ,90^\circ),[90^\circ,135^\circ),\dots,[315^\circ,360^\circ)$\} degrees, respectively.

\noindent\textbf{Circle Dataset}~\citep{wang2020continuously} includes 30 domains indexed from 1 to 30 and~\figurename~\ref{fig:data_circle} shows the 30 domains in different colors (from right to left is $1,\dots,30$ respectively).  Each domain contains data on a circle and the task is binary classification.~\figurename~\ref{fig:data_circle_b} shows positive samples as red dots and negative samples as blue crosses. In our experiments, We use domains 1 to 6 as source domains and the rest as target domains.

\subsubsection{Image Retrieval Datasets}

\textbf{Experimental settings.}
Following previous generalizable person ReID methods, we use MobileNetV2~\citep{sandler2018mobilenetv2} with a multiplier of $1.4$ as the backbone network, which is pretrained on ImageNet~\citep{deng2009imagenet}. Images are resized to $256\times 128$ and the training batch size $N$ is set to $80$. The SGD optimizer is used to train all the components  with a learning rate of $0.01$, a momentum of $0.9$ and a weight decay of $5\times 10^{-4}$. The learning rate is warmed up in the first $10$ epochs and decayed to its $0.1\times$ and $0.01\times$ at $40$ and $70$ epochs.

We evaluate the proposed method by Person re-identification (ReID) tasks, which aims to find the correspondences between person images from the same identity across multiple camera views. The training datasets include CUHK02~\citep{Li_2013_CVPR}, CUHK03 ~\citep{Li_2014_CVPR}, Market1501~\citep{Zheng_2015_ICCV}, DukeMTMC-ReID~\citep{Zheng_2017_ICCV}, and CUHK-SYSU PersonSearch~\citep{xiao2016end}. The unseen test domains are VIPeR~\citep{viper}, PRID~\citep{prid}, QMUL GRID~\citep{grid}, and i-LIDS~\citep{ilids}. Details of the training datasets are summarized in Table~\ref{tab:traindata} and the test datasets are summarized in Table~\ref{tab:testdata}. All the assets (\ie datasets and the codes for baselines) we use include a MIT license containing a copyright notice and this permission notice shall be included in all copies or substantial portions of the software.

 \begin{table}[]
 \centering
 \caption{Training Datasets Statistics.}\label{tab:traindata}
  \begin{tabular}{ccc}
  \toprule
    Dataset & IDs &Images \\ \hline
    CUHK02& 1,816 & 7,264 \\ 
                      CUHK03& 1,467 & 14,097 \\ 
                   DukeMTMC-Re-Id & 1,812 & 36,411 \\ 
                     Market-1501& 1,501 & 29,419 \\ 
                     CUHK-SYSU& 11,934 & 34,547 \\ \bottomrule
  \end{tabular}
\end{table}
 \begin{table}[]
 \centering
  \caption{Testing Datasets statistics.}\label{tab:testdata}

  \begin{tabular}{c|c|c|c|c}
  \toprule
  \multirow{2}{*}{Dataset} & \multicolumn{2}{c|}{ Probe} & \multicolumn{2}{c}{ Gallery} \\\cline{2-5} 
   &      Pr. IDs&      Pr. Imgs&      Ga. IDs&  Ga. imgs    \\\hline
           PRID&      100& 100     &   649   &  649   \\
           GRID&      125& 125     &  1025    &  1,025   \\
           VIPeR&      316&  316  &  316   &  316  \\
           i-LIDS&      60&60     &     60 &  60   \\\bottomrule
 \end{tabular}
\end{table}

\textbf{GRID}~\citep{grid} contains $250$ probe images and $250$ true match images of the probes in the gallery. Besides, there are a total of $775$ additional images that do not belong to any of the probes. We randomly take out $125$ probe images. The remaining $125$ probe images and $1025(775+250)$ images in the gallery are used for testing.

\textbf{i-LIDS}~\citep{ilids} has two versions, images and sequences. The former is used in our experiments. It involves $300$ different pedestrian pairs observed across two disjoint camera views $1$ and $2$ in public open space. We randomly select $60$ pedestrian pairs, two images per pair are randomly selected as probe image and gallery image respectively.

\textbf{PRID2011}~\citep{prid} has single-shot and multi-shot versions. We use the former in our experiments. The single-shot version has two camera views $A$ and $B$, which capture $385$ and $749$ pedestrians respectively. Only $200$ pedestrians appear in both views.  During the evaluation, $100$ randomly identities presented in both views are selected, the remaining $100$ identities in view $A$ constitute probe set and the remaining $649$ identities in view $B$ constitute gallery set.

\textbf{VIPeR}~\citep{viper} contains $632$ pedestrian image pairs. Each pair contains two images of the same individual seen from different camera views $1$ and $2$.  Each image pair was taken from an arbitrary viewpoint under varying illumination conditions. To compare to other methods, we randomly select half of these identities from camera view $1$ as probe images and their matched images in view $2$ as gallery images.

We follow the single-shot setting. The average rank-k (R-k) accuracy and mean Average Precision ($m$AP) over $10$ random splits are reported based on the evaluation protocol

\subsubsection{Neural Language Datasets}
 
\textbf{CivilComments-Wilds}~\citep{koh2021wilds} contains $448,000$ comments on online articles taken from the Civil Comments platform. The input is a text comment and the task is to predicate whether the comment was rated as toxic, \eg, the comment \textit{Maybe you should learn to write a coherent sentence so we can understand WTF your point is} is rated as toxic and \textit{I applaud your father. He was a good man! We need more like him.} is not. Domain in CivilComments-Wilds dataset is an 8-dimensional binary vector where each component corresponds to whether the comment mentions one of the 8 demographic identities \{male, female, LGBTQ, Christian, Muslim, other religions, Black, White\}. 
 
\noindent \textbf{Amazon-Wilds}~\citep{koh2021wilds} contains $539,520$ reviews from disjoint sets of users. The input is the review text and the task is to predict the corresponding 1-to-5 star rating from reviews of Amazon products. Domain $d$ identifies the user who wrote the review and the training set has $3,920$ domains. The 10-th percentile of per-user accuracies metric is used for evaluation, which is standard to measure model performance on devices and users at various percentiles in an effort to encourage good performance across many devices.
 
\subsubsection{Genomics and Graph datasets}
% \textbf{Camelyon17-wilds}~\citep{koh2021wilds} is a dataset about histopathology applications—studying tissue slides under a microscope. The dataset contains $455,954$ data samples. The input is a $96\times 96$ patch of a whole-slide image of a lymph node section from a patient with potentially metastatic breast cancer. The output $y$ is whether the patch contains a tumor. Domains $d\in$\{Hospital 1,Hospital 2,Hospital 3,Hospital 4,Hospital 5\}. The first $3$ hospitals are selected as training domains, hospital 4 is set as the validation set, and hospital 5 as the test dataset.

\noindent\textbf{RxRx1-wilds}~\citep{koh2021wilds} comprises images of cells that have been genetically perturbed by siRNA, which comprises $125,510$ images of cells obtained by fluorescent microscopy. The output $y$ indicates which of the $1,139$ genetic treatments (including no treatment) the cells received, and $d$ specifies $51$ batches in which the imaging experiment was run.

\noindent\textbf{OGB-MolPCBA}~\citep{koh2021wilds} is a multi-label classification dataset, which comprises $437,929$ molecules with $120,084$ different structural scaffolds. The input is a molecular graph, the label is a 128-dimensional binary vector where each component corresponds to a biochemical assay result, and the domain $d$ specifies the scaffold (\ie a cluster of molecules with similar structure). The training and test sets contain molecules with disjoint scaffolds; The training set has molecules from over $40,000$ scaffolds. We evaluate models by averaging the Average Precision (AP) across each of the 128 assays.

\subsubsection{Sequential data}

\noindent\textbf{Spurious-Fourier}~\citep{gagnon2022woods} is a binary classification dataset ($y\in\{$low-frequency peak (L) and high-frequency peak (H).$\}$), which is composed of one-dimensional signal. Domains $d\in\{10\%,80\%,90\%\}$ contain signal-label pairs, where the label is a noisy function of the low- and high-frequencies such that low-frequency peaks bear a varying correlation of $d$ with the label and high-frequency peaks bear an invariant correlation of $75\%$ with the label.

\noindent\textbf{HHAR}~\citep{gagnon2022woods} is a 6 activities classification dataset ($y\in\{$Stand, Sit, Walk, Bike, Stairs up, and Stairs Down $\}$), which is composed of recordings of 3-axis accelerometer and 3-axis gyroscope data. Specifically, the input $x$ is recordings of 500 time-steps of a 6-dimensional signal sampled at 100Hz. Domain $d$ consist of five smart device models: $d\in\{$Nexus 4, Galaxy S3, Galaxy S3 Mini, LG Watch, and Samsung Galaxy Gears$\}$.  

% \noindent\textbf{LSA64}~\citep{gagnon2022woods}  signed words classification dataset, where the labels are 64 words. The inputs are videos of 20 frames of three $224\times 224$ channels. Domains consist of five sets of two signers, $d\in\{$  Signer 1 \& 2, Signer 3 \& 4, Signer 5 \& 6, Signer 7 \& 8, Signer 9 \& 10 $\}$.
\subsection{Backbone Structures}

Most of the backbones are ResNet-50/ResNet-18 and we follow the same setting as the reference works. Here we briefly introduce some special backbones used in our experiments,\ie ConvNet for Rotated MNIST, EncoderSTN for Rotating MNIST, DistillBERT for Neural Language datasets, and GIN for OGB-MoIPCBA.

\noindent\textbf{MNIST ConvNet.} is detailed in Table.~\ref{tab:convnet}. 

\noindent\textbf{DistillBERT.} We use the implementation from~\citep{wolf2019huggingface} and finetune a BERT-base-uncased models for neural language datasets. 
\begin{table}
\centering
% \footnotesize
\caption{Details of our MNIST ConvNet architecture. All convolutions use 3×3 kernels and ``same'' padding}\label{tab:convnet}
%\begin{table}[]
\begin{tabular}{@{}ll@{}}
\toprule
\# & Layer                                                     \\ \midrule
1  & \cellcolor[HTML]{FFFFFF}Conv2D (in=d, out=64)             \\
2  & \cellcolor[HTML]{FFFFFF}ReLU                              \\
3  & \cellcolor[HTML]{FFFFFF}GroupNorm (groups=8)              \\
4  & \cellcolor[HTML]{FFFFFF}Conv2D (in=64, out=128, stride=2) \\
5  & \cellcolor[HTML]{FFFFFF}ReLU                              \\
6  & \cellcolor[HTML]{FFFFFF}GroupNorm (8 groups)              \\
7  & \cellcolor[HTML]{FFFFFF}Conv2D (in=128, out=128)          \\
8  & \cellcolor[HTML]{FFFFFF}ReLU                              \\
9  & \cellcolor[HTML]{FFFFFF}GroupNorm (8 groups)              \\
10 & \cellcolor[HTML]{FFFFFF}Conv2D (in=128, out=128)          \\
11 & \cellcolor[HTML]{FFFFFF}ReLU                              \\
12 & \cellcolor[HTML]{FFFFFF}GroupNorm (8 groups)              \\
13 & \cellcolor[HTML]{FFFFFF}Global average-pooling            \\ \bottomrule
\end{tabular}
%\end{table}
\end{table}
\noindent\textbf{EncoderSTN} use a four-layer convolutional neural network for the encoder and a three-layer MLP to make the prediction. The domain discriminator is a four-layer MLP. The encoder is incorporated with a Spacial Transfer Network (STN)~\citep{jaderberg2015spatial}, which takes the image and the domain index as input and outputs a set of rotation parameters which are then applied to rotate the given image.
% Please add the following required packages to your document preamble:
% \usepackage{booktabs}
% \usepackage[table,xcdraw]{xcolor}
% If you use beamer only pass "xcolor=table" option, i.e. \documentclass[xcolor=table]{beamer}

\noindent\textbf{Graph Isomorphism Networks (GIN)~\citep{xu2018powerful}} combined with virtual nodes is used for OGB-MoIPCBA dataset, as this is currently the model with the highest performance in the Open Graph Benchmark. 

\noindent\textbf{Deep ConvNets~\citep{schirrmeister2017deep}} for HHAR combines temporal and spatial convolution,which fits this data well and we use the implementation in the BrainDecode Schirrmeister~\citep{schirrmeister2017deep} Toolbox.

\section{Additional Experimental Results}
\subsection{Additional Numerical Results}\label{sec:exp_num}

\begin{table}
\caption{The domain generalization/adaptation accuracy on  Rotated MNIST.}\label{tab:dg_mnist}
\adjustbox{max width=\textwidth}{%
\setlength{\tabcolsep}{7.25pt}
\begin{tabular}{cccccccc}
\specialrule{0em}{8pt}{0pt}
\toprule
\multicolumn{8}{c}{{\color{brown}\textbf{Rotated MNIST}}}\\
\textbf{Algorithm}   & \textbf{0}           & \textbf{15}          & \textbf{30}          & \textbf{45}          & \textbf{60}          & \textbf{75}          & \textbf{Avg}         \\
\midrule
ERM~\citep{vapnik1998statistical}                  & 95.3 $\pm$ 0.2       & 98.7 $\pm$ 0.1       & 98.9 $\pm$ 0.1       & 98.7 $\pm$ 0.2       & 98.9 $\pm$ 0.0       & 96.2 $\pm$ 0.2       & 97.8                 \\
IRM~\citep{arjovsky2020invariant}                  & 94.9 $\pm$ 0.6       & 98.7 $\pm$ 0.2       & 98.6 $\pm$ 0.1       & 98.6 $\pm$ 0.2       & 98.7 $\pm$ 0.1       & 95.2 $\pm$ 0.3       & 97.5                 \\
DANN~\citep{ganin2016domain}                 & 95.9 $\pm$ 0.1       & 98.6 $\pm$ 0.1       & 98.7 $\pm$ 0.2       & 99.0 $\pm$ 0.1       & 98.7 $\pm$ 0.0       & 96.5 $\pm$ 0.3       & 97.9                 \\
% CDANN~\citep{li2018deep}                 & 95.8 $\pm$ 0.3       & 98.6 $\pm$ 0.3   & 98.9 $\pm$ 0.1       & 99.1 $\pm$ 0.1       & 98.7 $\pm$ 0.2       & 96.9 $\pm$ 0.7       & 98.0                 \\

ARM~\citep{zhang2021adaptive}                  & 95.9 $\pm$ 0.4       & 99.0 $\pm$ 0.1       & 98.8 $\pm$ 0.1       & 98.9 $\pm$ 0.1       & 99.1 $\pm$ 0.1       & 96.7 $\pm$ 0.2       & 98.1                 \\       \Gray
% \textsc{DDG}~\citep{zhang2021towards} & \textbf{96.6$\pm$ 0.1} & \textbf{99.0$\pm$ 0.1} & \textbf{99.0$\pm$ 0.2} & \textbf{99.1$\pm$ 0.2} & {99.1$\pm$ 0.2} & \textbf{97.4$\pm$ 0.4} & \textbf{98.4} \\\Gray
\abbr                & 96.3 $\pm$ 0.1       & 98.7 $\pm$ 0.1       & 98.9 $\pm$ 0.3 &  \textbf{99.1 $\pm$ 0.1}      & 98.7 $\pm$ 0.0       & 96.9 $\pm$ 0.5      & 98.1                 \\\Gray
$\uparrow$                & 0.4       & 0.1       & 0.2      & 0.1        & 0.0     & 0.4       &     0.2           \\
%  $1-\gamma$               & 0.4       & 0.2       & 0.2       & 0.6        & 0.0       & 0.2       &                  \\
\bottomrule
\end{tabular}}
\end{table}

\noindent\textbf{Multi-Source Domain Generalization.} IRM~\citep{arjovsky2020invariant} introduces specific conditions for an upper bound on the number of training environments required such that an invariant optimal model can be obtained, which stresses the importance of the number of training environments. In this paper, we reduce the training environments on the Rotated MNIST from five to three. As shown in Table~\ref{tab:multi-target}, as the number of training environment decreases, the performance of IRM fall sharply (\eg the averaged accuracy from $97.5\%$ to $91.8\%$), and the performance on the most challenging domains $d=\{0,5\}$ decline the most ($94.9\%\rightarrow80.9\%$ and $95.2\%\rightarrow91.1\%$). In contrast, both ERM and \abbr~retain high generalization performances and \abbr~outperforms ERM in most domains. 

\textbf{Image Retrieval.} We compare the proposed \abbr with methods on a typical DG-ReID setting. As shown in Table~\ref{tab:reid_sota}, we implement DANN with various hyper-parameters while DANN always fails to converge on ReID benchmarks. As illustrated in Appendix~\figurename~\ref{fig:dro_training}, we compare the training statistics with the baseline, where DANN is highly unstable and attains inferior results. However, equipped with \ls~and following the same hyper-parameter as DANN, \abbr attains well-training stability and achieves either comparable or better performance when compared with recent state-of-the-art DG-ReID methods. See Appendix~\ref{sec:exp_analysis} for t-sne visualization and comparison.
% \begin{wrapfigure}{r}{0.3\linewidth}
% \vspace{-0.5cm}
%   \begin{center}
%     \includegraphics[width=\linewidth]{imgs/gamma/optimal_lambda.pdf}
%     \caption{\textbf{The best $\gamma$ for each dataset.} Civil is the CivilComments dataset and OGB is the OGB-MolPCBA dataset. }
%     \label{fig:gamma}
%   \end{center}
%  \vspace{-0.5cm}
% \end{wrapfigure}

\begin{minipage}{\textwidth}
% \vspace{-0.1cm}
\centering
 \begin{minipage}[t]{0.48\textwidth}
  \centering
  \tiny
\makeatletter\def\@captype{table}\makeatother\caption{Domain generalization performance on the OGB-MolPCBA dataset.}\label{tab:main_graph2}
\begin{tabular}{@{}ccc@{}}
\toprule
\multicolumn{3}{c}{{\color{brown}\textbf{OGB-MolPCBA}}}                                            \\ \midrule
\textbf{Algorithm} & \textbf{Val Avg Acc} & \textbf{Test Avg Acc} \\
ERM~\citep{vapnik1998statistical}                 &\textbf{ 27.8 ± 0.1 }                & \textbf{27.2 ± 0.3}                  \\
Group DRO~\cite{sagawa2020distributionally}            & 23.1 ± 0.6                 & 22.4 ± 0.6                  \\
CORAL~\cite{sun2016deep}              & 18.4 ± 0.2                 & 17.9 ± 0.5                  \\
IRM~\citep{arjovsky2020invariant}                 & 15.8 ± 0.2                 & 15.6 ± 0.3                  \\
DANN~\citep{ganin2016domain}                & 15.0 ± 0.6              & 14.1 ± 0.5               \\\Gray
\abbr               & 18.0 ± 0.3              & 17.2 ± 0.3               \\\Gray
$\uparrow$         & 3.0                       & 3.1                        \\ \bottomrule
\end{tabular}
  \end{minipage}
  \begin{minipage}[t]{0.48\textwidth}
   \centering
   \tiny
\makeatletter\def\@captype{table}\makeatother\caption{Domain generalization performance on the Spurious-Fourier dataset.}\label{tab:sp_fourier}
 \begin{tabular}{@{}
>{\columncolor[HTML]{FFFFFF}}c 
>{\columncolor[HTML]{FFFFFF}}c 
>{\columncolor[HTML]{FFFFFF}}c @{}}
\toprule
\multicolumn{3}{c}{{\color{brown}\textbf{Spurious-Fourier dataset}}}        \\ \midrule
\textbf{Algorithm }           & \textbf{Train validation} & \textbf{Test validation} \\
ERM~\citep{vapnik1998statistical}                   & 9.7 ± 0.3               & 9.3 ± 0.1              \\
IRM~\citep{arjovsky2020invariant}                  & 9.3 ± 0.1               & 57.6 ± 0.8             \\
SD~\cite{pezeshki2021gradient}                   & 10.2 ± 0.1              & 9.2 ± 0.0            \\
VREx~\cite{krueger2021out}                 & 9.7 ± 0.2              & \textbf{65.3 ± 4.8}             \\
DANN~\citep{ganin2016domain}                 & 9.7 ± 0.1        & 11.1 ± 1.5       \\\Gray
\abbr & \textbf{10.7 ± 0.6}        & 15.6 ± 2.8       \\\Gray
$\uparrow$           & 1.0                    & 4.5                   \\
% $\gamma$           & 0.6                     & 0.7                    \\
\bottomrule
\end{tabular}
   \end{minipage}
   \vspace{-0.1cm}
\end{minipage}

\begin{table*}[t]
  \caption{Comparison with recent state-of-the-art DG-ReID methods. \textbf{——} denotes DANN cannot converge and attains infinite loss. %$\gamma$ for \abbr is set to 0.4.
  }
  \label{tab:reid_sota}
\adjustbox{max width=\textwidth}{%
  \centering
  \renewcommand\arraystretch{1.0}
  \renewcommand\tabcolsep{3.0pt}
  \begin{tabular}{c|cc|cccc|cccc|cccc|cccc}
\toprule
  \multirow{2}{*}{\textbf{Methods}} & \multicolumn{2}{c|}{\textbf{Average}}  & \multicolumn{4}{c|}{{\color{brown}\textbf{VIPeR}}} & \multicolumn{4}{c|}{{\color{brown}\textbf{PRID}}} & \multicolumn{4}{c|}{{\color{brown}\textbf{GRID}}} & \multicolumn{4}{c}{{\color{brown}\textbf{i-LIDS}}} \\
              &R-1 &$m$AP & R-1 & R-5 & R-10 & $m$AP&R-1& R-5 & R-10& $m$AP& R-1& R-5 & R-10& $m$AP & R-1& R-5 & R-10& $m$AP \\ \hline
            %   AugMining&51.8&-&49.8 &70.8&77.0&-&34.3&56.2&65.7&-&{46.6}&{67.5}&{76.1}&-&76.3&{93.0}&{95.3}&- \\
              DIMN~\citep{Song_2019_CVPR}&47.5&57.9&51.2&70.2&76.0&60.1&39.2&67.0&76.7&52.0&29.3&53.3&65.8&41.1&70.2&{89.7}&{94.5}&78.4 \\ 
              DualNorm~\citep{jia2019frustratingly}&57.6&61.8&{53.9}&62.5&75.3&58.0&{60.4}&73.6&84.8&{64.9}&41.4&47.4&64.7&{45.7}&{74.8}&82.0&91.5&{78.5}\\ 
              DDAN~\citep{chen2020dual}&59.0&63.1&52.3&60.6&71.8&56.4&54.5&62.7&74.9&58.9&\textbf{  50.6}&62.1&73.8&55.7&{78.5}&85.3&92.5&81.5\\
             DIR-ReID~\citep{zhang2021learning}&63.8&71.2&58.5&76.9&\textbf{  83.3}&67.0&69.7&85.8&91.0&77.1&48.2&{67.1}&{76.3}&{\textbf{57.6}}&{79.0}&\textbf{  94.8}&{97.2}&{83.4}\\
            %   DIR-ReID$^\dagger$&62.3&70.8&57.2&74.1&80.2&64.9&67.6&87.1&91.6&76.6&47.2&66.1&75.4&57.0&77.3&93.3&97.2&84.5\\ 
              MetaBIN~\citep{choi2021metabin} &64.2&71.9&59.3&76.8&81.9&67.6&70.6&86.5&91.5&78.2&47.3&66.0&74.0&56.4&79.5&93.0&\textbf{  97.5}&85.5\\ 
             Group DRO~\citep{sagawa2020distributionally} &57.1&65.9&48.5&68.4&77.2&57.8&66.1&86.5&90.6&74.8&38.7&58.8&66.6&48.6&74.8&90.8&96.8&81.9\\ 
            %   Group DRO$^\dagger$ &56.7&65.6&48.5&68.9&76.6&58.1&65.4&85.4&89.8&74.1&38.4&58.6&66.1&48.4&74.5&91.0&96.0&81.7\\
              {Unit-DRO~\citep{zhang2022generalizable}}&{ 65.4}&{ 72.8}&{\textbf{ 60.0}}&{\textbf{ 78.2}}&{82.8}&{ \textbf{68.4}}&\textbf{{ 73.5}}&{85.3}&{ 91.7}&\textbf{{79.4}}&47.5&{ \textbf{69.3}}&{ 77.4}&57.2&{ \textbf{80.7}}&{94.0}&97.0&{\textbf{ 86.2}}\\
              DANN~\citep{ganin2016domain} & \multicolumn{2}{c|}{\textbf{——}}  & \multicolumn{4}{c|}{\textbf{——}} & \multicolumn{4}{c|}{\textbf{——}} & \multicolumn{4}{c|}{\textbf{——}} & \multicolumn{4}{c}{\textbf{——}}\\\Gray
              {\abbr}&{ 64.2}&{ 72.1}&{ 59.3}&{ 76.4}&82.7&{ 67.4}&{ 69.6}&\textbf{{87.7}}&{ \textbf{91.7}}&{77.7}&48.1&{ 67.5}&{ \textbf{77.8}}&57.2&{ 79.8}&{94.7}&97.2&{ 86.1}\\
             \bottomrule
  \end{tabular}}
  \vspace{-2mm}
\end{table*}

\begin{table}[t]
\centering
\caption{Generalization performance on multiple unseen target domains. $\uparrow$ denotes improvement of \abbr compared to DANN, and $\gamma$ is the hyper-parameter for environment label smoothing.}
\label{tab:multi-target}
\adjustbox{max width=\textwidth}{%
\setlength{\tabcolsep}{8.25pt}
\begin{tabular}{@{}cccccccc@{}}
\toprule
\multicolumn{8}{c}{{{\color{brown}\textbf{Rotated MNIST}}}} \\
 &\multicolumn{3}{c}{\textbf{Target domains $\{0^\circ,30^\circ,60^\circ\}$}} &\multicolumn{3}{c}{\textbf{Target domains $\{15^\circ,45^\circ,75^\circ\}$}} & \\\midrule
\textbf{Method}   & \textbf{0$^\circ$}    & \textbf{30$^\circ$ }  & \textbf{60$^\circ$}    & \textbf{15$^\circ$}    & \textbf{45$^\circ$}   & \textbf{75$^\circ$ }                  & \textbf{Avg}                  \\
\hline
ERM~\citep{vapnik1998statistical}& 96.0 $\pm$ 0.3&   98.8 $\pm$ 0.4  &  98.7 $\pm$ 0.1  &  98.8 $\pm$ 0.3   &   99.1 $\pm$ 0.1  & 96.7 $\pm$ 0.3 & 98.0  \\
IRM~\citep{arjovsky2020invariant} &  80.9 $\pm$ 3.2   & 94.7 $\pm$ 0.9 & 94.3 $\pm$ 1.3 &  94.3 $\pm$ 0.8   &   95.5 $\pm$ 0.5  & 91.1 $\pm$ 3.1 & 91.8  \\
% DRM\yf{cite} & 97.1 $\pm$ 0.2& 98.8 $\pm$ 0.2 & 98.9 $\pm$ 0.3 & 98.8 $\pm$ 0.1 & 98.8 $\pm$ 0.0 & 98.1 $\pm$ 0.7 & 98.4 \\
DANN~\citep{ganin2016domain} & 96.6 $\pm$ 0.2& 98.8 $\pm$ 0.3 & 98.7 $\pm$ 0.1 & 98.6 $\pm$ 0.4 & 98.8 $\pm$ 0.2 & 96.9 $\pm$ 0.1 & 98.1 \\\Gray
\abbr & 96.7 $\pm$ 0.4& 98.9 $\pm$ 0.2 & 98.8 $\pm$ 0.1 & 98.8 $\pm$ 0.1 & 99.0 $\pm$ 0.2 & 97.0 $\pm$ 0.4 & 98.2\\\Gray
$\uparrow$ & 0.1& 0.1 & 0.1 & 0.2 & 0.2 & 0.1 &  0.1\\
% $1-\gamma$ & 0.2& 0.1 & 0.2 & 0.5 & 0.2 & 0.4 \\
 \bottomrule
\end{tabular}}
\vspace{1mm}
\end{table}

% Please add the following required packages to your document preamble:
% \usepackage{booktabs}
% \usepackage[table,xcdraw]{xcolor}
% If you use beamer only pass "xcolor=table" option, i.e. \documentclass[xcolor=table]{beamer}
\begin{table}[]
\caption{Generalization performance on sequential benchmarks. $\uparrow$ denotes improvement of \abbr compared to DANN.}
\label{tab:seq2}
\adjustbox{max width=\textwidth}{%
\begin{tabular}{@{}
>{\columncolor[HTML]{FFFFFF}}c 
>{\columncolor[HTML]{FFFFFF}}c 
>{\columncolor[HTML]{FFFFFF}}c 
>{\columncolor[HTML]{FFFFFF}}c 
>{\columncolor[HTML]{FFFFFF}}c 
>{\columncolor[HTML]{FFFFFF}}c 
>{\columncolor[HTML]{FFFFFF}}c @{}}
\toprule
\multicolumn{7}{c}{\color{brown}\textbf{HHAR}}                                                               \\ \midrule
\multicolumn{7}{c}{\textit{Train-domain validation}}                                                     \\
\textbf{Algorithm}               & \textbf{Nexus 4}        & \textbf{Galazy S3 }     & \textbf{Galaxy S3 Mini} & \textbf{LG watch}       & \textbf{Sam. Gear}       & \textbf{Average} \\
ID ERM                  & 98.91±0.24     & 98.44±0.15     & 98.68±0.15     & 90.08±0.28     & 80.63±1.33      & 93.35   \\
ERM                     & 97.64±0.15     & 97.64±0.09     & 92.51±0.46     & 71.69±0.14     & 61.94±1.04      & 84.28   \\\hline
IRM                     & 96.02±0.17     & 95.75±0.22     & 89.46±0.50     & 66.49±0.94     & 57.66±0.37      & 81.08   \\
SD                      & 98.14±0.01     & 98.32±0.19     & 92.71±0.09     & 75.12±0.18     & 63.85±0.28      & 85.63   \\
VREx                    & 95.81±0.50     & 95.92±0.23     & 90.72±0.10     & 69.04±0.23     & 56.42±1.57      & 81.58   \\
DANN &  $94.45 \pm 0.44$  &  $95.05 \pm 0.10$  &  $88.70 \pm 0.56$  &  $68.33 \pm 0.49$  &  $58.45 \pm 1.24$  &  80.99 \\
\abbr &  $95.95 \pm 0.39$  &  $95.65 \pm 0.42$  &  $90.50 \pm 0.39$  &  $69.55 \pm 0.36$  &  $58.45 \pm 0.24$  &  82.02 \\
$\uparrow$ & 1.5   &   0.6   &  1.8  & 1.22  &       0.0          &   1.03      \\ \hline
\multicolumn{7}{c}{\textit{Oracle train-domain validation}}                                              \\
\textbf{Algorithm}               & \textbf{Nexus 4}        & \textbf{Galazy S3}      & \textbf{Galaxy S3 Mini} & \textbf{LG watch}       & \textbf{Sam. Gear}       & \textbf{Average} \\
ID ERM                  & 98.91±0.24     & 98.44±0.15     & 98.68±0.15     & 90.08±0.28     & 80.63±1.33      & 93.35   \\
ERM                     & 97.98±0.02     & 97.92±0.05     & 93.09±0.15     & 71.96±0.04     & 64.08±0.66      & 85.01   \\\hline
IRM                     & 96.02±0.17     & 95.75±0.22     & 89.91±0.25     & 68.00±0.34     & 57.77±0.42      & 81.49   \\
SD                      & 98.48±0.01     & 98.67±0.11     & 94.36±0.24     & 75.12±0.18     & 64.86±0.28      & 86.3    \\
VREx                    & 96.65±0.18     & 96.30±0.05     & 90.98±0.16     & 69.39±0.27     & 59.12±0.80      & 82.49   \\
DANN &  $95.95 \pm 0.21$  &  $96.20 \pm 0.07$  &  $89.91 \pm 0.73$  &  $72.70 \pm 0.63$  &  $58.45 \pm 1.77$  &  82.64 \\
\abbr &  $96.79 \pm 0.13$  &  $96.94 \pm 0.13$  &  $91.57 \pm 0.22$  &  $72.70 \pm 0.63$  &  $59.80 \pm 0.84$  &  {83.56} \\
 $\uparrow$ &     0.84           &   0.74   &  1.66   &      0.0          &     1.35            &     0.92    \\\midrule
\end{tabular}}
\end{table}

\subsection{Additional Analysis and Interpretation}\label{sec:exp_analysis}
\textbf{T-sne visualization.} We compare the proposed \abbr with MetaBIN and ERM through $t$-SNE visualization. We observe a distinct division of different domains in~\figurename~\ref{fig:tsne-baseline_a} and~\figurename~\ref{fig:tsne-baseline_b}, which indicates that a domain-specific feature space is learned by the ERM. MetaBIN perform better than ERM and the proposed \abbr can learn more domain-invariant representations while keeping discriminative capability for ReID tasks. 

% \subfigure[]{
% \begin{minipage}[t]{0.23\linewidth}
% \centering
% \includegraphics[width=\linewidth]{imgs/gamma/pacs_vlcs.pdf}
% %\caption{fig1}
% \end{minipage}%
% \label{fig:data_sample}
% }%

\begin{figure}
    \centering
    \includegraphics[width=0.7\linewidth]{}
    \caption{Data examples from the PACS and the VLCS datasets.}
    \label{fig:data_sample}
\end{figure}

\begin{figure}[htbp]
\centering
\subfigure[ERM (All).]{
\begin{minipage}[t]{0.33\linewidth}
\centering
\includegraphics[width=2.0in]{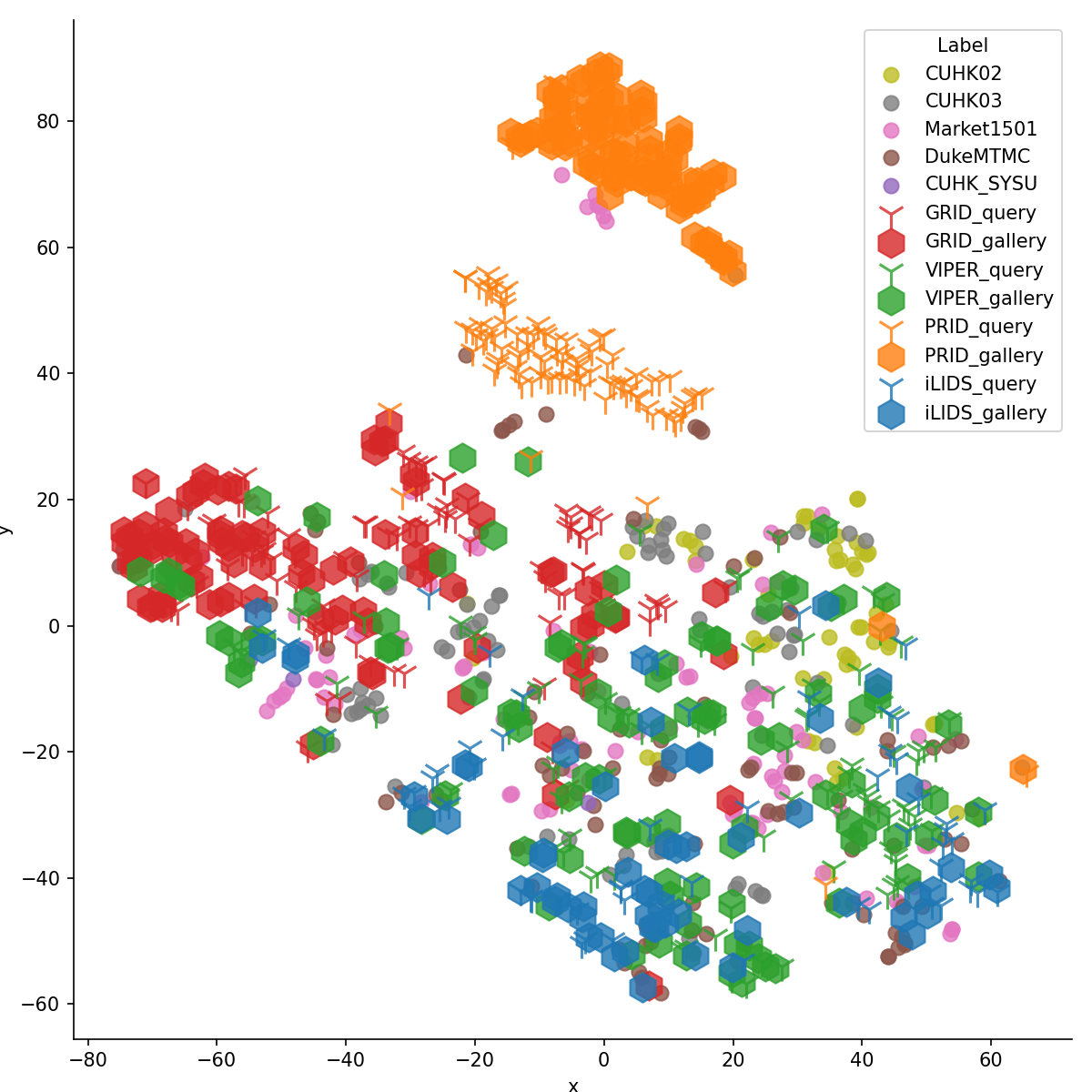}\label{fig:tsne-baseline_a}
%\caption{fig1}
\end{minipage}%
}%
\subfigure[MetaBIN (All).]{
\begin{minipage}[t]{0.33\linewidth}
\centering
\includegraphics[width=2.0in]{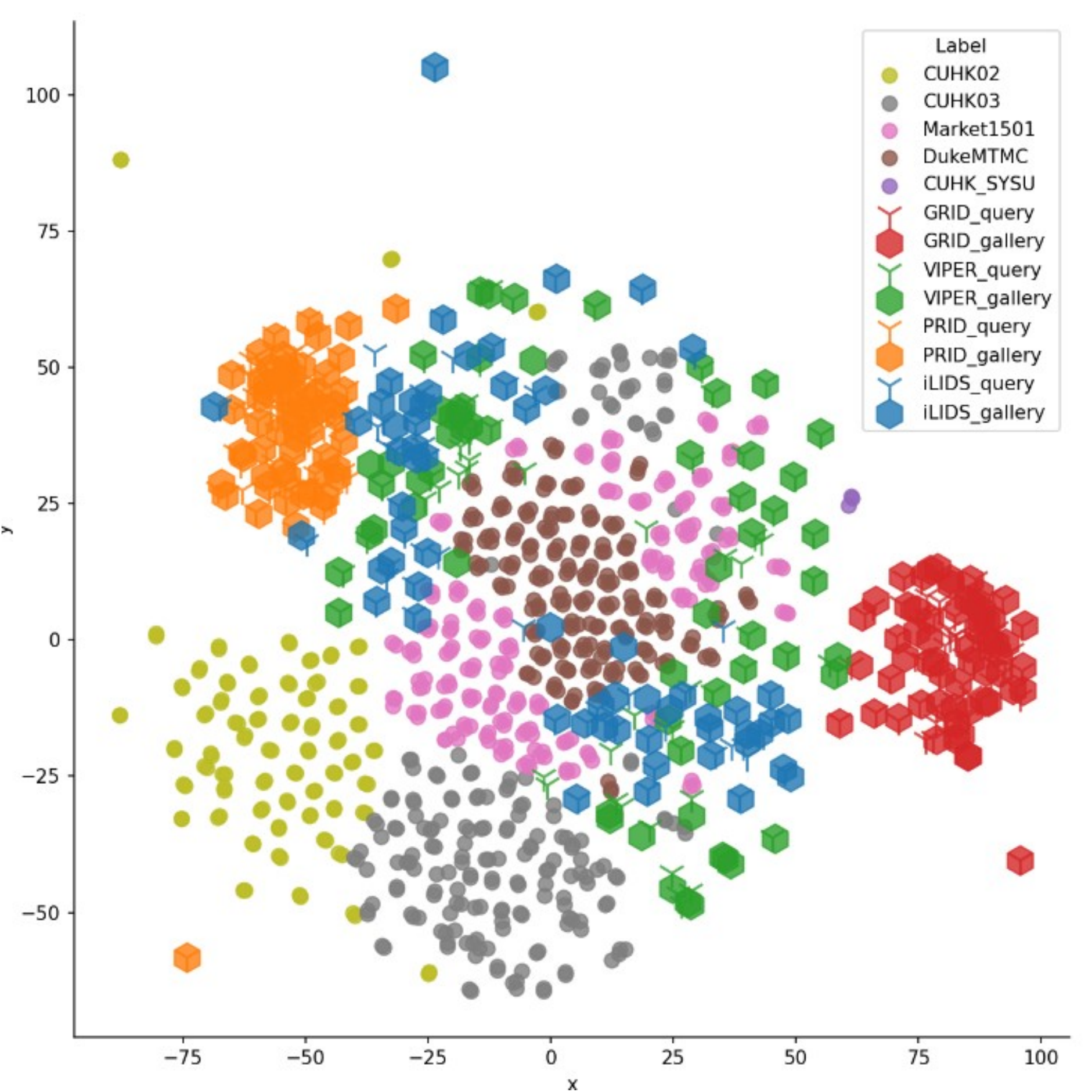}
%\caption{fig2}
\end{minipage}%
}%
\subfigure[\abbr~(All).]{
\begin{minipage}[t]{0.33\linewidth}
\centering
\includegraphics[width=2.0in]{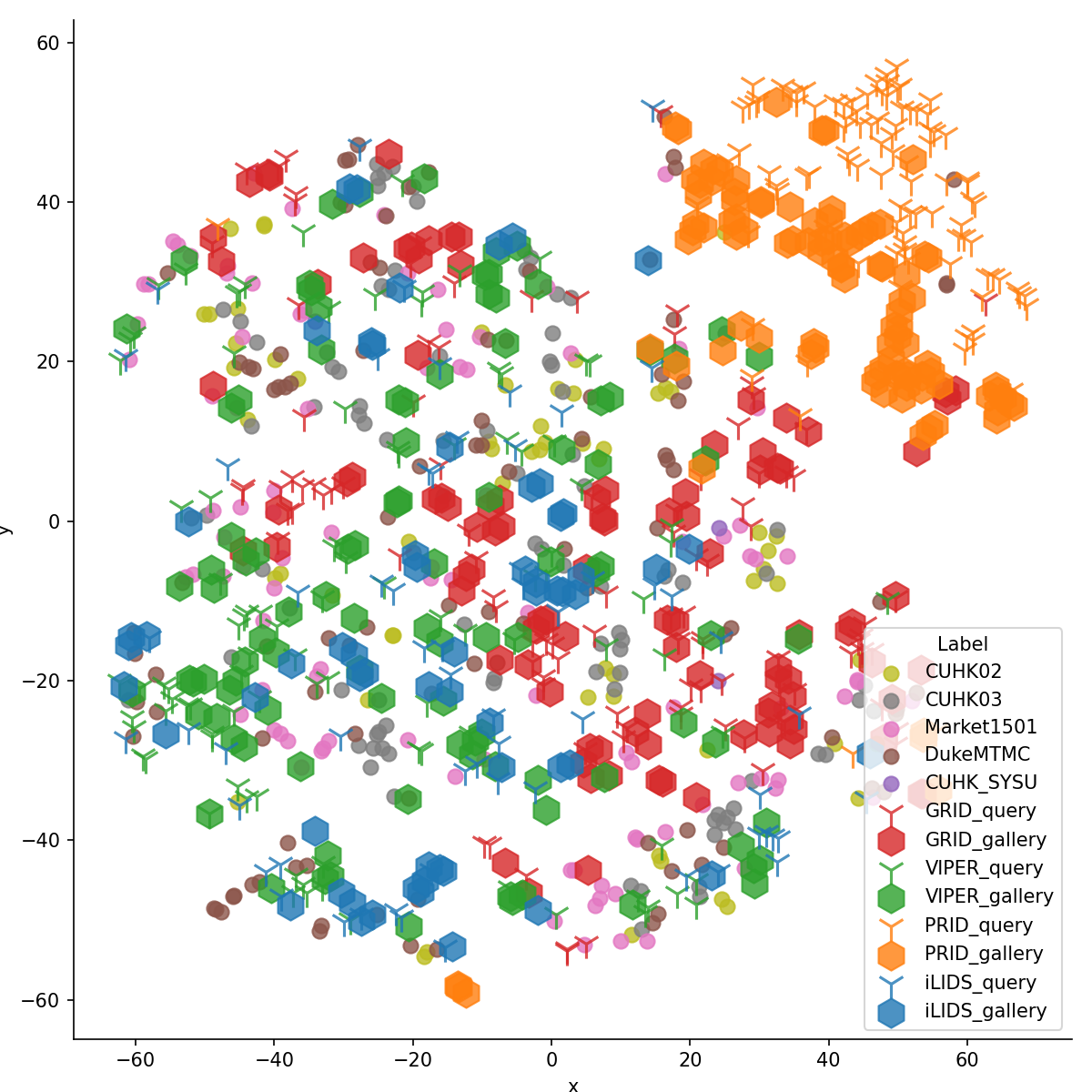}
%\caption{fig2}
\end{minipage}
}\\

\subfigure[ERM (Test)]{
\begin{minipage}[t]{0.33\linewidth}
\centering
\includegraphics[width=2.0in]{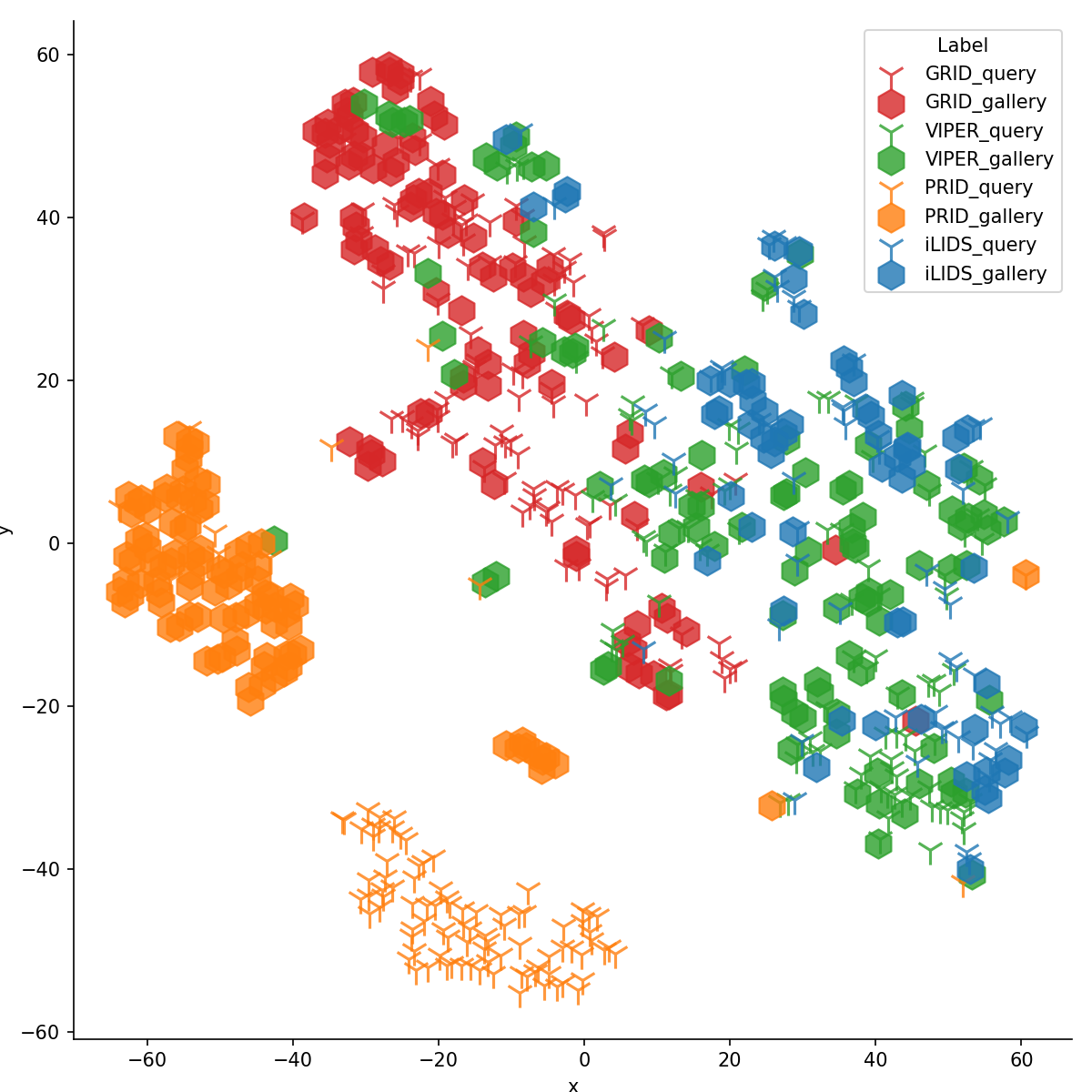}
\end{minipage}\label{fig:tsne-baseline_b}
}%
\subfigure[MetaBIN (Test).]{
\begin{minipage}[t]{0.33\linewidth}
\centering
\includegraphics[width=2.0in]{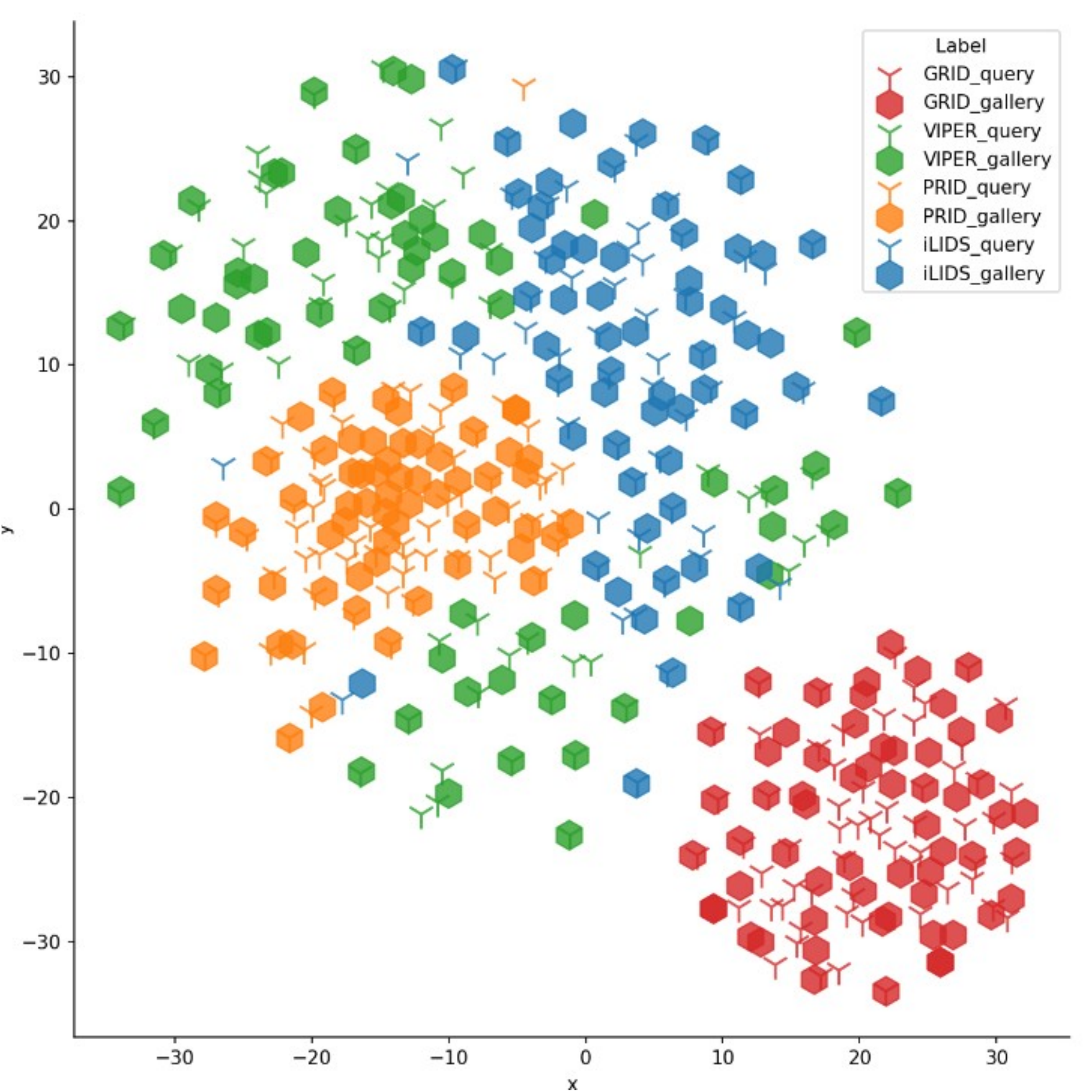}
%\caption{fig2}
\end{minipage}
}%
\subfigure[\abbr~(Test).]{
\begin{minipage}[t]{0.33\linewidth}
\centering
\includegraphics[width=2.0in]{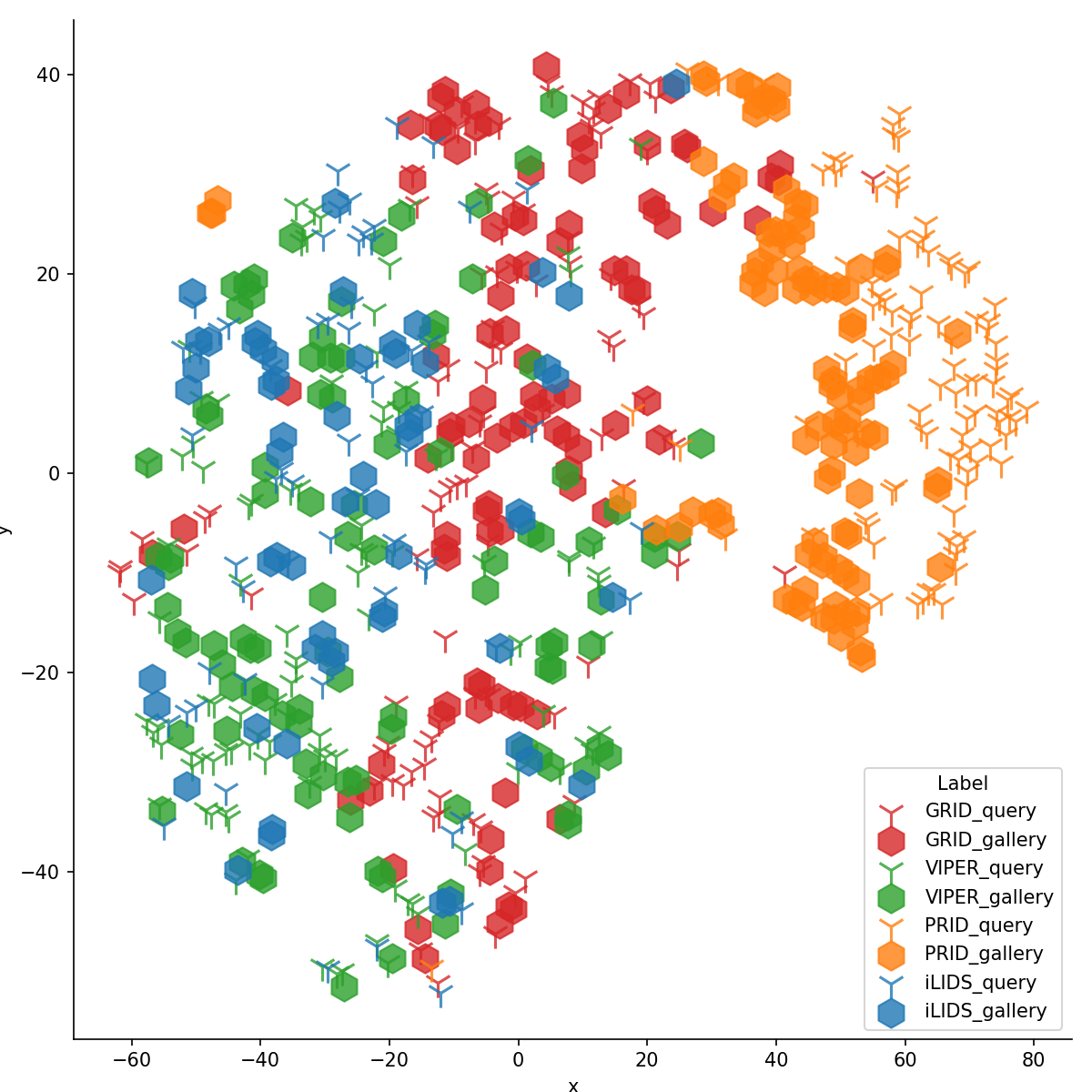}
%\caption{fig2}
\end{minipage}
}%
%\caption{fig2}

\centering
\caption{\textbf{Visualization of the embeddings on training and test datasets.} Query and gallery samples of these unseen datasets are shown using different types of mark. Best viewed in color.} \label{fig:tsne_reid}
\end{figure}

\begin{figure}[htbp]
\centering
\subfigure[Domain discrimination loss.]{
\begin{minipage}[t]{0.33\linewidth}
\centering
\includegraphics[width=2.0in]{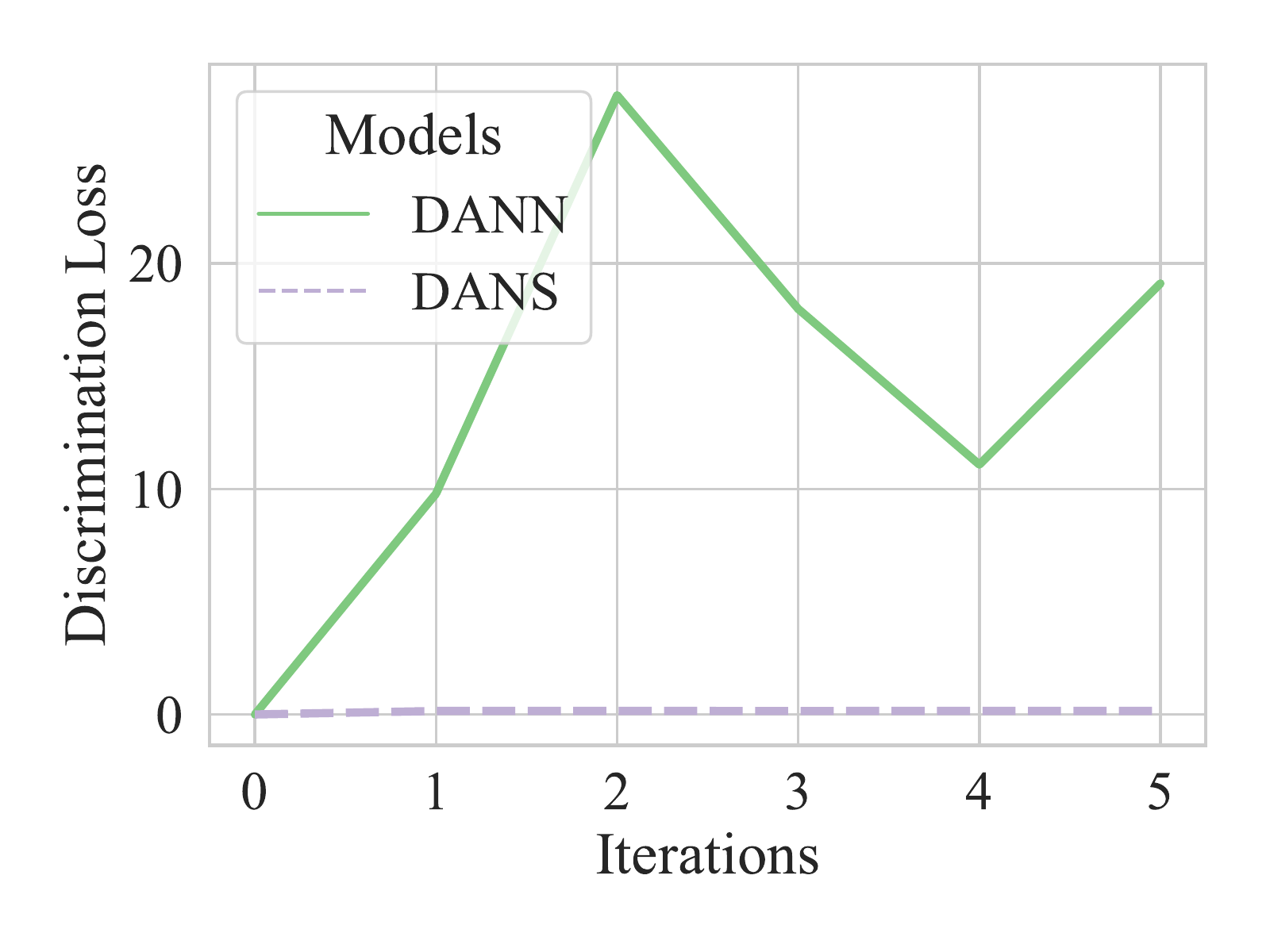}
%\caption{fig1}
\end{minipage}%
}%
\subfigure[Identity classification loss.]{
\begin{minipage}[t]{0.33\linewidth}
\centering
\includegraphics[width=2.0in]{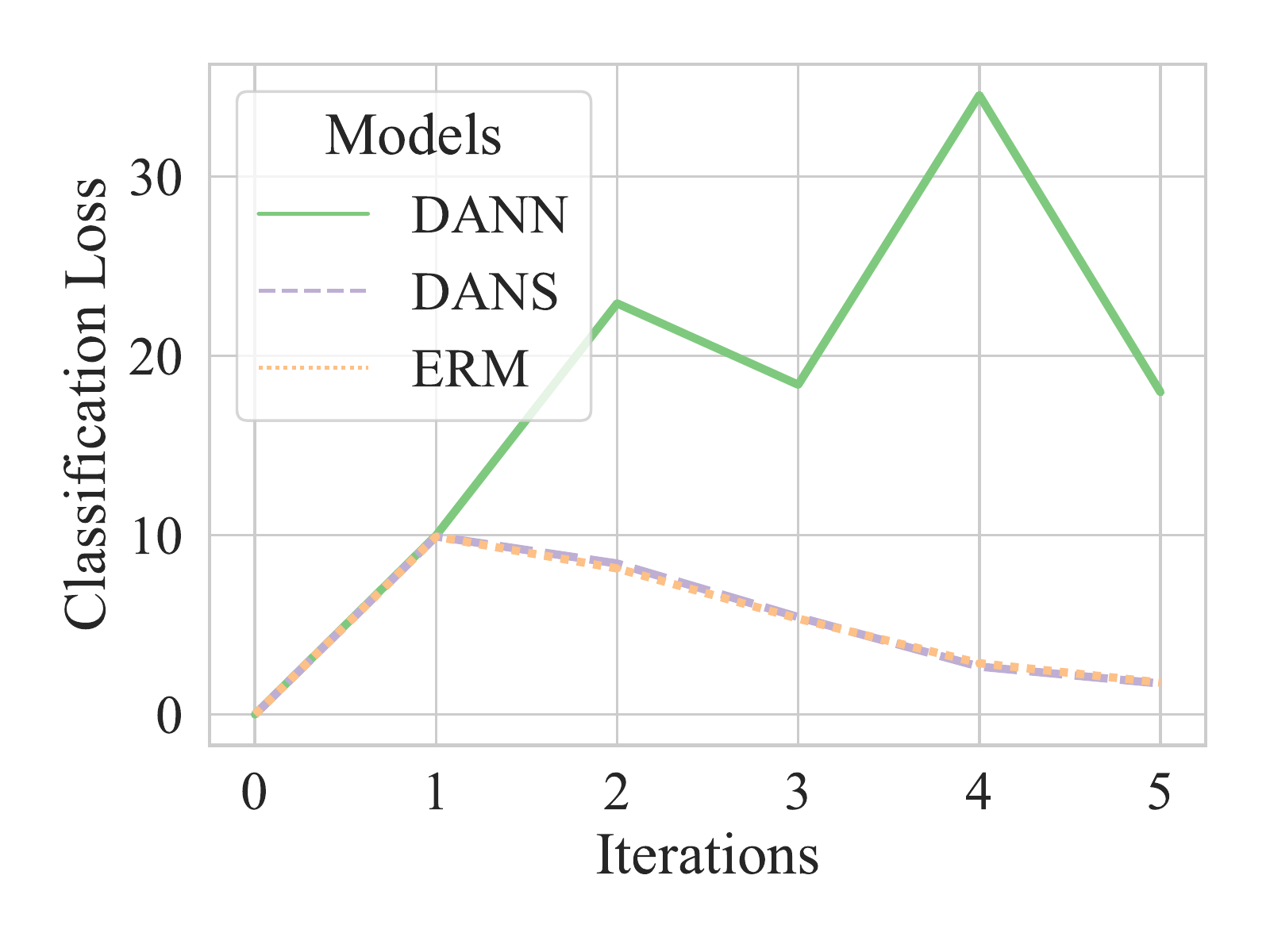}
%\caption{fig2}
\end{minipage}%
}%
\subfigure[Average $m$AP on the test set.]{
\begin{minipage}[t]{0.33\linewidth}
\centering
\includegraphics[width=2.0in]{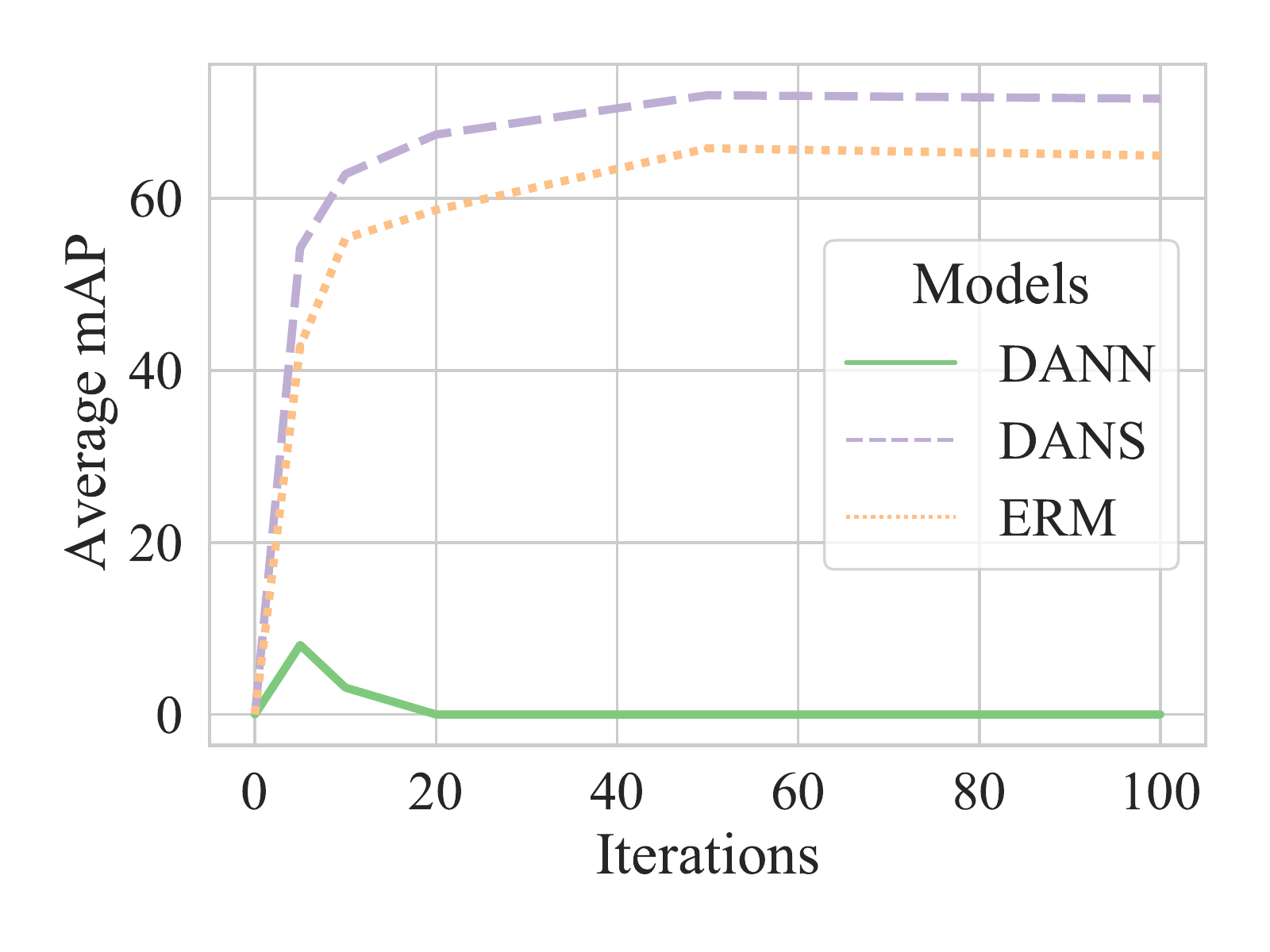}
%\caption{fig2}
\end{minipage}
}
%\caption{fig2}
\centering
\caption{\textbf{Training statistics on ReID datasets}.} \label{fig:dro_training}
\end{figure}

\subsection{Ablation Studies}

\end{document}